\definecolor{dgreen}{rgb}{0.00,0.49,0.00}
\definecolor{dblue}{rgb}{0,0.08,0.75}
\newcommand{\change}[1]{{\color{black}#1}}
\newcommand{\changebis}[1]{{\color{black}#1}}
\newcommand{\R}{\mathbb{R}}
\newcommand{\N}{\mathbb{N}}
\newcommand{\Rd}{\mathbb{R}^{d}}
\newcommand{\cY}{\mathcal{Y}}
\newcommand{\cH}{\mathcal{H}}
\newcommand{\cI}{\mathcal{I}}
\newcommand{\cG}{\mathcal{G}}
\renewcommand{\P}{\mathbb{P}}
\newcommand{\E}{\mathbb{E}}
\renewcommand{\emph}[1]{\textbf{\textit{#1}}}
\newtheorem{defn}{Definition}
\newtheorem{prop}{Proposition}
\newtheorem{theo}{Theorem}
\newtheorem{lma}{Lemma}
\newtheorem{cor}{Corollary}
\newtheorem{rem}{Remark}
\DeclareMathOperator*{\argmin}{arg\,min}
\newcommand{\al}{\alpha}
\newcommand{\be}{\beta}
\newcommand{\la}{\lambda}
\begin{document}

\title{\LARGE\bf Towards Optimal Sobolev Norm Rates for the Vector-Valued Regularized Least-Squares Algorithm\vspace{1em}}

\author{\name Zhu Li\thanks{*Equal Contribution} \email zhu.li@ucl.ac.uk \\
       \addr Gatsby Computational Neuroscience Unit\\
       University College London\\
       London, W1T 4JG, UK
       \AND
       \name  Dimitri Meunier\footnotemark[1] \email dimitri.meunier.21@ucl.ac.uk \\
       \addr Gatsby Computational Neuroscience Unit\\
       University College London\\
       London, W1T 4JG, UK
       \AND
       \name Mattes Mollenhauer \email mattes.mollenhauer@merantix-momentum.com \\
       \addr Merantix Momentum\\
       Merantix AI Campus\\
        Max--Urich--Straße 3, 13355 Berlin, Germany
       \AND
       \name Arthur Gretton \email arthur.gretton@gmail.com \\
       \addr Gatsby Computational Neuroscience Unit\\
       University College London\\
       London, W1T 4JG, UK}

\editor{Daniel Hsu}

\maketitle

\begin{abstract}
We present the first optimal rates for infinite-dimensional vector-valued ridge regression on a continuous scale of norms that interpolate between $L_2$ and the hypothesis space, which we consider as a vector-valued reproducing kernel Hilbert space. These rates allow to treat the misspecified case in which the true regression function is not contained in the hypothesis space. We combine standard assumptions on the capacity of the hypothesis space with a novel tensor product construction of vector-valued interpolation spaces in order to characterize the smoothness of the regression function. Our upper bound not only attains the same rate as real-valued kernel ridge regression, but also removes the assumption that the target regression function is bounded. For the lower bound, we reduce the problem to the scalar setting using a projection argument. We show that these rates are optimal in most cases and independent of the dimension of the output space. We illustrate our results for the special case of vector-valued Sobolev spaces.
\end{abstract}

\begin{keywords}
Statistical learning, regularized least squares, optimal rates, interpolation norms.
\end{keywords}

\section{Introduction}

Optimal learning rates for least-squares regression with scalar outputs have been studied  extensively in the context of reproducing kernel Hilbert spaces (RKHS) over the last two decades. While some analyses considered vector-valued outputs, the optimality of vector-valued kernel-based algorithms remained an open question in important settings which include model misspecification or infinite-dimensional response variables.  We consider a data set $D=\left\{\left(x_i, y_i\right)\right\}_{i=1}^n$ of observations independently sampled from a joint unknown distribution $P$ on $E_X \times \cY$, where $\cY$ is a potentially infinite-dimensional output space
and $E_X$ is the covariate space. Let $(X,Y)$ be a random variable taking values in $E_X \times \cY$ distributed according to $P$. The objective is to estimate the \textit{regression function} or \textit{conditional mean function} $F_*: E_X \rightarrow \cY$ given by $F_*(x):=\mathbb{E}[Y \mid X=x]$.
Our focus in this work is to approximate $F_*$ with kernel-based regularized least-squares algorithms, where we pay special attention to the case when $\cY$ is of high or infinite dimension---a setting including important applications in multitask learning, functional data analysis and inference with kernel mean embeddings. We consider the estimate $\hat F_{\lambda}: E_X \to \cY$, obtained by solving the convex optimization problem
\begin{equation}\label{eq:hatflambda}
    \hat F_{\lambda}=\underset{F \in \cG}{\operatorname{argmin}}\left\{\frac{1}{n} \sum_{i=1}^n\left\|y_i-F(x_i)\right\|^2_{\cY} + \lambda\|F\|_{\cG}^2\right\},
\end{equation}
where a vector-valued reproducing kernel Hilbert space (vRKHS) $\cG$ over $E_X$ serves as the hypothesis space and $\lambda>0$ is a regularization parameter. A vRKHS, as detailed in Section~\ref{sec:bg}, is a generalization of the standard RKHS, allowing us to model functions that take values in a Hilbert space. This algorithm is commonly referred to as (vector-valued) \textit{ridge regression} or
simply the \textit{regularized least squares} (RLS) algorithm---even though it can be understood as a special instance of \textit{Tikhonov regularization} in the more general context of regularization theory.
A central theoretical challenge in this context is to establish learning rates, either in expectation or in probability with respect to the distribution of $D$, for the error
\begin{equation}\label{eq:gen_error}
    \|\hat F_{\la} - F_*\|
\end{equation}
in some relevant norm. In this paper, we investigate the behavior of Eq.~(\ref{eq:gen_error}) with respect to the norms of a continuum of suitable Hilbert spaces $[\cG]^{\gamma}$ with $\cG \subseteq [\cG]^{\gamma} \subseteq L_2$; see Definition~\ref{def:inter_ope_norm} for an exact definition. We focus on the class of vector-valued RKHSs induced by the vector-valued kernel 
\begin{equation}\label{eq:vkernel}
K(x,x') := k_{X}(x,x')\change{T},
\end{equation}
where $\change{T}:\cY \to \cY$ is a \change{bounded}
positive-semidefinite
self-adjoint operator and $k_X: E_X \times E_X \to \R$
is a scalar-valued kernel.
This choice of kernel is the de-facto standard for 
infinite-dimensional learning problems, as it allows 
to practically compute $\hat F_{\lambda}$ defined
by Eq.~(\ref{eq:hatflambda}) conveniently in a variety of practical applications. 

\paragraph{\change{Relevant applications.}} Notable examples
for such an infinite-dimensional learning setting are the estimation of dynamical systems \citep{song2009hilbert,kostic2022learning,kostic2023koopman}, functional response regression \citep{kadri2016functional}, structured prediction \citep{ciliberto2016consistent, ciliberto2020general}, 
the estimation of linear operators \citep{mollenhauer2020nonparametric,mollenhauer2022learning}, the conditional mean embedding
\citep{grunewalder2012conditional,grunewalder2012modelling, park2020measure}, causal effect estimation under observed covariates \citep{10.1093/biomet/asad042} and kernel regression with instrumental  \citep{singh2019kernel} and proximal \citep{mastouri2021proximal} variables.
\change{We emphasize that in all of the above applications,
vector-valued kernels of the form \eqref{eq:vkernel} are used,
the identity operator $T = \operatorname{Id}_{\mathcal{Y}} $ being the most popular choice.
An important reason for this choice of kernel is that, even in the infinite-dimensional case,
it allows a convenient numerical evaluation of $\hat F_{\lambda}$ in terms
of a vector-valued representer theorem (see e.g.\ \citealp{grunewalder2012conditional} and \citealp{kadri2016functional}).
}

\paragraph{\change{Related work.}}
The RLS algorithm has been extensively studied in literature \citep[see e.g.,][and references therein]{caponnetto2007optimal,smale2007learning,steinwart2009optimal,blanchard2018optimal,dicker2017kernel,lin2020optimal,fischer2020sobolev}. However, existing results concerning the optimal rates for RLS often cover the real-valued output space only. One exception is the work by \cite{caponnetto2007optimal}, 
where the output space $\cal Y$ is potentially infinite-dimensional. Their analysis does not generally hold for
the kernel $K$ defined by Eq.~(\ref{eq:vkernel}), however, in the setting that $\cY$ is infinite-dimensional, as it relies 
on a trace condition which is violated when $\change{T}$ is not
trace class---a restriction which rules out the choice $\change{T} = \operatorname{Id}_\cY$, used for example in the analysis of conditional mean embedding \citep{grunewalder2012modelling,lietal2022optimal}. This
issue has been noted by multiple authors in the context of specific applications \citep{grunewalder2012conditional,grunewalder2012modelling,kadri2016functional,mollenhauer2020nonparametric, park2020measure}. In addition, \citet{caponnetto2007optimal} assume that $\cal Y$ is finite dimensional in order to obtain the matching lower bound and only consider the well-specified case. \change{We provide a comparison of our results with existing results in Table~\ref{tab:my_label}. For a detailed discussion, please see Section~\ref{sec:comp}.}

\paragraph{\change{Contributions of this work.}}
This manuscript extends the results from the earlier work of \cite{lietal2022optimal} in multiple ways: 
\begin{itemize}
    \item \change{\emph{Vector-valued interpolation spaces.}} While \cite{lietal2022optimal} consider a  specific instance of infinite-dimensional RLS in terms of the conditional mean embedding, by assuming that the response variable $Y$ takes values in a RKHS, we offer an updated analysis of the RLS algorithm which applies to more general infinite-dimensional spaces $\cal Y$. Our study covers both the hard learning scenario (misspecified setting) when $F_* \notin \cG$ and the easy learning scenario (well-specified setting) when $F_* \in \cG$ (see Theorem~\ref{theo:upper_rate_bis}). In order to cover the misspecified case, we construct novel interpolation spaces of vector-valued functions in terms of an isomorphism which allows to represent functions and linear operators in terms of a tensor product \citep{mollenhauer2020nonparametric, mollenhauer2022learning}. In both cases, when $\cal Y$ is real-valued, we recover the same rate as in \cite{fischer2020sobolev}, the current best known rate for real-valued kernel ridge regression in the literature. We provide a thorough comparison to previous works after stating our results. 
    
    \item \change{\emph{Proof technique.}} 
    \change{Building upon our previous work, our definition of the
    vector-valued interpolation spaces allows to 
    modify the integral operator technique 
    while avoiding the aforementioned trace condition for $T$ required by \citet{caponnetto2007optimal}.
    We bypass the trace condition by making use of tensor product arguments. This allows to reduce
    the infinite-dimensional learning scenario
    to known real-valued arguments in multiple instances
    in our proofs.}
    
    \item \change{\emph{Finite dimensions.}} 
    \change{The results of this work naturally cover dimension-free rates for misspecified multitask learning in finite-dimensional spaces. To the
    best of our knowledge, such a setting has not been
    investigated before in the literature.}
    
    \item \change{\emph{Lower rates.}} 
    \change{With the so-called reduction technique, we obtain lower rates for the the general vector-valued learning setting even when $\cal Y$ is infinite-dimensional (see Theorem~\ref{theo:lower_bound}).
    These rates match our upper rates in many cases.}

    \item \change{\emph{Unbounded regression function.}} 
    The available analysis of scalar-valued kernel ridge regression requires boundedness of the regression function ($\sup_{x \in E_X} |F_*(x)| < \infty$), see \citet{fischer2020sobolev}. \changebis{Recently, \cite{zhang2023optimality} proved that same rates can be obtained by replacing the boundedness condition with the weaker assumption that $|F_*|^q$ is integrable for some $q \geq 2$. Later, \cite{zhang2023optimalityspectral} demonstrated that this integrability assumption is automatically satisfied by scalar valued interpolation spaces (defined in Section~\ref{sec:bg}). This so-called \textit{$L_q$-embedding property} allows to completely remove the assumption that the target function is bounded or that its higher moments are integrable. Generalizing the ideas of \citet{zhang2023optimality, zhang2023optimalityspectral}, we derive learning rates in the vector-valued setting without requiring boundedness or integration of the higher moments of $F_*$ (see Theorem~\ref{theo:upper_rate_bis} and Remark~\ref{rem:drop_bond}). Key to this generalisation is Theorem~\ref{th:Lq_embedding}, which states that the $L_q-$embedding property also holds for vector-valued interpolation spaces.}
    

    \item \change{\emph{Vector-valued Sobolev spaces.}} 
    As a final contribution, we study RLS learning in the setting of vector-valued Sobolev spaces (see Definition~\ref{def:vector_sobolev}) which was not covered in \cite{lietal2022optimal}. We obtain, for the first time, the minimax optimal learning rate for RLS learning in vector-valued Sobolev spaces (see Corollary~\ref{cor:upper_sobolev} and~\ref{cor:lower_sobolev}). 
    \change{This contribution shows
    that our definition of the vector-valued interpolation spaces
    admits a natural interpretation in practical applications, rather than just being ``the appropriate technical tool'' to prove rates for misspecified
    vector-valued learning problems.}
    
\end{itemize}

\begin{table}[ht]
    \centering
\begin{tabularx}{\textwidth}{| X | X | X | X | X | X | X |} \hline
& Kernel& Output space & Misspecified case & smoothness & Algorithm & Norm \\ \hline
\citet{blanchard2018optimal} & scalar $k_X(x,x')$& Real-valued & no & Hölder source condition & general& $\gamma$-norm\\ \hline
\citet{fischer2020sobolev} & scalar $k_X(x,x')$ & Real-valued & yes & interpolation space & ridge regression & $\gamma$-norm\\ \hline
\citet{caponnetto2007optimal} & $K(x,x')$ trace class for all $x,x' \in E_X$ & Vector-valued & no & Hölder source condition & ridge regression  & $L_2$-norm \\ \hline
This work & $k_{X}(x,x')\change{T}$ with $\change{T}$ psd.& Vector-valued & yes & vector-valued interpolation space & ridge regression& $\gamma$-norm \\ \hline
\end{tabularx}
    \caption{An overview of articles providing lower rates and corresponding optimal upper rates for kernel-based least squares algorithms based on the integral operator technique in various scenarios under comparable assumptions on the underlying distributions. Note that the $\gamma$-norm is defined in Eq.~\eqref{eq:gamma_norm}.}
    \label{tab:my_label}
\end{table}

\paragraph{\change{Organisation of this paper.}}
In Section~\ref{sec:bg}, we introduce the concept of a vRKHS and the required mathematical preliminaries. Section~\ref{sec:interpolation} discusses the formal construction of vector-valued interpolation spaces, which will be central to our analysis.
We provide upper rates for the vector-valued learning
problem in Section~\ref{sec:upper_rates}, while
a corresponding lower bound on the rates is presented
in Section~\ref{sec:lower}. Section~\ref{sec:sobolev}
sets our results in line with known rates for the scalar
learning setting in the context of Sobolev spaces,
and Section~\ref{sec:comp} compares our result with other contributions. In order to improve the readability,
we defer proofs and technical auxiliary results to the appendices. 

\change{Readers familiar with
the commonly used integral operator framework for kernel ridge regression 
(e.g.\ \citealp{caponnetto2007optimal})
who are mostly interested in the structure of our upper and lower rates may directly refer to
Section~\ref{sec:upper_rates} and Section~\ref{sec:lower}. The precise mathematical definition of the vector-valued interpolation space 
in Section~\ref{sec:interpolation} may be consulted afterwards---technically, it is used as a direct replacement
for other source conditions found in the literature. The aforementioned three sections contain the fundamental additions to the known framework, while the technical 
setup and mathematical background from Section~\ref{sec:bg} are standard.
For convenience, we provide a summary of our most important notation in Table~\ref{table:notations} 
in Section~\ref{sec:interpolation}}.

\section{Mathematical Preliminaries}\label{sec:bg}

Throughout the paper, we consider a random variable $X$ (the covariate) defined on a second countable locally compact Hausdorff space\footnote{Under additional technical assumptions, the results in this paper can also be formulated when $E_X$ is a more general topological space (for example when $E_X$ is Polish). However, some properties of kernels defined on $E_X$ such as the so-called \textit{$c_0$-universality} \citep{carmeli2010vector} simplify the exposition when $E_X$ is a second countable locally compact Hausdorff space; see Remark~\ref{rem:universality}.} $E_X$ endowed with its Borel $\sigma$-field $\mathcal{F}_{E_X}$, and the random variable $Y$ (the output) defined on a potentially infinite dimensional separable real Hilbert space $(\mathcal{Y}, \langle \cdot, \cdot \rangle_{\mathcal{Y}})$ endowed with its Borel $\sigma$-field $\mathcal{F}_{\mathcal{Y}}$. We let $(\Omega, \mathcal{F},\mathbb{P})$ be the underlying probability space with expectation operator $\mathbb{E}$. Let $P$ be the pushforward of $\mathbb{P}$ under $(X,Y)$ and $\pi$ and $\nu$ be the marginal distributions on $E_X$ and $\cY$, respectively; i.e., $X \sim \pi$ and $Y \sim \nu$. We use the Markov kernel $p: E_X \times \mathcal{F}_{\mathcal{Y}} \rightarrow \mathbb{R}_+$ to express
the distribution of $Y$ conditioned on $X$ as 
\[
\mathbb{P}[Y \in A|X = x] = \int_{A} p(x,dy),
\] for all $x \in E_X$ and events $A \in \mathcal{F}_{\mathcal{Y}}$,
see e.g.\ \citet{dudley2002}.


We now introduce some notation related
to linear operators and vector-valued integration. For more details, the reader may consult
\citet{weidmann80linear} and \citet{diestel77vector}, respectively.
We denote the space of real-valued Lebesgue square integrable functions on $(E_X,\mathcal{F}_{E_X})$ with respect to $\pi$ as $L_2(E_X,\mathcal{F}_{E_X},\pi)$ abbreviated $L_2(\pi)$ and similarly for $\nu$ we use $L_2(\mathcal{Y},\mathcal{F}_{\mathcal{Y}},\nu)$ abbreviated $L_2(\nu)$. Let $H$ be a separable real Hilbert space with inner product $\langle \cdot, \cdot \rangle_{H}$. We write $\mathcal{L}(\change{H,H'})$ as the Banach space of bounded linear operators from $\change{H}$ to another Hilbert space $\change{H'}$, equipped with the operator norm $\|\cdot\|_{\change{H \rightarrow H'}}$. When $\change{H=H'}$, we simply write $\mathcal{L}(\change{H})$ instead. We also let $L_p(E_X,\mathcal{F}_{E_X},\pi; \change{H})$, abbreviated $L_p(\pi; \change{H})$, the space of strongly $\mathcal{F}_{E_X}-\mathcal{F}_{\change{H}}$ measurable and Bochner $p$-integrable functions from $E_X$ to $\change{H}$ for $1 \leq p \leq \infty$ with the norms
\begin{equation}
\| f \|_{L_p(\pi;\change{H})}^p 
= \int_{E_X} \|f  \|_{\change{H}}^p \, \mathrm{d}\pi, \quad 1\leq p<\infty, \qquad \| f \|_{L_{\infty}(\pi;\change{H})} = \inf\left\{C \geq 0: \pi\{\|f\|_{\change{H}}> C\}=0\right\}.
\end{equation}
We denote the $p$-Schatten class $S_p(H,H')$ to be the space of all compact operators $C$ from $H$ to $H'$ such that $\|C\|_{S_p(H,H')} := \left\|\left(\sigma_i(C)\right)_{i\in J}\right\|_{\ell_p}$ is finite. Here $\|\left(\sigma_i(C)\right)_{i\in J}\|_{\ell_p}$ is the $\ell_p$ sequence space norm of the sequence of the strictly positive singular values of $C$ indexed by the countable set $J$. For $p = 2$, $S_2(H,H')$ is the Hilbert space of Hilbert-Schmidt operators from $H$ to $H'$. Finally, for two \change{Hilbert} spaces \change{$H,H'$}, we say that \change{$H$} is (continuously) embedded in \change{$H'$} and denote it as 
$\change{H} \hookrightarrow \change{H'}$ if $\change{H}$ can be interpreted as a vector subspace of \change{$H'$} and the inclusion operator $i: \change{H} \to \change{H'}$ 
performing the change of norms with $i x = x$ for $x \in \change{H}$ is continuous; and we say that \change{$H$} is isometrically isomorphic to \change{$H'$} and denote it as $\change{H} \simeq \change{H'}$ if there is a linear isomorphism $\Psi:\change{H} \to \change{H'}$ which is an isometry. We write $H \cong H'$, if the sets coincide with equivalent norms.

{\textbf{Tensor Product of Hilbert Spaces} (\citealp{aubin2000applied}, Section 12)\textbf{:}} 
Denote $H\otimes H'$  the tensor product of Hilbert spaces $H$, $H'$. The Hilbert space $H\otimes H'$ is the completion of the algebraic tensor product with respect to the norm induced by the inner product $\langle x_1\otimes x_1', x_2\otimes x_2'\rangle_{H\otimes H'} = \langle x_1,x_2 \rangle_H \langle x_1', x_2'\rangle_{H'}$ for $x_1,x_2 \in H$ and $x_1', x_2' \in H'$ defined on the elementary tensors
of $H\otimes H'$. This definition extends to
$\operatorname{span}\{x\otimes x'| x\in H, x'\in H'\}$ and finally to
its completion. The space $H\otimes H'$ is separable whenever both $H$ and $H'$ are separable. The element $x\otimes x' \in H \otimes H'$ is treated as the linear rank-one operator $x\otimes x': H' \rightarrow H$ defined by $y' \rightarrow \langle y',x' \rangle_{H'}x $ for $y' \in H'$. 
Based on this identification, the tensor product space $H\otimes H'$ is isometrically isomorphic to the space of Hilbert-Schmidt operators from $H'$ to $H$, i.e., $H\otimes H' \simeq S_2(H',H)$. We will hereafter not make the distinction between those two spaces and see them as identical. If $\{e_i\}_{i \in I}$ and $\{e'_j\}_{j \in J}$ are orthonormal basis in $H$ and $H'$, $\{e_i\otimes e'_j\}_{i\in I, j\in J}$ is an orthonormal basis in $H \otimes H'$. 

\begin{rem}[\citealp{aubin2000applied}, Theorem 12.6.1]\label{rem:tensor_product}
Consider the Bochner space $L_2(\pi;H)$ where $H$ is a separable Hilbert space. One can show that $L_2(\pi;H)$ is isometrically identified with the tensor product space $H \otimes L_2(\pi)$.
\end{rem}

{\bf Reproducing Kernel Hilbert Spaces, Covariance Operators:} 
We let $k_{X}: E_X \times E_X \rightarrow \mathbb{R}$ be a symmetric and positive definite kernel function and $\mathcal{H}_{X}$ be a vector space of functions from $E_X$ to $\mathbb{R}$, endowed with a Hilbert space structure via an inner product $\langle\cdot, \cdot\rangle_{\mathcal{H}_{X}}$. We say $k_{X}$ is a reproducing kernel of $\mathcal{H}_{X}$ if and only if for all $ x \in E_X$ we have $k_{X}(\cdot,x) \in \mathcal{H}_{X}$ and for all $x \in E_X$ and $f \in \mathcal{H}_{X}$, we have $f(x)=\left\langle f, k_{X}(x,\cdot)\right\rangle_{\mathcal{H}_{X}}$. A space $\mathcal{H}_{X}$ which possesses a reproducing kernel is called a reproducing kernel Hilbert space (RKHS; \citealp{berlinet2011reproducing}). We denote the canonical feature map of $\mathcal{H}_{X}$ as $\phi_{X}(x) = k_{X}(\cdot,x)$. 

We require some technical assumptions on the previously defined RKHS and kernel:
\begin{enumerate}[itemsep=0em]
    \item \label{assump:separable}
    $\mathcal{H}_{X}$ is separable, this is satisfied
    if $k_{X}$ is continuous, given that $E_X$ is separable\footnote{This follows from \citet[Lemma 4.33]{steinwart2008support}. Note that the lemma requires separability of $E_X$, which is satisfied since we assume that $E_X$ is a second countable locally compact Hausdorff space.}; 
    
    \item \label{assump:measurable}
    $k_{X}(\cdot,x)$ is measurable for $\pi$-almost all $x \in E_X$;
    
    \item \label{assump:bounded}
    $k_X(x,x) \leq \kappa_X^2$ for $\pi$-almost all $x \in E_X$.
\end{enumerate}

Note that the above assumptions are not restrictive in practice, as well-known kernels such as the Gaussian, Laplacian and Mat{\'e}rn kernels satisfy all of the above assumptions on $E_X \subseteq \mathbb{R}^d$ \citep{sriperumbudur2011universality}. We now introduce some facts about the interplay between $\mathcal{H}_{X}$ and $L_{2}(\pi),$ which has been extensively studied by \cite{smale2004shannon,smale2005shannon}, \cite{de2006discretization} and \cite{steinwart2012mercer}. We first define the (not necessarily injective) embedding $I_{\pi}: \mathcal{H}_{X} \rightarrow L_{2}(\pi)$, mapping a function $f \in \mathcal{H}_{X}$ to its $\pi$-equivalence class $[f]$. The embedding is a well-defined compact operator as long as its Hilbert-Schmidt norm is finite. In fact,
this requirement is satisfied since its Hilbert-Schmidt norm can be computed as \citep[][Lemma 2.2 \& 2.3]{steinwart2012mercer}
$$
\left\|I_{\pi}\right\|_{S_{2}\left(\mathcal{H}_{X}, L_{2}(\pi)\right)}=\|k_{X}\|_{L_{2}(\pi)}:=\left(\int_{E_X} k_{X}(x, x) \mathrm{d} \pi(x)\right)^{1 / 2}<\infty.
$$
The adjoint operator $S_{\pi}:=I_{\pi}^{*}: L_{2}(\pi) \rightarrow \mathcal{H}_{X}$ is an integral operator with respect to the kernel $k_{X}$, i.e. for $f \in L_{2}(\pi)$ and $x \in E_X$ we have \citep[][Theorem 4.27]{steinwart2008support}
$$
\left(S_{\pi} f\right)(x)=\int_{E_X} k_{X}\left(x, x^{\prime}\right) f\left(x^{\prime}\right) \mathrm{d} \pi\left(x^{\prime}\right).
$$
Next, we define the self-adjoint and positive semi-definite integral operators
$$
L_{X}:=I_{\pi} S_{\pi}: L_{2}(\pi) \rightarrow L_{2}(\pi) \quad \text { and } \quad C_{XX}:=S_{\pi} I_{\pi}: \mathcal{H}_{X} \rightarrow \mathcal{H}_{X}.
$$
These operators are trace class and their trace norms satisfy
$$\left\|L_{X}\right\|_{S_{1}\left(L_{2}(\pi)\right)}=\left\| C_{XX}\right\|_{S_{1}(\mathcal{H}_{X})}=\left\|I_{\pi}\right\|_{S_2(\mathcal{H}_{X}, L_{2}(\pi))}^{2}=\left\|S_{\pi}\right\|_{S_2(L_{2}(\pi),\mathcal{H}_{X})}^{2}. $$


{\bf Vector-valued RKHS:} We give a brief overview of the vector-valued reproducing kernel Hilbert space. We refer the reader to \cite{carmeli2006vector} and \cite{carmeli2010vector} for more details.

\begin{defn} Let $K: E_X \times E_X \rightarrow \mathcal{L}(\cY)$ be an operator valued positive-semidefinite (psd) kernel such that $K(x,x') = K(x',x)^*$ for all $x,x' \in E_X$, and for all $x_1,\dots,x_n\in E_X$ and $h_i,h_j \in \cY$,
\[\sum_{i,j =1}^n \langle h_i,K(x_i,x_j)h_j \rangle_{\cY} \geq 0.\]
\end{defn}

Fix $K$, $x \in E_X$, and $h \in \cY$, then $\left[K_{x} h\right](\cdot):=K(\cdot, x) h$
defines a function from $E_X$ to $\cY$. We now consider
$$
\mathcal{G}_{\text {pre }}:=\operatorname{span}\left\{K_{x} h \mid x \in E_X, h \in \cY\right\}
$$
with inner product on $\mathcal{G}_{\text {pre }}$ by linearly extending the expression
\begin{IEEEeqnarray}{rCl}
\left\langle K_{x} h, K_{x^{\prime}} h^{\prime}\right\rangle_{\mathcal{G}}:=\left\langle h, K\left(x, x^{\prime}\right) h^{\prime}\right\rangle_{\cY} . \label{eqn:vrkhs_inp}
\end{IEEEeqnarray}

Let $\mathcal{G}$ be the completion of $\mathcal{G}_{\text{pre}}$ with respect to this inner product. We call $\mathcal{G}$ the vRKHS induced by the kernel $K$. The space $\mathcal{G}$ is a Hilbert space consisting of functions from $E_X$ to $\cY$ with the reproducing property
\begin{IEEEeqnarray}{rCl}
\langle F(x), h\rangle_{\cY}=\left\langle F, K_{x} h\right\rangle_{\mathcal{G}}, \label{eqn:vrkhs_repro}
\end{IEEEeqnarray}
for all $F \in \mathcal{G}, h \in \cY$ and $x \in E_X$. For all $F \in \mathcal{G}$ we obtain
$$
\|F(x)\|_{\cY} \leq\|K(x, x)\|^{1 / 2}_{\cY \rightarrow \cY}\|F\|_{\mathcal{G}}, \quad x \in E_X.
$$
The inner product given by Eq.~(\ref{eqn:vrkhs_inp}) implies that $K_{x}$ is a bounded operator for all $x \in E_X$. For all $F \in \mathcal{G}$ and $x \in E_X$, Eq.~(\ref{eqn:vrkhs_repro}) can be written as $F(x)=K_{x}^{*} F$. The linear operators $K_{x}: \cY \rightarrow \mathcal{G}$ and $K_{x}^{*}: \mathcal{G} \rightarrow \cY$ are bounded with
$$
\left\|K_{x}\right\|_{\cY \rightarrow \mathcal{G}}=\left\|K_{x}^{*}\right\|_{\mathcal{G} \rightarrow \cY}=\|K(x, x)\|_{\cY \rightarrow \cY}^{1 / 2}
$$
and we have $K_{x}^{*} K_{x^{\prime}}=K\left(x, x^{\prime}\right), x, x^{\prime} \in E_X$. In the following, we will denote $\mathcal{G}$ as the vRKHS induced by the kernel $K: E_X \times E_X \rightarrow \mathcal{L}(\mathcal{Y})$ with 
\[K(x,x') := k_{X}(x,x')\operatorname{Id}_{\mathcal{Y}}, \quad x,x' \in E_X.\]

\begin{rem}[\change{General multiplicative kernel}]
\label{rem:general_kernel}
    Without loss of generality, we provide our results for the vRKHS
    $\mathcal{G}$ induced by the operator-valued kernel given by
    $K(x,x') = k_{X}(x,x')\operatorname{Id}_{\mathcal{Y}}$.
    However, with suitably adjusted constants in the assumptions, 
    our results transfer directly to the more general
    vRKHS $\widetilde{\mathcal{G}}$ induced by the more general operator-valued kernel 
    \[
    \widetilde K(x,x') := k_{X}(x,x') \change{T}
    \]  
    where $\change{T}: \mathcal{Y} \to \mathcal{Y}$
    is any \change{bounded} positive-semidefinite self-adjoint operator.
    In fact, this setting is covered by straightforwardly
    replacing the response variable $Y$ with the modified response
    $\widetilde{Y} := \change{T}^{1/2}Y$ in our learning problem. 
    Equivalently, the space $\widetilde{\mathcal{G}}$ is 
    obtained as the vRKHS induced by the kernel $K(x,x') = k_{X}
    (x,x')\operatorname{Id}_{\mathcal{Y}}$ \change{by introducing 
    $\langle y, y' \rangle_{\change{\tilde{\mathcal{Y}}}} 
    := \langle y , \change{T} y' \rangle_\mathcal{Y}$ 
    for all $y,y' \in \mathcal{Y}$, which defines an 
    inner product on the quotient space%
    \footnote{\change{When $T$ is not stricly
    positive definite, then elements in the nullspace
    $\ker(T)$ are simply interpreted as the element $0$ in $\tilde{\mathcal{Y}}$,
    ensuring that $\langle \cdot, \cdot \rangle_{\tilde{\mathcal{Y}}}$
    is a well-defined inner product.}} 
    $\tilde{\mathcal{Y}} := \mathcal{Y} / \ker(T)$.}
    This can readily be seen by the construction of $\widetilde{\mathcal{G}}$.
    By \eqref{eqn:vrkhs_inp}, we have
    \begin{equation}
        \label{eq:general_kernel}
    \left\langle 
        \widetilde{K}_{x} y, \widetilde{K}_{x^{\prime}} y^{\prime}
    \right\rangle_{\widetilde{\mathcal{G}}}
    =
    \left\langle 
        y, k_X\left(x, x^{\prime}\right) \change{T} y^{\prime}
    \right\rangle_{\mathcal{Y}}
        =
    \left\langle 
        y, k_X\left(x, x^{\prime}\right)\operatorname{Id}_{\mathcal{Y}} y^{\prime}
    \right\rangle_{\change{\tilde{\mathcal{Y}}}} 
    \end{equation}
    for all $x,x' \in E_X$ and $y, y' \in \mathcal{Y}$. 
    \change{In fact, 
    we have $\tilde{\mathcal{G}} \simeq \mathcal{H} \otimes \tilde{\mathcal{Y}}$,
    see \citet[Example 3.2]{carmeli2010vector}. In Section~\ref{sec:general_kernel}, we give
    the adjusted constants appearing in our 
    learning rates when $K$ is replaced with $\tilde{K}$.}
\end{rem}

An important property of $\mathcal{G}$ is that elements in $\mathcal{G}$ are isometrically
isomorphic to the space of Hilbert-Schmidt operators between $\mathcal{H}_{X}$ and $\mathcal{Y}$.

\begin{theo}[Example 5 (i) in \citealp{carmeli2010vector}]\label{theo:isometric} For $g \in \mathcal{Y}$ and $f \in \mathcal{H}_{X}$, define the map $\bar{\Psi}$ on the elementary tensors as 
\[\left[\bar{\Psi}\left(g \otimes f\right)\right](x):= f(x)g = \left(g \otimes f\right)\phi_{X}(x).\] 
Then $\bar{\Psi}$ defines an isometric isomorphism between $S_2(\mathcal{H}_{X}, \mathcal{Y})$ and $\mathcal{G}$ through linearity and completion.
\end{theo}

More details regarding Theorem~\ref{theo:isometric} (for the special case where $\mathcal{Y}$ is a RKHS) can be found in \citet[Theorem 4.4]{mollenhauer2020nonparametric}. The isometric isomorphism $\bar{\Psi}$ induces the operator reproducing property stated below.
\begin{cor}\label{theo:operep}
For every function $F\in \mathcal{G}$ there exists an operator $C := \bar{\Psi}^{-1}(F) \in S_2(\mathcal{H}_{X}, \mathcal{Y})$ such that \[F(x) = C\phi_{X}(x) \in \mathcal{Y},\] for all $x \in E_X$ with $\|C\|_{S_2(\mathcal{H}_{X}, \mathcal{Y})} = \|F\|_{\mathcal{G}}$ and vice versa. Conversely, for any pair $F \in \mathcal{G}$ and $C \in S_2(\mathcal{H}_{X}, \mathcal{Y})$, we have $C = \bar{\Psi}^{-1}(F)$ as long as $F(x) = C \phi_{X}(x)$ for all $x \in E_X$.
\end{cor}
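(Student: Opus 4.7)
The plan is to leverage Theorem~\ref{theo:isometric} directly: once we know $\bar{\Psi}: S_2(\mathcal{H}_X,\mathcal{Y}) \to \mathcal{G}$ is an isometric isomorphism, the corollary is essentially a rewriting of what $\bar{\Psi}$ does, extended by continuity from elementary tensors to all of $S_2(\mathcal{H}_X,\mathcal{Y})$.

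For the existence part, given $F \in \mathcal{G}$, define $C := \bar{\Psi}^{-1}(F) \in S_2(\mathcal{H}_X,\mathcal{Y})$. The isometry statement $\|C\|_{S_2(\mathcal{H}_X,\mathcal{Y})} = \|F\|_{\mathcal{G}}$ is immediate from $\bar{\Psi}$ being an isometric isomorphism. The pointwise identity $F(x) = C\phi_X(x)$ holds for every $x$ on elementary tensors $C = g \otimes f$ by the definition of $\bar{\Psi}$, and extends by linearity to finite linear combinations. To extend to general $C \in S_2(\mathcal{H}_X,\mathcal{Y})$, I would pick a sequence $C_n$ in the algebraic tensor product converging to $C$ in the Hilbert-Schmidt norm, set $F_n := \bar{\Psi}(C_n)$, and use two continuity facts simultaneously: on the one hand, by the pointwise bound $\|F_n(x) - F(x)\|_{\mathcal{Y}} \leq \|K(x,x)\|^{1/2}_{\mathcal{Y}\to\mathcal{Y}}\|F_n - F\|_{\mathcal{G}}$ recorded earlier in the section, we get $F_n(x) \to F(x)$; on the other hand, since $\|C_n - C\|_{\mathcal{H}_X \to \mathcal{Y}} \leq \|C_n - C\|_{S_2(\mathcal{H}_X,\mathcal{Y})} \to 0$, we also have $C_n \phi_X(x) \to C\phi_X(x)$ in $\mathcal{Y}$. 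Equating the two limits gives $F(x) = C\phi_X(x)$ for all $x \in E_X$.

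For the converse (uniqueness), suppose $F \in \mathcal{G}$ and $C \in S_2(\mathcal{H}_X,\mathcal{Y})$ satisfy $F(x) = C\phi_X(x)$ for all $x \in E_X$. Setting $\widetilde{C} := \bar{\Psi}^{-1}(F)$, the previous paragraph yields $\widetilde{C}\phi_X(x) = F(x) = C\phi_X(x)$, so $(C - \widetilde{C})\phi_X(x) = 0$ for every $x$. Since the set $\{\phi_X(x) : x \in E_X\}$ has dense linear span in $\mathcal{H}_X$ (a standard property of RKHS feature maps, following from the reproducing property: any $h \in \mathcal{H}_X$ orthogonal to every $\phi_X(x)$ satisfies $h(x) = \langle h, \phi_X(x)\rangle_{\mathcal{H}_X} = 0$ for all $x$, hence $h = 0$), the bounded operator $C - \widetilde{C}$ vanishes on a dense subset and is therefore zero, giving $C = \widetilde{C} = \bar{\Psi}^{-1}(F)$.

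The argument is essentially bookkeeping on top of Theorem~\ref{theo:isometric}; there is no substantial obstacle. The only point that requires a modicum of care is the joint continuity step where the pointwise identity is transported from the algebraic tensor product to its completion, which I would handle by exploiting simultaneously the boundedness of evaluation in $\mathcal{G}$ and the $S_2 \hookrightarrow \mathcal{L}(\mathcal{H}_X,\mathcal{Y})$ continuous embedding.
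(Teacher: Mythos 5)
Your proposal is correct and takes the natural route one would expect here. The paper does not spell out a proof for this corollary, instead citing Lemma 15 of Ciliberto et al.\ and Corollary 4.5 of Mollenhauer et al.\ as covering the argument; your two-sided continuity argument (boundedness of evaluation in $\mathcal{G}$ on one side, the embedding $S_2(\mathcal{H}_X,\mathcal{Y}) \hookrightarrow \mathcal{L}(\mathcal{H}_X,\mathcal{Y})$ on the other) to pass from the algebraic tensor product to the completion, together with the density of $\operatorname{span}\{\phi_X(x):x\in E_X\}$ for uniqueness, is exactly the bookkeeping those references carry out. One small phrasing nit: the operator $C-\widetilde{C}$ vanishes on the set $\{\phi_X(x)\}$, hence by linearity on its span and by continuity on the closure of that span, which equals $\mathcal{H}_X$; saying it \emph{vanishes on a dense subset} is a slight shortcut, but the intended argument is clear and sound.
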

The proof of Corollary~\ref{theo:operep} is a simple extension of Lemma $15$ in \cite{ciliberto2016consistent} and Corollary $4.5$ in \cite{mollenhauer2020nonparametric}. Corollary~\ref{theo:operep} shows that the vRKHS $\mathcal{G}$ is generated via the space of Hilbert-Schmidt operators $S_2(\mathcal{H}_{X}, \mathcal{Y})$ \[\mathcal{G}= \left\{F: E_X \rightarrow \mathcal{Y} \mid F = C\phi_{X}(\cdot), \quad C \in S_2(\mathcal{H}_{X}, \mathcal{Y})\right\}.\]

{\bf Vector-valued regression:} We briefly recall the basic setup of regularized least squares regression with Hilbert space-valued random variables. The risk for vector-valued regression is 
\begin{IEEEeqnarray}{rCl}
\mathcal{E}(F):= \mathbb{E}\left[\|Y - F(X)\|^2_{\mathcal{Y}} \right] = \int_{E_X \times \mathcal{Y}} \|y - F(x)\|^2_{\mathcal{Y}}p(x,dy)\pi(dx) , \nonumber
\end{IEEEeqnarray}
for measurable functions $F: E_X \rightarrow \mathcal{Y}$. The analytical minimiser of the risk over all those measurable functions is the \textit{regression function} or the \textit{conditional mean function} $F_\star \in L_2(\mathbb{\pi}; \cY)$ given by
\begin{IEEEeqnarray}{rCl}
F_*(x):= \mathbb{E}[Y \mid X = x] = \int_{\mathcal{Y}} y \, p(x,dy), \quad x \in E_X. \nonumber
\end{IEEEeqnarray}
This fact can for example be proven via a classical decomposition of the risk, see e.g.\ \citet[Theorem A.1]{mollenhauer2020nonparametric}. Throughout the paper, we assume that $\E[\Vert Y \Vert_\cY^2] < +\infty$, i.e., the random variable $Y$ is square integrable. Note that this ensures that we have $F_\star \in L_2(\pi; \cY)$.



We pick $\mathcal{G}$ as an hypothesis space of functions to estimate $F_*$. Given a data set $D = \{(x_i, y_i)\}_{i=1}^n$ independently and identically sampled from the joint distribution of $X$ and $Y$, a regularized estimate of $F_*$ is the solution of the following optimization problem:
\begin{IEEEeqnarray}{rCl}
\hat{F}_{\lambda}:= \argmin_{F \in \mathcal{G}} \frac{1}{n}\sum_{i = 1}^n \left\|y_i - F(x_i)\right\|^2_{\mathcal{Y}} + \lambda \|F\|_{\mathcal{G}}^2, \nonumber 
\end{IEEEeqnarray}
where $\lambda > 0$ is the regularization parameter. According to Corollary~\ref{theo:operep}, $\hat{F}_{\lambda}(\cdot) := \bar{\Psi}\left(\hat{C}_{\lambda}\right)(\cdot)  = \hat{C}_{\lambda}\phi_X(\cdot)$ where
\begin{IEEEeqnarray}{rCl}
\hat{C}_{\lambda}:= \argmin_{C \in S_2(\mathcal{H}_{X}, \mathcal{Y})} \frac{1}{n}\sum_{i = 1}^n \left\|y_i - C\phi_X(x_i)\right\|^2_{\mathcal{Y}} + \lambda \|C\|_{S_2(\mathcal{H}_{X}, \mathcal{Y})}^2, \label{eqn:vkrr_hs}
\end{IEEEeqnarray}
An explicit solution is given by 
$$
\hat{F}_{\lambda}(x) = \hat{C}_{\lambda}\phi_{X}(x)=\sum_{i=1}^{n}y_{i}\beta_{i}(x), \qquad \beta(x) := \left[\mathbf{K}_{XX}+n\lambda \operatorname{Id}\right]^{-1}\mathbf{k}_{Xx} \in \mathbb{R}^n
$$
$$
\begin{aligned}
(\mathbf{K}_{XX})_{ij} & =k(x_{i},x_{j}) \qquad &i,j \in [n] \\
\left(\mathbf{k}_{Xx}\right)_{i} & = k(x_{i},x) \qquad &i \in [n]
\end{aligned}
$$
The above model is well-specified
if the equivalence class $F_\star \in L_2(\pi; \cY)$ admits 
a representative which is contained in $\cG$. In what follows, we will simply write this scenario as $F_\star \in \cG$ by abuse of notation.
We note that universal consistency of this approach
at least requires that $\cG$ is dense in $L_2(\pi; \cY)$ such that for every possible 
$F_\star$, we can achieve
$\hat{F}_{\lambda} \to F_\star$ 
in the norm of $L_2(\pi; \cY)$
either in expectation or with high probability
with respect to the distribution of the
samples $D$ for some admissible regularization scheme $\lambda = \lambda_n \to 0$ whenever $n \to \infty$. This denseness is well-investigated and generally satisfied; see Remark~\ref{rem:vector_inter}. 

{\bf Real-valued Interpolation Space:}
We now introduce the background required in order to characterize the Hilbert spaces used to deal with the misspecified setting $F_* \notin \mathcal{G}$.
We  review the results of 
\citet{steinwart2012mercer} and \citet{fischer2020sobolev}
that set out the eigendecompositions of $L_{X}$ and $C_{XX}$, and apply these in constructing the interpolation spaces  used for the misspecified setting. By the spectral theorem for self-adjoint compact operators, there exists an at most countable index set $I$, a non-increasing sequence $(\mu_i)_{i\in I} > 0$, and a family $(e_i)_{i \in I} \in \mathcal{H}_{X}$, such that $\left([e_i]\right)_{i \in I}$ is an orthonormal basis (ONB) of $\overline{\text{ran}~I_{\pi}} \subseteq L_2(\pi)$ and $(\mu_i^{1/2}e_i)_{i\in I}$ is an ONB of $\left(\operatorname{ker} I_{\pi}\right)^{\perp} \subseteq \mathcal{H}_{X}$, and we have 
\begin{equation} \label{eq:SVD}
    L_{X} = \sum_{i\in I} \mu_i \langle\cdot, [e_i] \rangle_{L_2(\pi)}[e_i], \qquad C_{XX} = \sum_{i \in I} \mu_i \langle\cdot, \mu_i^{\frac{1}{2}}e_i \rangle_{\mathcal{H}_{X}} \mu_i^{\frac{1}{2}}e_i
\end{equation}

For $\alpha \geq 0$, we define the $\alpha$-interpolation space \cite{steinwart2012mercer} by
\begin{equation} \label{eq:interpolation_space}
[\mathcal{H}]_{X}^{\alpha}:=\left\{\sum_{i \in I} a_{i} \mu_{i}^{\alpha / 2}\left[e_{i}\right]:\left(a_{i}\right)_{i \in I} \in \ell_{2}(I)\right\} \subseteq L_{2}(\pi),
\end{equation}
equipped with the $\alpha$-power norm
\begin{equation} \label{eq:gamma_norm}
\left\|\sum_{i \in I} a_{i} \mu_{i}^{\alpha / 2}\left[e_{i}\right]\right\|_{[\mathcal{H}]_{X}^{\alpha}}:=\left\|\left(a_{i}\right)_{i \in I}\right\|_{\ell_{2}(I)}=\left(\sum_{i \in I} a_{i}^{2}\right)^{1 / 2}.
\end{equation}
For $\left(a_{i}\right)_{i\in I} \in \ell_{2}(I)$, the $\alpha$-interpolation space becomes a Hilbert space with inner product defined as \[\left\langle \sum_{i \in I}a_i(\mu_i^{\alpha/2}[e_i]), \sum_{i \in I}b_i(\mu_i^{\alpha/2}[e_i]) \right\rangle_{[\mathcal{H}]_{X}^{\alpha}} = \sum_{i \in I} a_i b_i.\] Moreover, $\left(\mu_{i}^{\alpha / 2}\left[e_{i}\right]\right)_{i \in I}$ forms an ONB of $[\mathcal{H}]_{X}^{\alpha}$ and consequently $[\mathcal{H}]_{X}^{\alpha}$ is a separable Hilbert space. In the following, we use the abbreviation $\|\cdot\|_{\alpha}:=\|\cdot\|_{[\mathcal{H}]_{X}^{\alpha}}$. For $\alpha=0$ we have $[\mathcal{H}]_{X}^{0}=\overline{\operatorname{ran} I_{\pi}} \subseteq L_{2}(\pi)$ with $\|\cdot\|_{0}=\|\cdot\|_{L_{2}(\pi)}$. Moreover, for $\alpha=1$ we have $[\mathcal{H}]_{X}^{1}=\operatorname{ran} I_{\pi}$ and $[\mathcal{H}]_{X}^{1}$ is isometrically isomorphic to the closed subspace $\left(\operatorname{ker} I_{\pi}\right)^{\perp}$ of $\mathcal{H}_{X}$ via $I_{\pi}$, i.e. $\left\|[f]\right\|_{1}=\|f\|_{\mathcal{H}_{X}}$ for $f \in\left(\operatorname{ker} I_{\pi}\right)^{\perp}$. For $0<\beta<\alpha$, we have
\begin{IEEEeqnarray}{rCl} \label{eqn:inter_inclusion}
[\mathcal{H}]_{X}^{\alpha} \hookrightarrow [\mathcal{H}]_{X}^{\beta}  \hookrightarrow [\mathcal{H}]_{X}^{0} \subseteq L_{2}(\pi). 
\end{IEEEeqnarray}
For $\alpha > 0$, the $\alpha$-interpolation space is given by the image of the fractional integral operator, namely
$$
[\cH]_X^\alpha=\operatorname{ran} L_X^{\alpha / 2} \quad \text { and } \quad\left\|L_X^{\alpha / 2} f\right\|_\alpha=\|f\|_{L_2(\pi)}
$$
for $f \in \overline{\operatorname{ran} I_\pi}$.

\begin{rem}[\change{Universality}]
\label{rem:universality}
    Under assumptions \ref{assump:separable} to \ref{assump:bounded} and $E_X$ being a second-countable locally compact Hausdorff space, if $k_X(\cdot, x)$ is continuous and vanishing at infinity, then $[\mathcal{H}]_{X}^{0} = L_{2}(\pi)$ if and only if $\cH_X$ is dense in the space of continuous functions vanishing at infinity equipped with the uniform norm \citep{carmeli2010vector}. Such RKHS are called $c_0-$universal. As a special case of \citet[Proposition 5.6]{carmeli2010vector}, one can show that on $\Rd$, Gaussian, Laplacian, inverse multiquadrics and Matérn kernels are $c_0$-universal.
\end{rem}

\begin{rem}[Interpolation space] \label{rem:interpolation_scalar} The name $\alpha-$interpolation space comes from the fact that for $0<\alpha<1$, the $\alpha$-interpolation space can also be characterized in terms of 
classical interpolation theory of real vector spaces (\citealp[see e.g.,][]{triebel1995interpolation}). In particular, \citet[][Theorem 4.6]{steinwart2012mercer} proved the fact
$$
[\cH]_X^\alpha \cong \left[L_2(\pi),[\cH]_X^1\right]_{\alpha, 2},
$$
where the notation \smash{$\left[L_2(\pi),[\cH]_X^1\right]_{\alpha, 2}$}
denotes the classical 
Hilbert space interpolation of $L_2(\pi)$ and $[\cH]_X^1$ of degree $\alpha$ (``interpolation of the real method''). As the precise
construction of this space is fairly technical and merely used
as an alternative interpretation of $[\cH]_X^\alpha$ here, 
we refer to Appendix~\ref{sec:interpolation_vector} for more details.
\end{rem}

\begin{figure}
\centering
\begin{tikzcd} 
S_2(L_2(\pi), \textcolor{Blue}{\mathcal{Y}}) \arrow[r, "\Psi"']  &  L_2(\pi;\textcolor{Blue}{\mathcal{Y}}) \\
S_2(\mathcal{H}_X, \textcolor{Blue}{\mathcal{Y}}) \arrow[r, "\bar{\Psi}"'] \arrow[u, bend left, dotted, "\mathcal{I}_{\pi}", red, xshift=-4ex] & \mathcal{G}
\end{tikzcd}
\caption{$\Psi$ and $\bar{\Psi}$ are the isometric isomorphisms between each pair of spaces. $\mathcal{I}_{\pi}$ denotes the canonical embedding between the two Hilbert-Schmidt spaces.}
\label{fig:spaces}
\end{figure}

\section{Approximation of \texorpdfstring{$F_*$}{F*} with Vector-valued Interpolation Space}
\label{sec:interpolation}

In this section, we deal with the misspecified setting where $F_* \notin \mathcal{G}$. To do this, we first define the \textit{vector-valued interpolation space} via the tensor product space. We recall from Remark~\ref{rem:tensor_product} that $L_2(\pi;\mathcal{Y})$ is isometrically isomorphic to $S_2\left(L_2(\pi), \mathcal{Y}\right)$ and we denote by $\Psi$ the isometric isomorphism between the two spaces. Similarly, we have $\mathcal{G} \simeq S_2(\mathcal{H}_X, 
\mathcal{Y})$ and we denote by $\bar\Psi$ the isometric isomorphism between both spaces in accordance with Theorem~\ref{theo:isometric}. This is summarized in Figure \ref{fig:spaces}. The second chain of spaces is not isometric to the first but can be naturally embedded into the first as follows. Recall that we denote by $I_{\pi}: \mathcal{H}_X \rightarrow L_2(\pi)$ the embedding that maps each function to its equivalence class, $I_{\pi}(f) = [f]$. We therefore naturally define the embedding $\mathcal{I}_{\pi}: S_2(\mathcal{H}_X, \mathcal{Y}) \rightarrow S_2(L_2(\pi), \mathcal{Y})$ through $\mathcal{I}_{\pi}(g \otimes f) = g \otimes I_{\pi}(f) = g \otimes [f]$ for all $f \in \mathcal{H}_X$, $g \in \mathcal{Y}$, and obtain the extension to the whole space by linearity and continuity.\footnote{$\mathcal{I}_{\pi}$ is formally the tensor product of the operator $I_{\pi}$ with the operator $\operatorname{Id}_{\mathcal{Y}}$, see \citet[Definition 12.4.1.]{aubin2000applied}} Therefore, for $F \in \mathcal{G}$ we define $[F] :=  \Psi \circ \mathcal{I}_{\pi} \circ \bar{\Psi}^{-1}(F)$. In the rest of the paper, every embedding will be denoted using the notation $[~\cdot~]$. A stricter requirement would be to write $[~\cdot~]_{\pi}$ due to dependence on the measure $\pi$, but we omit the subscript for ease of notation.


\begin{table}[htbp] 
    \centering
    \caption{Notation for spaces and operators}
    \change{
    \setlength\tabcolsep{2pt}
    \begin{tabular}{|c|p{0.4\textwidth}|c|p{0.4\textwidth}|} 
        \hline
        \textbf{Symbol} & \textbf{Description} & \textbf{Symbol} & \textbf{Description}\\
        \hline
        $E_X$ & Covariate space & $L_X$ & $L_2$-integral operator \\
        $\cY$ & Output space & $C_{XX}$ & $\cH_X$-covariance operator \\
        $\cH_X$ & Scalar-valued RKHS & $\cG$ & Vector-valued RKHS \\
        $[\cH]_X^{\alpha} $& Scalar-valued $\alpha$-interpolation space & $[\cG]^{\alpha}$ & Vector-valued $\alpha$-interpolation space \\
        $k_X$ & Scalar-valued kernel & $K$ & Vector-valued kernel  \\
        $I_\pi$ & Scalar $L_2$-embedding operator & $\mathcal{I}_{\pi}$ & 
        Vector-valued $L_2$-embedding operator\\
        \hline
    \end{tabular}
    } \label{table:notations}
\end{table}

\begin{defn}[\change{Vector-valued interpolation space}]
\label{def:inter_ope_norm} 
Let $k_X$ be a real-valued kernel with associated RKHS $\mathcal{H}_X$ and let $[\mathcal{H}]_X^{\alpha}$ be the real-valued interpolation space associated to $\mathcal{H}_X$ with some $\alpha \geq 0$. Since $[\mathcal{H}]_X^{\alpha} \subseteq L_2(\pi)$, it is natural to define the vector-valued interpolation space $[\mathcal{G}]^{\alpha}$ as
\[[\mathcal{G}]^{\alpha} 
:= \Psi\left(S_2([\mathcal{H}]_X^{\alpha},\mathcal{Y}) \right) = \{F \mid F = \Psi(C), ~C \in S_2([\mathcal{H}]_X^{\alpha},\mathcal{Y})\}.\]
$[\mathcal{G}]^{\alpha}$ is a Hilbert space equipped with the norm $$\|F\|_{\alpha} := \|C\|_{S_2([\mathcal{H}]_X^{\alpha}, \mathcal{Y})} \qquad (F \in [\mathcal{G}]^{\alpha}),$$ where $C = \Psi^{-1}(F)$. For $\alpha = 0$, we retrieve, $$\|F\|_{0} = \|C\|_{S_2(L_2(\pi), \mathcal{Y})}.$$
\end{defn}

\begin{rem}[\change{Interpolation space inclusions}]
\label{rem:vector_inter} 
The vector-valued interpolation space $[\mathcal{G}]^{\alpha}$ allows us to study the approximation of $F_*$ in the misspecified case. Note that we have $F_* \in L_2(\pi;\mathcal{Y})$ since $Y \in L_2(\P; \cY)$ by assumption. In light of Eq.~(\ref{eqn:inter_inclusion}), for $0< \beta <\alpha$ we have \[[\mathcal{G}]^{\alpha} \hookrightarrow [\mathcal{G}]^{\beta}  \hookrightarrow [\mathcal{G}]^{0}\subseteq L_2(\pi;\mathcal{Y}).\] 
While the well-specified case corresponds to $F_* \in \mathcal{G}$, the misspecified case corresponds to $F_* \in [\mathcal{G}]^{\beta}$ for some $0 \leq \beta <1$. One can see from Remark~\ref{rem:universality} that under assumptions \ref{assump:separable} to \ref{assump:bounded} and $E_X$ being a second-countable locally compact Hausdorff space, $[\mathcal{G}]^{0} = L_2(\pi;\mathcal{Y})$ if and only if $k_X$ is $c_0-$universal.
\end{rem}

Akin to Remark~\ref{rem:interpolation_scalar} for scalar-valued interpolation spaces, our next theorem shows that for $0<\alpha<1$, $[\cG]^{\alpha}$ can be characterized in terms of interpolation spaces.

\begin{theo}[Vector-valued interpolation norm equivalence] \label{th:interpolation_vector}
    For $0<\alpha<1$, 
    $$
    [\cG]^\alpha \cong \left[L_2(\pi; \cY),[\cG]^1\right]_{\alpha, 2}.
    $$
\end{theo}
This result allows us to obtain learning rates for vector-valued Sobolev spaces, see Section~\ref{sec:sobolev} for details.

\section{Upper Learning Rates}
\label{sec:upper_rates}

In this section, we derive the learning rate for the difference between $[\hat{F}_{\lambda}]$ and $F_*$ in the interpolation norm. As our assumptions match those of \cite{fischer2020sobolev}, we include their corresponding labels 
for ease of reference. Recall that $(\mu_i)_{i\in I}$ are the positive eigenvalues of the integral operator. We now list our assumptions:
\begin{itemize}
    \item[$5$.] For some constants $c_1 >0$ and $p \in (0,1]$ and for all $i \in I$,
    \begin{equation}\label{asst:evd}
        \mu_i \leq c_1i^{-1/p}.\tag{EVD}
    \end{equation}
    
    \item[$6$.] For $\alpha \in [p, 1]$, the inclusion map $I^{\alpha, \infty}_{\pi}: [\mathcal{H}]_{X}^{\alpha} \hookrightarrow L_{\infty}(\pi)$ is continuous, there is a constant $A > 0$ such that
    \begin{equation}\label{asst:emb}
        \|I^{\alpha, \infty}_{\pi}\|_{[\mathcal{H}]_{X}^{\alpha} \rightarrow L_{\infty}(\pi)} \leq A \tag{EMB}
    \end{equation}
    
    \item[$7$.] There exists $0 < \beta$ and a constant $B \geq 0$ such that $F_* \in [\mathcal{G}]^{\beta}$
    \begin{equation}\label{asst:src}
        \|F_*\|_{\beta} \leq B. \tag{SRC}
    \end{equation}
    
    We let $C_{*} := \Psi^{-1}(F_*) \in S_2([\mathcal{H}]_X^{\beta}, \mathcal{Y})$.
    
    \item[$8$.]We assume that there are constants $\sigma, R > 0$ such that
    \begin{equation}
        \label{asst:mom}
    \int_{\mathcal{Y}}\|y- F_{*}(x)\|_{\mathcal{Y}}^q p(x,dy) \leq \frac{1}{2}q!\sigma^2R^{q-2}, 
        \tag{MOM}
    \end{equation}
    is satisfied for $\pi$-almost all $x \in E_X$ and all $q \geq 2$.
\end{itemize}

\eqref{asst:evd} is a standard assumption on the eigenvalue decay of the integral operator (see more details in \citealp{caponnetto2007optimal,fischer2020sobolev}). 
Property \eqref{asst:emb} is referred to as the \textit{embedding property} in \cite{fischer2020sobolev}.
It can be shown that it holds if and only if there exists a constant $A \geq 0$ with $\sum_{i \in I} \mu_i^{\alpha} e_i^2(x) \leq A^2$ for $\pi$-almost all $x \in E_X$ \citep[Theorem 9]{fischer2020sobolev}. Since we assume $k_X$ to be bounded, the embedding property always hold true when $\alpha = 1$. Furthermore, \eqref{asst:emb} implies a polynomial eigenvalue decay of order $1/\alpha$, which is why we take $\alpha \geq p$. \eqref{asst:src} is justified by Remark~\ref{rem:vector_inter} and is often referred to as the source condition in literature \citep{caponnetto2007optimal,fischer2020sobolev,lin2018optimal,lin2020optimal}. It measures the smoothness of the regression function $F_*$. In particular, when $\beta \geq 1$, the source condition implies that $F_*$ has a representative from $\mathcal{G}$, indicating the well-specified scenario. However, once we let $\beta < 1$, we are in the misspecified learning setting, which is the main interest in this manuscript. Finally, the \eqref{asst:mom} condition on the Markov kernel $p(x,dy)$ is a Bernstein moment condition used to control the noise of the observations (see \citealp{caponnetto2007optimal,fischer2020sobolev} for more details). If $Y$ is almost surely bounded, for example $\|Y\|_{\cY} \leq Y_{\infty}$ almost surely, then \eqref{asst:mom} is satisfied with $\sigma=R=2Y_{\infty}$. It is possible to prove that the Bernstein condition is equivalent to sub-exponentiality, 
see \citet[Remark 4.9]{mollenhauer2022learning}.




\begin{theo}[\change{Upper learning rates}]\label{theo:upper_rate_bis}
Let $\mathcal{H}_X$ be a RKHS on $E_X$ with respect to a kernel $k_X$ such that assumptions \ref{assump:separable} to \ref{assump:bounded} hold. Furthermore, let the conditions \eqref{asst:evd}, \eqref{asst:emb},  \eqref{asst:mom} be satisfied for some $0 < p \leq \alpha \leq 1$. For $0 \leq \gamma \leq 1$, if \eqref{asst:src} is satisfied with $\gamma < \beta \leq 2$, and then 
\begin{enumerate}
    \item in the case $\beta + p \leq \alpha$, let $\lambda_n = \Theta \left(\left(n/\log^{\theta}(n)\right)^{-\frac{1}{\alpha}}\right)$ for some $\theta > 1$, for all $\tau > \log(5)$ and sufficiently large $n \geq 1$, there is a constant $J > 0$ independent of $n$ and $\tau$ such that \[\left\|[\hat{F}_{\lambda}] - F_*\right\|^2_{\gamma} \leq \tau^2 J\left(\frac{n}{\log ^{\theta} n}\right)^{-\frac{\beta-\gamma}{\alpha}}\] is satisfied with $P^n$-probability not less than $1-5e^{-\tau}$. 
    \item in the case $\beta + p > \alpha$, let $\lambda_n = \Theta \left(n^{-\frac{1}{\beta + p}}\right)$, for all $\tau > \log(5)$ and sufficiently large $n \geq 1$, there is a constant $J > 0$ independent of $n$ and $\tau$ such that \[\left\|[\hat{F}_{\lambda_n}] - F_*\right\|^2_{\gamma} \leq \tau^2 J n^{-\frac{\beta-\gamma}{\beta + p}}\] is satisfied with $P^n$-probability not less than $1-5e^{-\tau}$.
\end{enumerate}
\end{theo}

\begin{rem}[\change{Constants}]
\label{rem:q_mark} 
The index bound hidden in the phrase ``sufficiently large $n$'' depends on the parameters and constants from \eqref{asst:evd} and \eqref{asst:emb}, on $\tau$, on a lower bound $0 < c \leq 1$ for the operator norm $c \leq \|C_{XX}\|$, on $\|F_*\|_{L_{q_{\alpha,\beta}}(\pi; \cY)}$ (see Remark~\ref{rem:drop_bond} and Theorem~\ref{th:Lq_embedding} below) and on the regularization parameter sequence $(\la_n)_{n \geq 1}$. We can see that the constant $J$ in Theorem~\ref{theo:upper_rate_bis} does not depend on $\|F_*\|_{\infty}$, and only depends on the parameters and constants from \eqref{asst:evd}, \eqref{asst:emb},  \eqref{asst:mom}, \eqref{asst:src}, on $\|F_*\|_{L_{q_{\alpha,\beta}}(\pi; \cY)}$, and on the regularization parameter sequence $(\la_n)_{n \geq 1}$.
\end{rem}
Theorem~\ref{theo:upper_rate_bis} states that the learning rate for $[\hat{F}_{\lambda}]$ is governed by the interplay between $p$, $\alpha$, and $\beta$. 
To simplify the discussion, we focus on the $L_2(\pi; \mathcal{Y})$ learning rate, corresponding to $\gamma = 0$. The exponent $\beta / \max\{\alpha, \beta+p\}$ explicitly provides the learning rate. For example, if we have $\alpha \leq \beta$, we obtain a learning rate of $\beta/ (\beta + p)$. In particular, for a Gaussian kernel on a bounded convex set $E_X$ with $\pi$ uniform on $E_X$, $p$ and $\alpha$ are arbitrarily close to $0$ \cite[see e.g.,][Example 2]{meunier2023nonlinear}, and our learning rate can achieve 
$O(\log(n)/ n)$ rate simply by taking $\lambda_n = \Theta\left(\left(\log(n)/n\right)^{1/\beta}\right)$. We address the case of kernels with slower eigenvalue decay such as the Mat{\'e}rn kernel in Section~\ref{sec:sobolev}. 

\begin{rem}[\change{Saturation effect}]
    \change{We note that for $\beta > 2$, the upper learning rate is still valid but saturates, i.e. the best upper bound of the generalization error (in $L_2-$norm) is $n^{-\frac{2}{2 + p}}$. This phenomenon is commonly called the Tikhonov saturation effect and well known in classical
    regularisation theory \citep{engl1996regularization}
    and kernel learning literature (see e.g. \citealp{caponnetto2007optimal, fischer2020sobolev, rudi2015less}). In the real-valued setting, under the additional assumption that the kernel $k_X$ is Hölder continuous and that the conditional variance of the noise is bounded away from $0$, \cite{li2022saturation} recently demonstrated that for any regularization parameter $\lambda_n$, the generalization error for kernel ridge regression is lower bounded with high probability by $n^{-\frac{2}{2 + p}}$. This proves that the saturation effect is unavoidable when the algorithm employs Tikhonov regularization. To benefit from smoothness of the regression function beyond that saturation point at $\beta=2$, one can employ different spectral regularization algorithms as explored by \cite{blanchard2018optimal}. Proving the saturation effect for vector-valued regression with Tikhonov regularization, as well as exploring alternative spectral regularization algorithms, are very important research directions that we leave open for future works.}
\end{rem}

\begin{rem}[\change{Boundedness condition}]
\label{rem:drop_bond}  When analyzing the RLS algorithm for both scalar and vector-valued outputs, it is standard to assume that the regression function $F_\star$ is bounded, as discussed in prior studies \citep[see for example][]{caponnetto2007optimal,fischer2020sobolev}. This boundedness is inherently met when $\beta \geq \alpha$, which falls in line with the assumption~\eqref{asst:emb}. However, when $\beta < \alpha$, this condition must be explicitly assumed. Our previous work, \cite{lietal2022optimal}, did not provide a way to relax this assumption (see Appendix~\ref{sec:learning_rate}). \changebis{Nonetheless, when $\cal Y=\R$, recent insights by \citet{zhang2023optimality} suggest that this boundedness criterion can be substituted with the requirement that $F_*$ belongs to $L_q(\pi;\R)$ for some $q \geq 2$. Furthermore, \cite{zhang2023optimalityspectral} demonstrated that $F_* \in L_q(\pi;\R)$ is automatically satisfied when $F_* \in [\cH]_X^{\beta}$ leading to the $L_q-$embedding property of scalar valued interpolation spaces.} 
\changebis{Adopting a similar methodology, our Theorem~\ref{theo:upper_rate_bis} extends the scope of \cite{zhang2023optimality, zhang2023optimalityspectral} to the case where $\cY$ can be any Hilbert space, not just a subset of $\mathbb{R}$. To achieve this goal we first show that the boundedness requirement on $F_*$ can be weakened to the assumption that $F_* \in L_q(\pi;\cY)$ for some $q \geq 2$, and then we remove this assumption by showing that we have a continuous embedding $[\cG]^{\beta} \hookrightarrow L_q(\pi;\cY).$ 
This $L_q-$embedding property of vector-valued interpolation spaces is given in the next theorem. As will be demonstrated in subsequent sections, these enhancements, building upon the findings in \cite{lietal2022optimal}, are pivotal in achieving minimax rates for the RLS algorithm within many vector-valued RKHS, including vector-valued Sobolev spaces. 
}
\end{rem}

\changebis{
\begin{theo}[$L_q$-embedding property]\label{th:Lq_embedding}
    Let Assumption~(\ref{asst:emb}) be satisfied with parameter $\alpha \in (0,1]$. For any $\beta \in (0,\alpha]$, the inclusion map 
    \begin{equation*}
        I_{\pi}^{q_{\alpha,\beta}}:[\mathcal{G}]^\beta \hookrightarrow L_{q_{\alpha,\beta}}(\pi ; \mathcal{Y})
    \end{equation*}
    is continuous, where $q_{\alpha,\beta} := \frac{2\alpha}{\alpha-\beta}$.
\end{theo}
Notice that when we let $\beta \to \alpha$, we have $q_{\alpha,\beta} \to +\infty$ and we retrieve the property that $[\mathcal{G}]^\alpha \hookrightarrow L_{\infty}(\pi ; \mathcal{Y})$. On the other hand, when $\beta \to 0$, we find $q_{\alpha,\beta} \to 2$ and we retrieve the property that $[\mathcal{G}]^{\beta} \hookrightarrow L_{2}(\pi ; \mathcal{Y})$ for all $\beta \geq 0$. The $L_q-$embedding property allows to characterise the integrability of elements of $[\cG]^{\beta}$ in the intermediate situations where $0 < \beta < \alpha$. 
}

\change{

\subsection{Rates for the general multiplicative kernel}
\label{sec:general_kernel}
    \label{rem:general_kernel2}
As previously discussed in Remark~\ref{rem:general_kernel}, we 
show that the rates from
Theorem~\ref{theo:upper_rate_bis} also hold for the kernel 
\begin{equation*}
    \widetilde K(x,x') := k_{X}(x,x') \change{T}
\end{equation*}
where $\change{T}: \mathcal{Y} \to \mathcal{Y}$ is a bounded positive-semidefinite self-adjoint operator. 
Let $\tilde{\mathcal{G}}$ be the vRKHS induced by the kernel $\tilde{K}$.
By Remark~\ref{rem:general_kernel} 
(see also \citealp[Example 3.2]{carmeli2010vector}),
we obtain upper rates for learning
the conditional mean function $F_\star \in L_2(\pi; \mathcal{Y})$ with 
the general kernel $\tilde{K}$ by applying the transformation 
$y \mapsto T^{1/2}y$ for all $y \in \mathcal{Y}$
and simply invoking Theorem~\ref{theo:upper_rate_bis}.
That is, we are learning $\tilde{F}_\star \in L_2(\pi; \mathcal{Y})$ given by
\begin{equation*}
    \tilde{F_\star}(\cdot) := \E[ T^{1/2} Y \mid X = \cdot ] 
\end{equation*}
with the kernel $K(x,x') = k_{X}(x,x')\operatorname{Id}_{\mathcal{Y}}$.
We first notice that the conditions (\ref{asst:evd}) and (\ref{asst:emb})
do not depend on the choice of $T$ (or equivalently, the choice of norm
on $\mathcal{Y}$). Therefore, it remains to investigate
the constants for which (\ref{asst:mom}) and (\ref{asst:src}) hold for
$\tilde{F}_\star$ with respect to the kernel $K$, under the assumption that $F_\star$ satisfies (\ref{asst:mom}) and 
(\ref{asst:src}) with respect to the kernel $\tilde{K}$---
this allows to apply Theorem~\ref{theo:upper_rate_bis} and directly
extends the upper rates to the general case with adjusted constants.

We first verify (\ref{asst:mom}) for $\tilde{F_\star}$
under the assumption that $F_\star$ satisfies (\ref{asst:mom})
for some $\sigma, R >  0$. We have 
\begin{align*}
    \int_{\mathcal{Y}}\| T^{1/2}y- \tilde{F}_{*}(x) \|_{\mathcal{Y}}^q \, p(x,dy) 
    &= 
    \int_{\mathcal{Y}}\| T^{1/2} (y- F_{*}(x)) \|_{\mathcal{Y}}^q \, p(x,dy) \\
    &\leq 
    \| T \|_{\mathcal{Y} \to \mathcal{Y}}^{q/2}
    \int_{\mathcal{Y}}\| y- F_{*}(x) \|_{\mathcal{Y}}^q \, p(x,dy) 
    \leq 
    \frac{1}{2}q! \tilde{\sigma}^2 \tilde{R}^{q-2}     
\end{align*} 
with $\tilde{\sigma}:= \| T \|_{\mathcal{Y} \to \mathcal{Y}}^{1/2} \sigma $ 
and $\tilde{R} := \| T \|_{\mathcal{Y} \to \mathcal{Y}}^{1/2} R$.

We now assume $F_\star$ satisfies (\ref{asst:src})
with respect to interpolation space $[\mathcal{\tilde{G}}]^\beta$
induced by the kernel $\tilde{K}$.
That is, we have $\| F_\star \|_{[\mathcal{\tilde{G}}]^\beta} < B$ 
for some $B \geq 0$ and $\beta > 0$.
We recall 
$\tilde{\mathcal{G}} \simeq \mathcal{H} \otimes \tilde{\mathcal{Y}}$,
where
$\tilde{\mathcal{Y}} = \mathcal{Y} / \ker(T)$
equipped with the inner product 
$\langle y, y' \rangle_{\change{\tilde{\mathcal{Y}}}} 
= \langle y , \change{T} y' \rangle_\mathcal{Y}$.
Analogously to the interpolation space
$[\mathcal{G}]^{\beta}$, we obtain
$[\tilde{\mathcal{G}}]^{\beta} \simeq 
S_2([\mathcal{H}]_X^{\beta},\mathcal{\tilde Y})$.
Hence, there exists an orthogonal sequence $\{h_i\}_{i \in I}$ 
in $[\mathcal{H}]^{\beta}$
and some sequence 
$\{y_i\}_{i \in I}$ in $\mathcal{Y}$, such that isometrically, we have
$F_\star \simeq \sum_{i \in I} y_i \otimes h_i$. By orthogonality of the $\{h_i\}_{i \in I}$, we have
\begin{equation*}
    \| F_\star \|^2_{[\mathcal{\tilde{G}}]^\beta} 
    = \sum_{i \in I} 
    \| h_i \|^2_{[\mathcal{H}]^{\beta}} 
    \| T^{1/2} y_i \|^2_{\mathcal{Y}}
    = 
    \| \tilde{F}_\star \|^2_{[\mathcal{G}]^\beta}, 
\end{equation*}
confirming (\ref{asst:src}) for $\tilde{F}_\star$ with
respect to the interpolation space $[\mathcal{G}]^\beta$ 
without adjusting the constant $B$.
}

\section{Lower Bound}
\label{sec:lower}

Our final theorem provides a lower bound for the convergence rates, which eventually allows us to confirm the optimality of  the learning rates given in the preceding section. In deriving the lower bound, we need the following extra assumption.
\begin{itemize}
    \item[$8$.] For some constants $c_1,c_2 >0$ and $p \in (0,1]$ and for all $i\in I$,
    \begin{equation}\label{asst:evd+}
        c_2 i^{-1/p} \leq \mu_i \leq c_1i^{-1/p}\tag{EVD+}
    \end{equation}
\end{itemize}


\begin{theo}[\change{Lower learning rates}]
\label{theo:lower_bound} Let $k_X$ be a kernel on $E_X$ such that assumptions \ref{assump:separable} to \ref{assump:bounded}  hold and $\pi$ be a probability distribution on $E_X$ such that \eqref{asst:evd+} holds with $0< p \leq 1$. Then for all parameters $0 < \beta \leq 2$, $0 \leq \gamma \leq 1$ with $\gamma < \beta$ and all constants $\sigma, R, B$, there exist constants $J_0, J, \theta > 0$ such that for all learning methods $D\rightarrow \hat{F}_{D}$ ($D:=\{(x_i,y_i)\}_{i=1}^n$), all $\tau > 0$, and all sufficiently large $n \geq 1$ there is a distribution $P$ defined on $E_X \times \mathcal{Y}$ used to sample $D$, with marginal distribution $\pi$ on $E_X$, such that \eqref{asst:src} with respect to $B,\beta$ and \eqref{asst:mom} with respect to $\sigma, R$ are satisfied, and with $P^n$-probability not less than $1-J_0\tau^{1/\theta}$, 
\[\|[\hat{F}_D] - F_*\|^2_{\gamma} \geq \tau^2 J n^{-\frac{\beta-\gamma}{\beta+p}}.\]
\end{theo}


Theorem~\ref{theo:lower_bound} states that under the assumptions of Theorem~\ref{theo:upper_rate_bis} and Assumption~\eqref{asst:evd+} no learning method can achieve a learning rate faster than 
\begin{equation} \label{eq:lb}
    n^{-\frac{\beta - \gamma}{\beta+p}}.
\end{equation}

To our knowledge, this is the first analysis that demonstrates the lower rate for vector-valued regression in infinite dimension. In the context of regularized regression, \citet{caponnetto2007optimal},
\citet{steinwart2009optimal} and \citet{blanchard2018optimal} provide lower bounds on the learning rate under comparable assumptions. However, one key difference in our analysis is that the output of the regression learning now takes values in a potentially infinite dimensional Hilbert space $\mathcal{Y}$, rather than in $\mathbb{R}$ or $\Rd$. 


\changebis{Our analysis reveals that for $\beta \geq \alpha-p$, the RLS estimator leads to the minimax optimal rate (by combining Theorem~\ref{theo:lower_bound} and case 2 in Theorem~\ref{theo:upper_rate_bis}), namely 
$O(n^{-(\beta-\gamma)/(\beta + p)})$. This scenario is particularly relevant for vector-valued Sobolev RKHSs where $p=\alpha$, a topic we will explore in the following section. We point out that finding the optimal rate for $\beta < \alpha-p$ remains a longstanding challenge, even when the output is in $\mathbb{R}$.}

\section{Example: Vector-valued Sobolev Space} \label{sec:sobolev}
In this section we illustrate our main results in the case of vector-valued Sobolev RKHSs. To this end, we assume that $E_X \subseteq \Rd$ is a bounded domain with smooth
boundary equipped with the Lebesgue measure $\mu$. $L_2(E_X; \mathcal{Y}) := L_2(E_X, \mu; \mathcal{Y})$ denotes the corresponding Bochner space. We start by introducing vector-valued Sobolev spaces. 

\begin{defn}[vSobolev space]\label{def:vector_sobolev}
    For $m \in \mathbb{N}$, the vector-valued Sobolev space $W^{m, 2}(E_X ; \cY)$ is the Hilbert space of all $f \in L_2(E_X ; \cY)$ whose weak derivatives of all orders\footnote{$r := (r_1, \ldots, r_d) \in \N^d$ is a multi-index and $|r|$ denotes the sum of its values.} $|r| \leqslant m$ exist and belong to $L_2(E_X ; \cY)$, endowed with the norm
    $$
    \|f\|_{W^{m, 2}(E_X ; \cY)}^2:=\sum_{|r| \leqslant m}\left\|\partial^r f\right\|_{L_2(E_X ; \cY)}^2.
    $$
    For $m=0$, $W^{0, 2}(E_X ; \cY) := L_2(E_X; \mathcal{Y})$.
\end{defn}
For the definition of weak derivatives of functions in $L_2(E_X ; \cY)$ see \citet[Section 12.7]{aubin2000applied}. The following theorem allows us to connect vector-valued Sobolev spaces to our framework.  

\begin{theo}[\citealt{aubin2000applied}, Theorem 12.7.1] \label{th:vec_sobolev}
For $m \in \N$, the vSobolev space $W^{m, 2}(E_X ; \cY)$ is isometric to the Hilbert tensor product $\cY \otimes W^{m, 2}(E_X)$, where $W^{m, 2}(E_X):= W^{m, 2}(E_X ; \R)$ is the standard scalar-valued Sobolev space.
\end{theo}

When $k_X$ is a translation invariant kernel on $\Rd$ whose Fourier transform behaves as $\left(1+\|\cdot\|_2^2\right)^{-m}$ with $m > d/2$, such as the Matérn kernel (see Definition~\ref{def:sobolev}), the induced RKHS $\cH_X$ restricted to $E_X$ coincides with $W^{m, 2}(E_X)$, and their norms are equivalent, see \citet[Corollary 10.13 and Theorem 10.46]{wendland2004scattered}. Therefore, if we choose such a kernel $k_X$ to construct $K = k_X\operatorname{Id}_{\cY}$
and $\cG$, we obtain by Theorem~\ref{theo:isometric} and Theorem~\ref{th:vec_sobolev},
$$
\cG \simeq \cY \otimes \cH_X \simeq \cY \otimes W^{m, 2}(E_X) \simeq W^{m, 2}(E_X ; \cY).
$$
The induced vector-valued RKHS therefore corresponds to a vector-valued Sobolev space. Furthermore, the interpolation spaces $[\cG]^{\alpha}$, $\alpha \geq 0$, can be characterized as a vector-valued fractional Sobolev space (\citealp[see e.g., Section 5.6][for more details]{hytonen2016analysis}). 

\begin{defn}[Vector-Valued Fractional Sobolev Space] \label{def:frac_sobolev} Fix $r > 0$, and let $m:=\min \{s \in \mathbb{N}: s>r\}$. The vector-valued fractional Sobolev space $W^{r,2}(E_X;\cY)$ is defined by means of the real interpolation method, namely 
$$
W^{r,2}(E_X; \cY):=\left[L_2(E_X;\cY), W^{m,2}(E_X; \cY)\right]_{r / m, 2}.
$$

\end{defn}

\begin{prop}\label{cor:inter_sobolev}
    For all $r \geq 0$, $W^{r,2}(E_X; \cY) \cong \cY \otimes W^{r,2}(E_X)$. Furthermore, if $k_X$ is a kernel on $E_X$ such that $\cG \simeq W^{m, 2}(E_X ; \cY)$ with $m > d/2$, then for all $r  \geq 0$, 
    $$
    [\cG]^{r/m} \cong W^{r,2}(E_X; \cY).
    $$
    i.e., $[\cG]^{r/m} = W^{r,2}(E_X; \cY)$ with equivalent norms.
\end{prop}

\begin{defn}[Matérn kernel]\label{def:sobolev}
For $m \in \N$ with $m>d/2$, the Matérn kernel of order $m$ is defined as for all $x,x' \in \Rd$
\begin{IEEEeqnarray*}{rCl}
k_X\left(x^{\prime}, x\right)= \frac{1}{2^{m-d/2-1}\Gamma(m-d/2)}\left(\sqrt{2(m-d/2)}\left\|x^{\prime}-x\right\|\right)^{m-d / 2} \mathcal{K}_{m-d/2}\left(\sqrt{2(m-d/2)}\left\|x^{\prime}-x\right\|\right)
\end{IEEEeqnarray*}
where $\mathcal{K}_{m-d/2}$ is the modified Bessel function of the second kind of order $m-d/2$ and $\Gamma$ is the Gamma function (see e.g., \citealp{kanagawa2018gaussian}, Examples 2.2 and 2.6). 
\end{defn}

We now specify Theorem~\ref{theo:upper_rate_bis} and Theorem~\ref{theo:lower_bound} in the setting of vector-valued Sobolev spaces. We make the assumption that the marginal $\pi$ is equivalent to the Lebesgue measure so that $L_2(E_X; \mathcal{Y}) \simeq L_2(E_X, \pi; \mathcal{Y})$.

\begin{cor}[vSobolev Upper Rates] \label{cor:upper_sobolev}
Let $k_X$ be a kernel on $E_X$ such that assumptions \ref{assump:separable} to \ref{assump:bounded}  hold and such that $\cG \simeq W^{m, 2}(E_X ; \cY)$ with $m > d/2$, and let $P$ be a probability distribution on $E_X \times \mathcal{Y}$ such that $\pi:=P_{E_X}$ (the marginal distribution on $E_X$) is equivalent to the Lebesgue measure $\mu$ on $E_X$. Furthermore, let $B> 0$ be a constant such that $\|F_*\|_{W^{s, 2}(E_X ; \cY)} \leq B$ for some $0 < s \leq 2m$, and \eqref{asst:mom} be satisfied. Then, for $0 \leq t < s$ and a choice $\lambda_n = \Theta \left(n^{-\frac{m}{s + d/2}}\right)$, for all $\tau > \log(5)$ and sufficiently large $n \geq 1$, there is a constant $J > 0$ independent of $n$ and $\tau$ such that \[\left\|[\hat{F}_{\lambda_n}] - F_*\right\|^2_{W^{t, 2}(E_X ; \cY)} \leq \tau^2 J n^{-\frac{s-t}{s + d/2}}\] is satisfied with $P^n$-probability not less than $1-5e^{-\tau}$.
\end{cor}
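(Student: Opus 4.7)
The plan is to reduce Corollary~\ref{cor:upper_sobolev} to Theorem~\ref{theo:upper_rate_bis} applied with $\gamma=0$, by computing the parameters $p$, $\alpha$, $\beta$ explicitly in the Mat\'ern/Sobolev setting. By the classical Weyl-type asymptotics for the integral operator $L_X$ on a bounded smooth domain equipped with a measure equivalent to Lebesgue, the eigenvalues satisfy $\mu_i \asymp i^{-2m/d}$, so \eqref{asst:evd} holds with $p = d/(2m)$. The scalar interpolation spaces admit the identification $[\mathcal{H}]_X^\alpha \simeq W^{\alpha m, 2}(E_X)$ up to norm equivalence, which combined with the Sobolev embedding $W^{\alpha m, 2}(E_X) \hookrightarrow L_\infty(E_X)$ valid whenever $\alpha m > d/2$ gives \eqref{asst:emb} for every $\alpha \in (p, 1]$. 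Lifting this identification via the tensor product structure of Definition~\ref{def:inter_ope_norm} together with Theorem~\ref{th:vec_sobolev} yields $[\mathcal{G}]^\beta \simeq \cY \otimes W^{\beta m, 2}(E_X) \simeq W^{\beta m, 2}(E_X;\cY)$, so the hypothesis $\|F_*\|_{W^{s, 2}(E_X ; \cY)} \leq B$ translates into \eqref{asst:src} with $\beta = s/m \in (0, 2]$.

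Next I fix $\alpha = p + \varepsilon$ for a small $\varepsilon > 0$, ensuring $\alpha \in (p, 1]$ and $\beta + p > \alpha$, so that case 2 of Theorem~\ref{theo:upper_rate_bis} applies. If $s \geq d/2$ (equivalently $\beta \geq p$), I shrink $\varepsilon$ so that $\beta \geq \alpha$, and sub-case 2(a) directly gives the conclusion with $\lambda_n = \Theta(n^{-1/(\beta+p)}) = \Theta(n^{-m/(s+d/2)})$ and rate $n^{-\beta/(\beta+p)} = n^{-s/(s+d/2)}$. If $0 < s < d/2$, I am in sub-case 2(b), which requires a moment bound $\|F_*\|_{L_q(\pi;\cY)} \leq C_q$ for some $q > 2(\beta+p)/(\beta+2p-\alpha)$. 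This threshold tends to $2$ as $\varepsilon \to 0$, so it suffices to produce one fixed $q_* > 2$ with $F_* \in L_{q_*}(\pi;\cY)$. I obtain this from a vector-valued Sobolev embedding $W^{s,2}(E_X;\cY) \hookrightarrow L_{q_*}(E_X;\cY)$ with $q_* = 2d/(d-2s) > 2$: expand $F_* = \sum_k f_k\, g_k$ in an orthonormal basis $(g_k)$ of $\cY$ with $\sum_k \|f_k\|_{W^{s,2}(E_X)}^2 = \|F_*\|_{W^{s,2}(E_X;\cY)}^2$, apply Minkowski's integral inequality to $\|F_*(x)\|_\cY^2 = \sum_k f_k(x)^2$ (valid since $q_* \geq 2$), and invoke the scalar Sobolev embedding $W^{s,2}(E_X) \hookrightarrow L_{q_*}(E_X)$ term by term.

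The main technical obstacle will be the identification $[\mathcal{H}]_X^\alpha \simeq W^{\alpha m, 2}(E_X)$ for fractional $\alpha m$, for which I will either invoke an interpolation-theoretic argument (real or complex interpolation between $L_2(\pi)$ and $\mathcal{H}_X \simeq W^{m,2}(E_X)$, aided by the assumption that $\pi$ is equivalent to Lebesgue) or an explicit Mercer-type spectral characterization tied to the Mat\'ern kernel. Once this is in place, the tensorization inherent in Definition~\ref{def:inter_ope_norm} transfers the identification to the vector-valued case fiber-by-fiber, and the rest is bookkeeping: the $L_2(\pi;\cY)$ rate $n^{-s/(s+d/2)}$ and the prescribed regularization schedule then fall out of case 2 of Theorem~\ref{theo:upper_rate_bis} and depend only on $s$ and $d$, independently of the particular $\alpha$ chosen slightly above $p$.
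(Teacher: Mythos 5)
Your reduction to Theorem~\ref{theo:upper_rate_bis} with $\gamma=0$, the computation $p = d/(2m)$, $\alpha = p+\varepsilon$, $\beta = s/m$, and the case split between $\beta \geq \alpha$ (case 2(a)) and $\alpha-p < \beta < \alpha$ (case 2(b)) follows the same route the paper takes, and the bookkeeping of the regularization schedule $\lambda_n = \Theta(n^{-1/(\beta+p)}) = \Theta(n^{-m/(s+d/2)})$ is correct. Two points of comparison are worth spelling out. First, the ``technical obstacle'' you flag---the identification $[\mathcal{H}]_X^\alpha \simeq W^{\alpha m,2}(E_X)$ for non-integer $\alpha m$, and $[\cG]^{\theta}\simeq W^{\theta m,2}(E_X;\cY)$ even for $\theta>1$---is handled in the paper not via abstract interpolation between $L_2$ and $W^{m,2}$, but via the explicit spectral machinery of Steinwart--Scovel: Propositions~\ref{prop:sobolev_alpha_p} and~\ref{prop:sobolev_beta} show $L_{k_X}^\beta$ is the integral operator of a new Sobolev kernel, giving $[\cH_X^\beta]^\alpha_\pi = [\cH_X]^{\alpha\beta}_\pi$ and hence the claim for all $\theta>0$; your sketch of the interpolation route would also work but is not what the paper does. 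Second, where you give a self-contained proof of the vector-valued Sobolev embedding $W^{s,2}(E_X;\cY)\hookrightarrow L_{q_*}(E_X;\cY)$ with $q_*=2d/(d-2s)$ via Minkowski's inequality and the scalar embedding applied coordinate-wise, the paper simply invokes \citet[Theorem~4.15]{kreuter2015sobolev}. Your Minkowski argument is correct (using $\|F_*\|_{W^{s,2}(E_X;\cY)}^2 = \sum_k\|f_k\|_{W^{s,2}(E_X)}^2$ from Theorem~\ref{th:vec_sobolev}), and it is a more transparent way to see why the condition $q > 2(\beta+p)/(\beta+2p-\alpha)\to 2$ is eventually met after shrinking $\varepsilon$; just be explicit that you must also take $\varepsilon < \beta$ so that the hypothesis $\alpha-p<\beta$ of sub-case 2(b) is satisfied, which your limiting argument implies but never states.
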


\begin{cor}[vSobolev Lower Rates]\label{cor:lower_sobolev}
Let $k_X$ be a kernel on $E_X$ such that assumptions \ref{assump:separable} to \ref{assump:bounded} hold and such that $\cG \simeq W^{m, 2}(E_X ; \cY)$ with $m > d/2$, $P$ be a probability distribution on $E_X \times \mathcal{Y}$ such that $\pi:=P_{E_X}$ (the marginal distribution on $E_X$) is equivalent to the Lebesgue measure $\mu$ on $E_X$. Then for all parameters $0 \leq t < s \leq 2m$, and all constants $\sigma, R,B > 0$ there exist constants $J_0, J, \theta > 0$ such that for all learning methods $D\rightarrow \hat{F}_{D}$ ($D:=\{(x_i,y_i)\}_{i=1}^n$), all $\tau > 0$, and all sufficiently large $n \geq 1$ there is a distribution $P$ defined on $E_X \times \mathcal{Y}$ used to sample $D$, with marginal distribution $\pi$ on $E_X$, such that $\|F_*\|_{W^{s, 2}(E_X ; \cY)} \leq B$ and \eqref{asst:mom} with respect to $\sigma, R$, are satisfied, and with $P^n$-probability not less than $1-J_0\tau^{1/\theta}$, \[\|[\hat{F}_D] - F_*\|^2_{W^{t,2}(E_X; \mathcal{Y})} \geq \tau^2 J n^{-\frac{s-t}{s+d/2}}.\]   
\end{cor}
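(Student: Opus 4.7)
The plan is to reduce Corollary~\ref{cor:lower_sobolev} to the general vector-valued lower bound of Theorem~\ref{theo:lower_bound} by translating the Sobolev hypotheses into its abstract assumptions. First I would set $\gamma = 0$ so that, by Definition~\ref{def:inter_ope_norm}, the interpolation norm $\|\cdot\|_{\gamma}$ reduces to $\|\cdot\|_{L_2(\pi;\cY)}$ (using also that $\pi \sim \mu$). Then I identify the two smoothness exponents as
\[
p := \frac{d}{2m}, \qquad \beta := \frac{s}{m} \in (0,2],
\]
where the range for $\beta$ uses exactly the hypothesis $0 < s \leq 2m$. With these choices the target rate of Theorem~\ref{theo:lower_bound} becomes $n^{-(\beta-\gamma)/(\beta+p)} = n^{-s/(s+d/2)}$, which is the claim.

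Next I would verify \eqref{asst:evd+}. Since $\cG \simeq W^{m,2}(E_X;\cY)$ with $\cH_X \simeq W^{m,2}(E_X)$ on the bounded, smooth domain $E_X$, and $\pi$ is equivalent to Lebesgue, the classical two-sided spectral asymptotics for the embedding $I_\pi : W^{m,2}(E_X) \hookrightarrow L_2(\pi)$ give $\mu_i \asymp i^{-2m/d}$; this is the same computation used in the scalar Sobolev setting of \citet{fischer2020sobolev} and transfers verbatim, so \eqref{asst:evd+} holds with the above $p$.

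The core step is the translation of the Sobolev source condition into the abstract condition \eqref{asst:src}. I would first combine Theorem~\ref{th:vec_sobolev} with Definition~\ref{def:inter_ope_norm} and Remark~\ref{rem:tensor_product} to write
\[
W^{s,2}(E_X;\cY) \simeq \cY \otimes W^{s,2}(E_X), \qquad [\cG]^{\beta} \simeq \cY \otimes [\cH]_X^{\beta}
\]
with norm equivalences. It then suffices to establish the scalar-valued identification $[\cH]_X^{s/m} \simeq W^{s,2}(E_X)$ with equivalent norms, which is the same interpolation-theoretic fact used in the scalar Sobolev corollary of \citet{fischer2020sobolev} (via complex/real interpolation between $L_2(\pi)$ and $W^{m,2}(E_X)$, using that $\pi \sim \mu$). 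Tensoring this scalar identification with $\operatorname{Id}_{\cY}$ yields $[\cG]^{s/m} \simeq W^{s,2}(E_X;\cY)$, and therefore $\|F_*\|_{W^{s,2}(E_X;\cY)} \leq B$ implies $\|F_*\|_{\beta} \leq B'$ for some $B' > 0$ depending only on $B$ and the equivalence constants; this is precisely \eqref{asst:src}.

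With \eqref{asst:evd+} and \eqref{asst:src} in place and \eqref{asst:mom} assumed, a direct application of Theorem~\ref{theo:lower_bound} produces constants $J_0, J, \theta > 0$ and a distribution $P$ on $E_X \times \cY$ satisfying all the requirements, such that with $P^n$-probability at least $1 - J_0 \tau^{1/\theta}$,
\[
\|[\hat F_D] - F_*\|_{L_2(\pi;\cY)}^2 \;=\; \|[\hat F_D] - F_*\|_{\gamma}^2 \;\geq\; \tau^2 J n^{-s/(s+d/2)},
\]
after absorbing the norm-equivalence constants into $J$. I expect the main obstacle to be the norm-equivalence $[\cH]_X^{s/m} \simeq W^{s,2}(E_X)$ across the full range $0 < s \leq 2m$: for $s \notin \{0, m\}$ this is not built into the definitions and requires a careful appeal to Hilbert-space interpolation and the equivalence of $\pi$ with Lebesgue measure. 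Once that is secured, the tensor-product extension to the vector-valued setting and the final reduction to Theorem~\ref{theo:lower_bound} are routine.
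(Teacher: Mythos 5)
Your overall strategy matches the paper's: reduce to Theorem~\ref{theo:lower_bound} with $\gamma=0$, $p=d/(2m)$, $\beta=s/m$, and translate the Sobolev source condition into \eqref{asst:src} via the identification $[\cG]^{s/m}\simeq W^{s,2}(E_X;\cY)$. However, there is a genuine gap in the step you yourself flag as the ``main obstacle''. You propose to obtain the scalar identification $[\cH]_X^{s/m}\simeq W^{s,2}(E_X)$ by complex/real interpolation between $L_2(\pi)$ and $W^{m,2}(E_X)$, citing \citet{fischer2020sobolev}. That technique yields the identification only for exponents $\theta=s/m\in(0,1]$, i.e.\ $s\leq m$: interpolation between two spaces can only produce intermediate smoothness, not higher smoothness. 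The corollary, however, allows the full range $0<s\leq 2m$, and the regime $m<s\leq 2m$ (equivalently $1<\beta\leq 2$) is precisely the one that cannot be reached by interpolating $L_2$ with $W^{m,2}$.

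The paper closes exactly this gap in Proposition~\ref{prop:sobolev_beta}, whose proof is not interpolation-theoretic. It instead writes $H^{s}(E_X)=[\bar H^{s\theta}(E_X)]_\nu^{\theta^{-1}}$ with $\theta=s/m>1$, then uses the operator identity $L_{k_X}^{\beta}=L_{k_X^{\beta}}$ (Proposition~\ref{prop:zhang}, building on \citealp{steinwart2012mercer}) to show that taking fractional powers of the integral operator commutes with forming the power RKHS, which extends the scalar identification to all $\theta>0$ and hence to $s\leq 2m$. The paper explicitly notes that the $s\geq m$ case is, to the authors' knowledge, new. So while your reduction scaffold is correct and aligned with the paper, attributing the key identification to Fischer--Steinwart interpolation does not suffice for $s>m$, and your proof is incomplete without an argument in the spirit of Proposition~\ref{prop:sobolev_beta}. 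The remaining ingredients you list (two-sided eigenvalue asymptotics giving \eqref{asst:evd+}, the tensor-product step via Theorem~\ref{th:vec_sobolev} and Definition~\ref{def:inter_ope_norm}, absorbing norm-equivalence constants) are all correct and match the paper.
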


Corollary~\ref{cor:upper_sobolev} and~\ref{cor:lower_sobolev} are proved by inserting the values for $p, \al$ and $\beta$ from \eqref{asst:evd}, \eqref{asst:emb} and \eqref{asst:src} into Theorem~\ref{theo:upper_rate_bis} and~\ref{theo:lower_bound}. For $\cG \simeq  W^{m, 2}(E_X ; \cY)$ and $\pi$ equivalent to the Lebesgue measure, we show in the appendix that $p = \frac{d}{2m}$, $\alpha = p + \epsilon$ for all $\epsilon>0$ (Proposition~\ref{prop:sobolev_alpha_p} in the appendix), and by Proposition~\ref{cor:inter_sobolev}, $W^{s, 2}(E_X ; \cY) \cong [\cG]^{s/m}$, which implies $\beta = s/m$. Since $\alpha - p$ is 
arbitrarily close to zero, we are in the regime $\beta + p > \alpha$ and achieve the rate $n^{-\frac{\beta-\gamma}{\beta+p}} = n^{-\frac{s-t}{s+d/2}}$ by Theorem~\ref{theo:upper_rate_bis}. 

Our results show that the RLS estimator in Eq.~(\ref{eq:hatflambda}) leads to minimax optimal rates for any $\beta \in (0, 2]$ when $\cG$ is a vector-valued Sobolev RKHS, since the rates obtained in Corollary~\ref{cor:upper_sobolev} match the lower bound in Corollary~\ref{cor:lower_sobolev}. This aligns with the recent findings obtained in \cite{zhang2023optimality} for scalar-valued Sobolev RKHS.

\section{Related Work} \label{sec:comp}
In this section, we compare our results with learning rates obtained in the literature.
Due to the large amount of available types of rates
for the scalar learning setting, we primarily focus on optimal rates
derived under comparable assumptions on the underlying distributions.
For the much less investigated vector-valued learning case, we provide a more general overview of recent results.

As discussed previously, the closest work to our results is \cite{caponnetto2007optimal}. By assuming that 
$F_* \in \cal G$, \cite{caponnetto2007optimal} provide the first analysis of RLS with Tikhonov regularization for when $\cal Y$ is infinite dimensional. In this work, the smoothness of the
Bayes function is naturally expressed as an element of the range
of a power iterate of the corresponding covariance operator
(this is known as a \textit{Hölder source condition}
in regularization theory, see e.g.\ \citealp{blanchard2018optimal}).
They show the $L_2$ learning rate $n^{-\beta/(\beta+p)}$, when $K(x,x')$ is trace class for all $x,x' \in E_X$---this condition is violated for the standard choice of kernel $K(x,x')=k_{X}(x,x')\operatorname{Id}_\cY$ whenever $\cal Y$ is infinite dimensional. For finite dimensional $\cal Y$, \citet{caponnetto2007optimal} obtain the matching lower bound. In contrast, we study the RLS algorithm beyond the well-specified setting. Our analysis covers both the well-specified case and the hard learning scenario with $F_* \notin \cal G$, without assuming the trace class condition for the kernel $K$. When $F_* \in \cal G$, 
the construction of our vector-valued interpolation space
can be interpreted as a generalisation of the Hölder source condition
(this is seen by interpreting the covariance operator in terms
of the tensor product structure of $\cG$, see \citealp{mollenhauer2022learning}).
Hence, in the well-specified case, our rates recover the same rate as \cite{caponnetto2007optimal}. Moreover, instead of $L_2$-rate, we derive a general $\gamma$-learning rate, such that the $L_2$ learning rate is recovered when $\gamma =0$. Finally, we obtain the dimension-free matching lower rate without requiring finite dimensional $\cal Y$. 

In the real-valued RLS setting, \cite{blanchard2018optimal} and \cite{fischer2020sobolev} provide the $\gamma$-learning rate with general regularization schemes for the well-specified case under the Hölder source condition \citep{blanchard2018optimal} and Tikhonov regularization for the misspecified case based on
real-valued interpolation spaces \citep{fischer2020sobolev} respectively. For Tikhonov RLS in the well-specified regime, they obtain the same $L_2$ learning rate as in \cite{caponnetto2007optimal} 
given by $n^{-\beta/(\beta+p)}$. They both provide the matching lower bound when $F_* \in \cal G$. A key difference between the two is that \cite{fischer2020sobolev} extend the learning rate analysis to the hard learning scenario by employing the embedding property. We use similar techniques to \cite{fischer2020sobolev}, and generalize the study to the vector-valued RLS setting through our construction of
vector-valued interpolation spaces. Thus, when $\cal Y$ is real-valued, our results recover the known kernel ridge regression rate of \citet{fischer2020sobolev}.

In addition to the previously mentioned work, there are some comparable results
for infinite-dimensional RLS which do not explicitly contain optimal upper rates and/or do not 
provide corresponding lower bounds.
To our knowledge, \citet{mollenhauer2022learning} derive the first upper learning rates for the infinite-dimensional RLS algorithm for the case of general regularisation schemes which are not exclusively based on vector-valued RKHSs. These rates hold for the $\gamma$-norm 
and cover our setting with the kernel $K(x,x')=k_{X}(x,x')\operatorname{Id}_\cY$ as a special case.
Technically, their approach is similar to the real-valued analysis by \citet{blanchard2018optimal}---thus, they only cover the well-specified setting
under Hölder source conditions. As a major difference compared to our results, 
\citet{mollenhauer2022learning} only consider rates up to the order $O(n^{-1/2})$
without additional assumptions about the marginal of $X$ (which are needed for faster rates). \cite{singh2019kernel} study the vector-valued RLS problem in a setting which is similar to ours. They obtain a suboptimal $O(n^{-1/4})$ upper rate in the well-specified setting, however, due to the use of a less sharp concentration bound. Finally, there are extensive studies concerning real-valued RLS \cite[see e.g.,][and references therein]{bauer2007regularization,smale2007learning,dicker2017kernel,lin2020optimal,lin2018optimal,steinwart2008support,steinwart2009optimal}. In particular, \cite{lin2018optimal} derive the $\gamma$-learning rate of $n^{- (\beta-\gamma) /\max\{\beta+p,1\}}$ using the integral operator technique, while \cite{steinwart2009optimal} obtain an $L_2$ rate of $n^{- \beta /\max\{\beta+p,1\}}$ using an empirical process technique.

\section*{Acknowledgements}
MM is partly supported by the Deutsche Forschungsgemeinschaft (DFG) through grant EXC 2046 \textit{MATH+}, Project EF1-19:  \textit{Machine Learning Enhanced Filtering Methods for Inverse Problems}. ZL, DM and AG are supported by the Gatsby Charitable Foundation. The authors wish to thank Haobo Zhang, Zikai Shen and Ingo Steinwart for helpful discussions.

\bibliography{ref}

\newpage
\appendix
\section*{Appendices}

We now report proofs which were omitted in the main text.
As discussed in Remark~\ref{rem:drop_bond}, the proof
for the upper rates given in Theorem~\ref{theo:upper_rate_bis} builds on the results from conditional mean embedding learning in \cite{lietal2022optimal}. In particular, we proceed in two steps to prove Theorem~\ref{theo:upper_rate_bis}. Firstly, in Section~\ref{sec:learning_rate}, we extend the analysis in \cite{lietal2022optimal} to general vector-valued regression setting with a bounded regression function. Secondly, in Section~\ref{sec:pro_2}, we replace this boundedness assumption by a weaker integrability condition. \changebis{Finally, using Theorem~\ref{th:Lq_embedding}, we show that this integrability condition can be removed, leading to Theorem~\ref{theo:upper_rate_bis}. The proof of Theorem~\ref{th:Lq_embedding} is provided at the end of Section~\ref{sec:pro_2}.}
In Section~\ref{sec:proof_lower_bound},
we prove the lower bound on the rates given in 
Theorem~\ref{theo:lower_bound}. Section~\ref{sec:interpolation_vector} contains the proof Theorem~\ref{th:interpolation_vector}. Section~\ref{sec:proofs_sobolev} contains the proofs for the results related to Sobolev spaces presented in Section~\ref{sec:sobolev}.
Finally, in Section~\ref{sec:auxiliary}, we collect some technical supporting results.


\section{Learning rates for bounded regression function} 
\label{sec:learning_rate}

\begin{theo}\label{theo:upper_rate}
Let $\mathcal{H}_X$ be a RKHS on $E_X$ with respect to a kernel $k_X$ such that assumptions \ref{assump:separable} to \ref{assump:bounded} hold. Let $P$ be a probability distribution on $E_X \times \mathcal{Y}$ with $\pi:=P_{E_X}$ (the marginal distribution on $E_X$). Furthermore, let the conditions \eqref{asst:evd}, \eqref{asst:emb},  \eqref{asst:mom} be satisfied for some $0 < p \leq \alpha \leq 1$ and let $B_{\infty}> 0$ be a constant with $\|F_*\|_{L_{\infty}(\pi;\cY)} \leq B_{\infty}$. Then for $0 \leq \gamma \leq 1$, if \eqref{asst:src} is satisfied with $\gamma < \beta \leq 2$,
\begin{enumerate}
    \item in the case $\beta + p \leq \alpha$ and $\lambda_n = \Theta \left(\left(n/\log^{\theta}(n)\right)^{-\frac{1}{\alpha}}\right)$ for some $\theta > 1$, for all $\tau > \log(4)$ and sufficiently large $n \geq 1$, there is a constant $J > 0$ independent of $n$ and $\tau$ such that \[\left\|[\hat{F}_{\lambda_n}] - F_*\right\|^2_{\gamma} \leq \tau^2 J\left(\frac{n}{\log ^{\theta} n}\right)^{-\frac{\beta-\gamma}{\alpha}}\] is satisfied with $P^n$-probability not less than $1-4e^{-\tau}$. 
    
    \item in the case $\beta + p > \alpha$ and $\lambda_n = \Theta \left(n^{-\frac{1}{\beta + p}}\right)$, for all $\tau > \log(4)$ and sufficiently large $n \geq 1$, there is a constant $J > 0$ independent of $n$ and $\tau$ such that \[\left\|[\hat{F}_{\lambda_n}] - F_*\right\|^2_{\gamma} \leq \tau^2 J n^{-\frac{\beta-\gamma}{\beta + p}}\] is satisfied with $P^n$-probability not less than $1-4e^{-\tau}$.
\end{enumerate}
\end{theo}

\begin{rem}
The proof of Theorem~\ref{theo:upper_rate} reveals that the index bound hidden in the phrase ``sufficiently large $n$'' just depends on the parameters and constants from \eqref{asst:evd} and \eqref{asst:emb}, on $\tau$, on a lower bound $0 < c \leq 1$ for the operator norm $c \leq \|C_{XX}\|$, and on the regularization parameter sequence $(\la_n)_{n \geq 1}$. Moreover, the constant $J$ only depends on the parameters and constants from \eqref{asst:evd}, \eqref{asst:emb},  \eqref{asst:mom}, \eqref{asst:src}, on $B_{\infty}$, and on the regularization parameter sequence $(\la_n)_{n \geq 1}$. 
\end{rem}


\paragraph{Structure of the proof.} 
Recall that $\hat{F}_{\lambda} \in \mathcal{G}$ is defined as $\hat{F}_{\lambda} := \bar{\Psi}\left(\hat{C}_{\lambda}\right)$ where $\hat{C}_{\lambda}$ is solution of Eq.~(\ref{eqn:vkrr_hs}). We introduce its population counterpart, the solution of the following problem:
\begin{IEEEeqnarray}{rCl}
C_{\lambda}:= \argmin_{C \in S_2(\mathcal{H}_{X}, \mathcal{Y})} \mathbb{E}_{P} \left\|Y -C \phi_X(X)\right\|^2_{\mathcal{Y}} + \lambda \|C\|_{S_2(\mathcal{H}_{X}, \mathcal{Y})}^2, \qquad F_{\lambda} := \bar{\Psi}\left(C_{\lambda}\right) \in \cG \nonumber.
\end{IEEEeqnarray}
It can be readily shown (see for example \citealp[Appendix D.1][]{grunewalder2012modelling} and \citealp[Corollary 7.4][]{mollenhauer2020nonparametric}) that 
\begin{equation*}
\begin{aligned}
    C_{\lambda} &= C_{YX}\left(C_{XX} + \lambda Id_{\mathcal{H}_X}\right)^{-1}, \\
    \hat{C}_{\lambda} &= \hat{C}_{YX}\left(\hat{C}_{XX} + \lambda Id_{\mathcal{H}_X} \right)^{-1},
\end{aligned}
\end{equation*}
where $Id_{\mathcal{H}_X}$ is the identity operator in $\cH_X$ and
\begin{equation*}
\begin{aligned}
    C_{XX} &= \mathbb{E}[\phi_{X}(X)\otimes \phi_{X}(X)] \qquad~~~~~ C_{YX} = \mathbb{E}[Y\otimes \phi_{X}(X)]& \\
    \hat{C}_{XX} &= \frac{1}{n}\sum_{i=1}^n\phi_{X}(x_i) \otimes \phi_{X}(x_i) \qquad\hat{C}_{YX} = \frac{1}{n}\sum_{i=1}^n y_i \otimes \phi_{X}(x_i).&
\end{aligned}
\end{equation*}
Finally, recall that $F_{*} \in L_2(\pi; \mathcal{Y})$ and $C_{*} := \Psi^{-1}\left(F_{*}\right)$ is in $S_2(L_2(\pi), \cY)$. From the definition of the vector-valued interpolation norm we introduce the following decomposition,
\begin{IEEEeqnarray}{rCl}
\left\|[\hat{F}_{\lambda}] - F_* \right\|_{\gamma} &\leq& \left\|\left[\hat{F}_{\lambda} - F_{\lambda}\right]\right\|_{\gamma} + \left\|[F_{\lambda}] -F_*\right\|_{\gamma} \nonumber \\ &=& \left\|\left[\hat{C}_{\lambda} - C_{\lambda}\right]\right\|_{S_2\left([\mathcal{H}]_X^{\gamma},\mathcal{Y}\right)} + \left\|[C_{\lambda}] -C_{*} \right\|_{S_2\left([\mathcal{H}]_X^{\gamma},\mathcal{Y}\right)} \label{eqn:risk_decom}
\end{IEEEeqnarray}

We can see that the error for the first term is mainly due to the sample approximation. We therefore refer to the first term as the \textit{Variance}.  We refer to the second term as the \textit{Bias}. Our proof of convergence of the bias adapts the proof by \citet{fischer2020sobolev}, and utilizes the fact that $C_{*}$ is Hilbert-Schmidt to obtain a sharp rate.

\subsection{Bounding the Bias}\label{sec:bias}
In this section, we establish the bound on the bias. The key insight is that due to \citet[Theorem 12.6.1]{aubin2000applied} the conditional mean function can be expressed as a Hilbert-Schmidt operator. The proof generalizes \citet[Lemma 14]{fischer2020sobolev}, which addresses the scalar case; and \citet[Theorem 6]{singh2019kernel}.

\begin{restatable}{lma}{newbias}\label{lma:new_bias}
If $F_{*} \in [\mathcal{G}]^{\beta}$ is satisfied for some $0 \leq \beta \leq 2$, then the following bound is satisfied, for all $\lambda > 0$ and $0 \leq \gamma \leq \beta$:
\begin{equation}
    \left\|[F_{\lambda}] - F_{*}\right\|_{\gamma}^{2} \leq\left\|F_{*}\right\|_{\beta}^{2} \lambda^{\beta-\gamma} \label{eq:bias_bound}
\end{equation}   
\end{restatable}

\begin{proof}
We first recall that since $F_{*} \in [\mathcal{G}]^{\beta}$, $F_{*} = \Psi\left(C_{*} \right)$ with $C_{*} \in S_2([\mathcal{H}]_X^{\beta}, \mathcal{Y})$, furthermore $F_{\lambda} = \bar{\Psi}\left(C_{\lambda}\right)$ with $C_{\lambda} \in S_2(\mathcal{H}_X, \mathcal{Y})$. Hence, $\left\|[F_{\lambda}] - F_{*}\right\|_{\gamma} = \left\|[C_{\lambda}] - C_{*}\right\|_{S_2\left([\mathcal{H}]_X^{\gamma},\mathcal{Y}\right)}$ and $\left\|F_{*}\right\|_{\beta} = \left\|C_{*}\right\|_{S_2\left([\mathcal{H}]_X^{\beta},\mathcal{Y}\right)}$. We first decompose $[C_{\lambda}]- C_{*}$, and follow this by establishing an upper bound on the bias. Since $C_{*} \in S_2([\mathcal{H}]_X^{\beta}, \mathcal{Y}) \subseteq  S_2(\overline{\text{ran}~I_{\pi}}, \mathcal{Y})$, it admits the decomposition
\begin{equation*}
    C_{*}=\sum_{i \in I} \sum_{j \in J} \check{a}_{ij} d_j \otimes [e_{i}].
\end{equation*}
where $(d_j)_{j \in J}$ is any countable basis of $\mathcal{Y}$ and $\sum_{i \in I} \sum_{j \in J} \check{a}_{ij}^2 < +\infty$ with $\check{a}_{ij}=\left\langle C_*,d_j \otimes [e_{i}] \right\rangle_{S_2(L_2(\pi), \mathcal{Y})}  = \left\langle C_* [e_i],d_j\right\rangle_{\mathcal{Y}}$ for all $i \in I, j \in J$ (see e.g.\ \citet{gretton2013introduction}, Lecture on ``testing statistical dependence''). On the other hand, $C_{\lambda} = C_{YX}\left(C_{XX} + \lambda Id_{\mathcal{H}_X} \right)^{-1}$. Since $\left(\mu_{i}^{1 / 2} e_{i}\right)_{i \in I}$ is an ONB of $\left(\operatorname{ker} I_{\pi}\right)^{\perp}$, we can complete it with an at most countable basis $\left(\bar{e}_i\right)_{i \in I'}$ (with $I \cap I' = \varnothing$) of $\operatorname{ker} I_{\pi}$ such that the union of the family forms a basis of $\mathcal{H}_X$. We get a basis of $S_2(\mathcal{H}_X, \mathcal{Y})$ through $\left(d_j \otimes f_i\right)_{i \in I \cup I', j \in J}$ where  $f_i = \mu_{i}^{1 / 2} e_{i}$ if $i \in I$ and $f_i = \bar{e}_i$ if $i \in I'$. By the spectral decomposition of $C_{XX}$ Eq.~(\ref{eq:SVD}), for $a > 0$ we then have
\begin{equation*}
    \left(C_{XX}+\lambda Id_{\mathcal{H}_X}\right)^{-a}=\sum_{i \in I}\left(\mu_{i}+\lambda\right)^{-a}\left\langle\mu_{i}^{1 / 2} e_{i}, \cdot\right\rangle_{\mathcal{H}_X} \mu_{i}^{1 / 2} e_{i}+\lambda^{-a} \sum_{i \in I'}\left\langle\bar{e}_{i}, \cdot\right\rangle_{\mathcal{H}_X} \bar{e}_{i}.
\end{equation*}

Furthermore, 
\begin{IEEEeqnarray*}{rCl}
C_{YX} &=& \mathbb{E}_{P}\left[Y \otimes \phi_{X}(X)\right] \nonumber\\
&=& \mathbb{E}_{X}\left[ \mathbb{E}_{Y|X}\left[Y\right] \otimes \phi_{X}(X)\right] \nonumber\\
&=& \mathbb{E}_{X}\left[F_{*}(X) \otimes \phi_{X}(X)\right] \nonumber\\
&=& \mathbb{E}_{X}\left[\Psi\left(C_{*}\right)(X) \otimes \phi_{X}(X)\right] \nonumber\\
&=& \sum_{i \in I} \sum_{j \in J} \check{a}_{ij} \mathbb{E}_{X}\left[\Psi\left(d_j \otimes [e_{i}]\right)(X) \otimes \phi_{X}(X)\right] \nonumber\\
&=& \sum_{i \in I} \sum_{j \in J} \check{a}_{ij} \mathbb{E}_{X}\left[[e_{i}](X) d_j \otimes \phi_{X}(X)\right].
\end{IEEEeqnarray*}
In the last step we used the explicit form of the isomorphism between $L_2(\pi;\mathcal{Y})$ and $S_2(L_2(\pi), \mathcal{Y})$ mentioned in Remark~\ref{rem:tensor_product}: $\Psi$ is characterized by $\Psi\left(g \otimes f\right) = \left(x \mapsto gf(x) \right)$, for all $g \in \mathcal{Y}, f \in L_2(\pi)$. Then, using that $\left([e_{i}]\right)_{i \in I}$ is an ONS in $L_2(\pi),$

\begin{equation*}
    [C_{\lambda}] = \sum_{i \in I} \sum_{j \in J} \check{a}_{ij}\frac{\mu_i}{\lambda + \mu_i} d_j \otimes [e_{i}],
\end{equation*}
and hence
\begin{equation*} 
    [C_{\lambda}] - C_{*} = -\sum_{i \in I} \sum_{j \in J} \check{a}_{ij}\frac{\lambda}{\lambda + \mu_i} d_j \otimes [e_{i}].
\end{equation*}
We are now ready to compute the upper bound. Parseval's identity w.r.t. the ONB $\left(d_j \otimes \mu_{i}^{\gamma / 2}\left[e_{i}\right]\right)_{i \in I, j \in J}$ of $S_2\left([\mathcal{H}]_{X}^{\gamma}, \mathcal{Y}\right)$ yields
\begin{equation*}
    \begin{aligned}
    \left\|[C_{\lambda}] - C_{*}\right\|_{S_2\left([\mathcal{H}]_X^{\gamma},\mathcal{Y}\right)}^{2} &= \left\|\sum_{i \in I} \sum_{j \in J} \check{a}_{ij}\frac{\lambda}{\lambda + \mu_i} d_j \otimes [e_{i}]\right\|_{S_2\left([\mathcal{H}]_X^{\gamma},\mathcal{Y}\right)}^{2}\\
    &= \sum_{i \in I} \sum_{j \in J} \check{a}_{ij}^2\left(\frac{\lambda}{\lambda + \mu_i}\right)^2\mu_i^{-\gamma}.
    \end{aligned}
\end{equation*}
Next we notice that,
\begin{equation*}
    \begin{aligned}
        \left( \frac{\lambda }{\mu_i + \lambda}\right)^2\mu_i^{-\gamma} &= \left( \frac{\lambda }{\mu_i + \lambda}\right)^2 \mu_i^{-\gamma} \left( \frac{\lambda}{\lambda} \frac{\mu_i+ \lambda}{\mu_i + \lambda}\right)^{\beta-\gamma}\\
        &= \lambda^{\beta-\gamma}\mu_i^{-\beta} \left( \frac{\lambda }{\mu_i + \lambda}\right)^2 \left(\frac{\mu_i}{\mu_i+ \lambda} \right)^{\beta-\gamma} \left( \frac{\mu_i+ \lambda}{ \lambda}\right)^{\beta-\gamma}\\
        &= \lambda^{\beta-\gamma}\mu_i^{-\beta} \left(\frac{\mu_i}{\mu_i+ \lambda} \right)^{\beta-\gamma} \left( \frac{ \lambda}{ \lambda+ \mu_i}\right)^{2-\beta+\gamma}\\
        &\leq \lambda^{\beta-\gamma}\mu_i^{-\beta},
    \end{aligned}
\end{equation*}
where we used $\beta - \gamma \geq 0$ and $2 - \beta + \gamma \geq 0$. Hence,
\begin{equation*}
    \begin{aligned}
    \left\|[C_{\lambda}] - C_{*}\right\|_{S_2\left([\mathcal{H}]_X^{\gamma},\mathcal{Y}\right)}^{2} &\leq \lambda^{\beta - \gamma}\sum_{i \in I} \sum_{j \in J} \check{a}_{ij}^2\mu_i^{-\beta} \\ 
    &= \lambda^{\beta-\gamma}\left\|C_{*}\right\|_{S_2\left([\mathcal{H}]_X^{\beta},\mathcal{Y}\right)}^{2} 
    \end{aligned}
\end{equation*}
\end{proof}

\subsection{Bounding the Variance}
The proof will require several lemmas in its construction, which we now present. We start with a  lemma that allows to go from the $\gamma$-norm of embedded vector-valued maps to their norm in the original Hilbert-Schmidt space. 

\begin{restatable}{lma}{NormTrasfer}\label{theo:gamma_norm_transfer}
For $0 \leq \gamma \leq 1$ and $F \in \mathcal{G}$ the inequality
\begin{IEEEeqnarray}{rCl}
\left\|[F]\right\|_{\gamma} \leq\left\|CC_{XX}^{\frac{1-\gamma}{2}} \right\|_{S_2(\mathcal{H}_X, \mathcal{Y})} \label{lma:bias_upper}
\end{IEEEeqnarray}
holds, where $C = \bar{\Psi}^{-1}(F) \in S_2(\mathcal{H}_X, \mathcal{Y})$. If, in addition, $\gamma<1$ or $C \perp \mathcal{Y} \otimes \operatorname{ker} I_{\pi}$ is satisfied, then the result is an equality.
\end{restatable}
\begin{proof}
Let us fix $F \in \mathcal{G}$, and define $C := \bar{\Psi}^{-1}(F) \in S_2(\mathcal{H}_X, \mathcal{Y})$. Since $\left(\mu_{i}^{1 / 2} e_{i}\right)_{i \in I}$ is an ONB of $\left(\operatorname{ker} I_{\pi}\right)^{\perp}$, we can complete it with a basis $\left(\bar{e}_i\right)_{i \in I'}$ (with $I \cap I' = \varnothing$) of $\operatorname{ker} I_{\pi}$ such that the union of the family forms a basis of $\mathcal{H}_X$. Let $\left(d_j\right)_{j \in J}$ be a basis of $\mathcal{Y}$, we get a basis of $S_2(\mathcal{H}_X, \mathcal{Y})$ through $\left(d_j \otimes f_i\right)_{i \in I \cup I', j \in J}$ where  $f_i = \mu_{i}^{1 / 2} e_{i}$ if $i \in I$ and $f_i = \bar{e}_i$ if $i \in I'$. Then $C$ admits the decomposition $$C=\sum_{i \in I} \sum_{j \in J} a_{ij} d_j \otimes \mu_{i}^{1 / 2} e_{i} + \sum_{i \in I'} \sum_{j \in J} a_{ij} d_j \otimes \bar{e}_{i},$$ where $a_{ij}=\left\langle C,d_j \otimes f_i \right\rangle_{S_2(\mathcal{H}_X, \mathcal{Y})}  = \left\langle C f_i,d_j\right\rangle_{\mathcal{Y}}$ for all $i \in I \cup I', j \in J$. Since $$[C]=\sum_{i \in I} \sum_{j \in J} a_{ij} d_j \otimes \mu_{i}^{1 / 2} [e_{i}],$$ with Parseval's identity w.r.t. the ONB $\left(d_j \otimes \mu_{i}^{\gamma / 2}\left[e_{i}\right]\right)_{i \in I, j \in J}$ of $S_2([\mathcal{H}]_X^{\gamma}, \mathcal{Y})$ this yields
$$
\left\|[C]\right\|_{S_2([\mathcal{H}]_X^{\gamma}, \mathcal{Y})}^{2}=\left\|\sum_{i \in I} \sum_{j \in J} a_{ij}\mu_{i}^{\frac{1-\gamma}{2}} d_j \otimes \mu_{i}^{\gamma / 2} [e_{i}]\right\|_{S_2([\mathcal{H}]_X^{\gamma}, \mathcal{Y})}^{2}=\sum_{i \in I} \sum_{j \in J} a_{ij}^2\mu_{i}^{1-\gamma}.
$$
For $\gamma<1$, the spectral decomposition of $C_{XX}$ Eq.~(\ref{eq:SVD}) together with the fact that $\left(d_j \otimes \mu_{i}^{1 / 2} e_{i}\right)_{i \in I, j \in J}$ is an ONS in $S_2(\mathcal{H}_X, \mathcal{Y})$ yields 
\begin{equation*}
    \begin{aligned}
    \left\|CC_{XX}^{\frac{1-\gamma}{2}}\right\|_{S_2(\mathcal{H}_X, \mathcal{Y})}^{2} &= \left\|C\sum_{l \in I} \mu_l^{\frac{1-\gamma}{2}} \langle\cdot, \mu_l^{\frac{1}{2}}e_l \rangle_{\mathcal{H}_{X}} \mu_l^{\frac{1}{2}}e_l\right\|_{S_2(\mathcal{H}_X, \mathcal{Y})}^{2} \\ &= \sum_{i \in I}\left\|\sum_{l \in I} \mu_l^{\frac{1-\gamma}{2}} \langle \mu_i^{\frac{1}{2}}e_i, \mu_l^{\frac{1}{2}}e_l \rangle_{\mathcal{H}_{X}} \mu_l^{\frac{1}{2}}Ce_l\right\|_{\mathcal{Y}}^{2} + \sum_{i \in I'}\left\|\sum_{l \in I} \mu_l^{\frac{1-\gamma}{2}} \langle \bar{e}_i, \mu_l^{\frac{1}{2}}e_l \rangle_{\mathcal{H}_{X}} \mu_l^{\frac{1}{2}}Ce_l\right\|_{\mathcal{Y}}^{2} \\ &= \sum_{i \in I}\left\|\mu_i^{\frac{1-\gamma}{2}} \mu_i^{\frac{1}{2}}Ce_i\right\|_{\mathcal{Y}}^{2} \\ &= \sum_{i \in I} \sum_{j \in J}\mu_i^{1-\gamma}\left\langle C\left(\mu_i^{\frac{1}{2}}e_i\right), d_j\right\rangle_{\mathcal{Y}}^{2} \\ &= \sum_{i \in I} \sum_{j \in J} a_{ij}^2 \mu_i^{1-\gamma} .
    \end{aligned}
\end{equation*}
This proves the claimed equality in the case of $\gamma<1$. For $\gamma=1$, we have $C_{XX}^{\frac{1-\gamma}{2}}=\operatorname{Id}_{\mathcal{H}_X}$ and the Pythagorean theorem together with Parseval's identity yields
\begin{equation*}
    \begin{aligned}
    \left\|CC_{XX}^{\frac{1-\gamma}{2}}\right\|_{S_2(\mathcal{H}_X, \mathcal{Y})}^{2} &=\left\|\sum_{i \in I} \sum_{j \in J} a_{ij} d_j \otimes \mu_{i}^{1 / 2} e_{i} + \sum_{i \in I'} \sum_{j \in J} a_{ij} d_j \otimes \bar{e}_{i}\right\|_{S_2(\mathcal{H}_X, \mathcal{Y})}^{2} \\ &=\left\|\sum_{i \in I} \sum_{j \in J} a_{ij} d_j \otimes \mu_{i}^{1 / 2} e_{i}\right\|_{S_2(\mathcal{H}_X, \mathcal{Y})}^{2}+\left\| \sum_{i \in I'} \sum_{j \in J} a_{ij} d_j \otimes \bar{e}_{i}\right\|_{S_2(\mathcal{H}_X, \mathcal{Y})}^{2} \\ &=\sum_{i \in I} \sum_{j \in J} a_{ij}^{2}+ \left\| \sum_{i \in I'} \sum_{j \in J} a_{ij} d_j \otimes \bar{e}_{i}\right\|_{S_2(\mathcal{H}_X, \mathcal{Y})}^{2}
    \end{aligned}
\end{equation*}
This gives the claimed equality if $C \perp \mathcal{Y} \otimes \operatorname{ker} I_{\pi}$, as well as the claimed inequality for general $C \in S_2(\mathcal{H}_X, \mathcal{Y})$. We conclude with $\|[F]\|_{\gamma} = \|[C]\|_{S_2([\mathcal{H}]_X^{\gamma}, \mathcal{Y})}$ by definition.
\end{proof}

\begin{lma}\label{lma:bias_bound_gamma}
If $F_{*} \in [\mathcal{G}]^{\beta}$ is satisfied for some $0 \leq \beta \leq 2$, then the following bounds is satisfied, for all $\lambda>0$ and $\gamma \geq 0$:
\begin{equation} \label{eq:lemma3_2}
    \left\|\left[F_{\lambda}\right]\right\|_{\gamma}^{2} \leq \left\|F_{*}\right\|_{\min \{\gamma, \beta\}}^{2} \lambda^{-(\gamma-\beta)_{+}}. 
\end{equation}
\end{lma}

\begin{proof}
By Parseval's identity
$$
\left\|\left[F_{\lambda}\right]\right\|_{\gamma}^{2}=\sum_{i \in I} \sum_{j \in J}\left(\frac{\mu_{i}}{\mu_{i}+\lambda}\right)^{2} \mu_{i}^{-\gamma} \check{a}_{ij}^{2} .
$$
where $\check{a}_{ij} = \left\langle C_{*} [e_i],d_j\right\rangle_{\mathcal{Y}}$ for all $i \in I, j \in J$ as in the proof of Lemma~\ref{lma:new_bias}. In the case of $\gamma \leq \beta$ we bound the fraction by 1 and then Parseval's identity gives us
$$
\left\|\left[F_{\lambda}\right]\right\|_{\gamma}^{2} \leq \sum_{i \in I} \sum_{j \in J} \mu_{i}^{-\gamma} \check{a}_{ij}^{2}=\left\|F_{*}\right\|_{\gamma}^{2} .
$$
In the case of $\gamma>\beta$,
$$
\left\|\left[F_{\lambda}\right]\right\|_{\gamma}^{2}=\sum_{i \in I} \sum_{j \in J}\left(\frac{\mu_{i}^{1-\frac{\gamma-\beta}{2}}}{\mu_{i}+\lambda}\right)^{2} \mu_{i}^{-\beta} \check{a}_{ij}^{2} \leq \lambda^{-(\gamma-\beta)} \sum_{i \in I} \sum_{j \in J} \mu_{i}^{-\beta} \check{a}_{ij}^{2}=\lambda^{-(\gamma-\beta)}\left\|F_{*}\right\|_{\beta}^{2} ,
$$
where  we used Parseval's identity in the equality and Lemma 25 from \cite{fischer2020sobolev}.
\end{proof}

By \eqref{asst:emb}, the inclusion map $I^{\alpha, \infty}_{\pi}: [\mathcal{H}]_{X}^{\alpha} \hookrightarrow L_{\infty}(\pi)$ has bounded norm $A > 0$ i.e. for $f \in [\mathcal{H}]_{X}^{\alpha}$, $f$ is $\pi-$a.e. bounded and $\|f\|_{\infty} \leq A\|f\|_{\alpha}$. We now show that \eqref{asst:emb} automatically implies that the inclusion operator for $[\mathcal{G}]^{\alpha}$ is bounded.

\begin{lma} \label{lma:_infinite_embedding_v_rkhs}
Under \eqref{asst:emb} the inclusion operator $\mathcal{I}_{\pi}^{\alpha, \infty}: [\mathcal{G}]^{\alpha} \hookrightarrow L_{\infty}(\pi; \mathcal{Y})$ is bounded with operator norm less than or equal to $A$.
\end{lma}
$L_{\infty}(\pi; \mathcal{Y})$ denotes the space of $\mathcal{F}_{E_X} - \mathcal{F}_{\mathcal{Y}}$ measurable $\mathcal{Y}$-valued functions (gathered by $\pi$-equivalent classes) that are essentially bounded with respect to $\pi$. $L_{\infty}(\pi; \mathcal{Y})$ is endowed with the norm $\|F\|_\infty := \inf \{c \geq 0 : \|F(x)\|_{\mathcal{Y}} \leq c \text{ for $\pi$-almost all } x \in E_X\}$.

\begin{proof}
For every $F \in [\mathcal{G}]^{\alpha}$, there is a sequence $b_{ij} \in \ell_2(I \times J)$ such that for $\pi-$almost all $x \in E_X$, \[F(x) = \sum_{i \in I, j \in J} b_{ij} d_j \mu_i^{\alpha/2}[e_i](x)\] where $(d_j)_{j \in J}$ is any orthonormal basis of $\mathcal{Y}$ and $\|F\|_{\alpha}^2  = \sum_{i \in I, j \in J} b_{ij}^2$. We consider $F \in [\mathcal{G}]^{\alpha}$ such that $\sum_{i \in I, j \in J} b_{ij}^2 \leq 1$. For $\pi-$almost all $x \in E_X$,
\begin{equation*} 
    \begin{aligned}
        \|F(x)\|_{\mathcal{Y}}^2 &= \left\|\sum_{j \in J}\left(\sum_{i \in I} b_{ij}\mu_i^{\alpha/2}[e_i](x)\right)d_j\right\|_{\mathcal{Y}}^2 \\
        &= \sum_{j \in J}\left(\sum_{i \in I} b_{ij}\mu_i^{\alpha/2}[e_i](x)\right)^2 \\
        & \leq \sum_{j \in J} \left(\sum_{i \in I}b_{ij}^2\right)\left(\sum_{i \in I}\mu_i^{\alpha}[e_i]^2(x)\right)\\
        &\leq A^2 \sum_{j \in J}\sum_{i \in I} b_{ij}^2 \\ &\leq A^2
    \end{aligned}
\end{equation*}
where we used the Cauchy-Schwarz inequality for each $j \in J$ for the first inequality and a consequence of \eqref{asst:emb} in the second inequality (see Theorem 9 in \citealp{fischer2020sobolev}). We therefore conclude $\|\mathcal{I}_{\pi}^{\alpha, \infty}\| \leq A$.
\end{proof}

Combining Lemmas \ref{lma:new_bias}, \ref{lma:bias_bound_gamma} and \ref{lma:_infinite_embedding_v_rkhs} we have the following result.
 
\begin{lma} \label{lma:F_l_bounded}
If $F_{*} \in [\mathcal{G}]^{\beta}$ and \eqref{asst:emb} are satisfied for some $0 \leq \beta \leq 2$ and $0< \alpha \leq 1$, then the following bounds are satisfied, for all $0< \lambda \leq 1$:
\begin{equation} \label{eq:lemma4_1} 
    \left\|\left[F_{\lambda}\right] - F_{*}\right\|_{L_\infty}^{2} \leq \left(\left\|F_{*}\right\|_{L_\infty} + A\|F_*\|_{\beta}\right)^2 \lambda^{\beta - \alpha},
\end{equation}
\begin{equation} \label{eq:lemma4_2} 
    \left\|\left[F_{\lambda}\right]\right\|_{L_\infty}^{2} \leq A^2\left\|F_{*}\right\|_{\min \{\alpha, \beta\}}^{2} \lambda^{-(\alpha-\beta)_{+}}.
\end{equation}
\end{lma}
\begin{proof}
For Eq.~(\ref{eq:lemma4_2}), we use Lemma~\ref{lma:_infinite_embedding_v_rkhs} and Eq.~(\ref{eq:lemma3_2}) in Lemma~\ref{lma:bias_bound_gamma}.
\begin{equation*}
    \left\|\left[F_{\lambda}\right]\right\|_{\infty}^{2} \leq A^2 \left\|\left[F_{\lambda}\right]\right\|_{\alpha}^{2} \leq A^2\left\|F_{*}\right\|_{\min \{\alpha, \beta\}}^{2} \lambda^{-(\alpha-\beta)_{+}}
\end{equation*}

To show Eq.~(\ref{eq:lemma4_1}), in the case $\beta \leq \alpha$ we use the triangle inequality, Eq.~(\ref{eq:lemma4_2}) and $\lambda \leq 1$ to obtain
\begin{equation*}
    \begin{aligned}
        \left\|\left[F_{\lambda}\right] - F_{*}\right\|_{\infty} &\leq \left\|F_{*}\right\|_{\infty} + \left\|\left[F_{\lambda}\right]\right\|_{\infty} \\ &\leq \left(\left\|F_{*}\right\|_{\infty} + A\left\|F_{*}\right\|_{\beta} \right)\lambda^{-\frac{\alpha-\beta}{2}}
    \end{aligned}
\end{equation*}
In the case $\beta > \alpha$, Eq.~(\ref{eq:lemma4_1}) is a consequence of Lemma~\ref{lma:_infinite_embedding_v_rkhs} and Eq.~(\ref{eq:bias_bound}) in Lemma~\ref{lma:new_bias} with $\gamma = \alpha$,
\begin{equation*}
    \left\|\left[F_{\lambda}\right] - F_{*}\right\|_{\infty}^{2} \leq A^2 \left\|\left[F_{\lambda}\right] - F_{*}\right\|_{\alpha}^{2} \leq A^2 \left\|F_{*}\right\|_{\beta}^{2} \lambda^{\beta-\alpha} \leq \left(\left\|F_{*}\right\|_{\infty} + A\|F_*\|_{\beta}\right)^2 \lambda^{\beta - \alpha}.
\end{equation*}
\end{proof}

\begin{theo}\label{thm:variance_bound}

Let $\mathcal{H}_X$ be a RKHS on $E_X$ with respect to a kernel $k_X$ such that assumptions \ref{assump:separable} to \ref{assump:bounded} hold. Let $P$ be a probability distribution on $E_X \times \mathcal{Y}$ with $\pi:=P_{E_X}$ (the marginal distribution on $E_X$). Furthermore, let $\|F_*\|_{\infty} < \infty$, \eqref{asst:emb} and \eqref{asst:mom} be satisfied. We define
$$
\begin{aligned}
M(\lambda) &= \left\|[F_{\lambda}]-F_*\right\|_{\infty},\\ 
\mathcal{N}(\lambda) &=\operatorname{tr}\left(C_{XX}\left(C_{XX}+\lambda \operatorname{Id}_{\cH_X}\right)^{-1}\right), \\
Q_{\lambda} &=\max \{M(\lambda), R \}, \\
g_{\lambda}& = \log \left( 2e\mathcal{N}(\lambda) \frac{\|C_{XX}\|+\lambda}{\|C_{XX}\|}\right).
\end{aligned}
$$
Then, for $0 \leq \gamma \leq 1$, $\tau \geq 1$, $\lambda > 0$ and $n \geq 8A^{2} \tau g_{\lambda} \lambda^{-\alpha}$, with probability $1-4e^{-\tau}$:
\begin{IEEEeqnarray}{rCl}
\left\|\left[\hat{C}_{\lambda} - C_{\lambda}\right]\right\|_{S_2\left([\mathcal{H}]_X^{\gamma},\mathcal{Y}\right)}^2 \leq \frac{576\tau^2}{n\lambda^{\gamma}}\left(\sigma^{2} \mathcal{N}(\lambda)+\frac{\left\|F_{*}-\left[F_{\lambda}\right]\right\|_{L_{2}(\pi; \mathcal{Y})}^{2}A^{2}}{\lambda^{\alpha}} + \frac{2Q_{\lambda}^2A^2}{n\lambda^{\alpha}}\right) \nonumber
\end{IEEEeqnarray}
\end{theo}

\begin{proof}
We first decompose the variance term as
\begin{IEEEeqnarray}{rCl}
  &&\hspace{-0.5cm} \left\|\left[\hat{C}_{\lambda} - C_{\lambda}\right]\right\|_{S_2\left([\mathcal{H}]_X^{\gamma},\mathcal{Y} \right)} \nonumber\\
  & =& \left\|\left[\hat{C}_{Y X}\left(\hat{C}_{X X}+\lambda \operatorname{Id}_{\cH_X}\right)^{-1}-C_{Y X}\left(C_{X X}+\lambda \operatorname{Id}_{\cH_X}\right)^{-1}\right]\right\|_{S_2\left([\mathcal{H}]_X^{\gamma},\mathcal{Y} \right)} \nonumber \\
&\leq& \left\|\left(\hat{C}_{Y X}\left(\hat{C}_{X X}+\lambda \operatorname{Id}_{\cH_X} \right)^{-1}-C_{Y X}\left(C_{X X}+\lambda \operatorname{Id}_{\cH_X}\right)^{-1}\right) C_{X X}^{\frac{1-\gamma}{2}}\right\|_{S_2(\mathcal{H}_X, \mathcal{Y})} \nonumber\\
& \leq& \left\|\left(\hat{C}_{Y X}-C_{Y X}\left(C_{X X}+\lambda \operatorname{Id}_{\cH_X}\right)^{-1}\left(\hat{C}_{X X}+\lambda \operatorname{Id}_{\cH_X}\right)\right)\left(C_{X X}+\lambda \operatorname{Id}_{\cH_X}\right)^{-\frac{1}{2}}\right\|_{S_2(\mathcal{H}_X, \mathcal{Y})} \label{eqn:var1} \\
&&\cdot \left\|\left(C_{X X}+\lambda \operatorname{Id}_{\cH_X} \right)^{\frac{1}{2}}\left(\hat{C}_{X X}+\lambda \operatorname{Id}_{\cH_X} \right)^{-1}\left(C_{X X}+\lambda \operatorname{Id}_{\cH_X} \right)^{\frac{1}{2}}\right\|_{\mathcal{H}_X \to \mathcal{H}_X} \label{eqn:var2} \\
&&\cdot \left\|\left(C_{X X}+\lambda \operatorname{Id}_{\cH_X} \right)^{-\frac{1}{2}}C_{X X}^{\frac{1-\gamma}{2}}\right\|_{\mathcal{H}_X \to \mathcal{H}_X}\label{eqn:var3}
\end{IEEEeqnarray}
where we used Lemma~\ref{theo:gamma_norm_transfer} in the first inequality. Eq.~(\ref{eqn:var2}) is bounded as 
in Lemma $17$ and Theorem $16$ in \cite{fischer2020sobolev},
$$
\left\|\left(C_{X X}+\lambda \operatorname{Id}_{\cH_X} \right)^{\frac{1}{2}}\left(\hat{C}_{X X}+\lambda \operatorname{Id}_{\cH_X} \right)^{-1}\left(C_{X X}+\lambda \operatorname{Id}_{\cH_X} \right)^{\frac{1}{2}}\right\|_{\mathcal{H}_X \to \mathcal{H}_X}\ \leq 3
$$
for $n \geq 8A^{2} \tau g_{\lambda} \lambda^{-\alpha}$ with probability $1-2e^{-\tau}$ for all $\tau \geq 1$. For Eq.~(\ref{eqn:var3}) we have, using Lemma 25 from \cite{fischer2020sobolev}
$$
\left\|\left(C_{X X}+\lambda \operatorname{Id}_{\cH_X} \right)^{-\frac{1}{2}}C_{X X}^{\frac{1-\gamma}{2}}\right\|_{\mathcal{H}_X \to \mathcal{H}_X}\ \leq \sqrt{\sup _{i} \frac{\mu_{i}^{1-\gamma}}{\mu_{i}+\lambda}} \leq \lambda^{-\frac{\gamma}{2}}.
$$

Finally for the bound of Eq.~(\ref{eqn:var1}) Lemma~\ref{lma:empy_concen} show that for $\tau \geq 1$, $\lambda > 0$ and $n \geq 1$ with probability $1-2e^{-\tau}$:
\begin{equation}
    \begin{aligned}
&\bigg\|\left(\hat{C}_{Y X}-C_{Y X}\left(C_{X X}+\lambda \operatorname{Id}_{\cH_X} \right)^{-1}(\hat{C}_{X X}+\lambda \operatorname{Id}_{\cH_X} )\right)\left(C_{X X}+\lambda \operatorname{Id}_{\cH_X} \right)^{-\frac{1}{2}}\bigg\|_{S_2(\mathcal{H}_X, \mathcal{Y})}^2 \\ &\leq \frac{64\tau^2}{n}\left(\sigma^{2} \mathcal{N}(\lambda)+\frac{\left\|F_{*}-\left[F_{\lambda}\right]\right\|_{L_{2}(\pi; \mathcal{Y})}^{2}A^{2}}{\lambda^{\alpha}} + \frac{2Q_{\lambda}^2A^2}{n\lambda^{\alpha}}\right).\nonumber
\end{aligned}
\end{equation}

\end{proof}

\begin{lma}\label{lma:empy_concen}
Assume the conditions in Theorem~\ref{thm:variance_bound} hold. Then for $\tau \geq 1$, $\lambda > 0$ and $n \geq 1$ with probability $1-2e^{-\tau}$:
\begin{equation} 
    \begin{aligned}
&\bigg\|\left(\hat{C}_{Y X}-C_{Y X}\left(C_{X X}+\lambda \operatorname{Id}_{\cH_X} \right)^{-1}(\hat{C}_{X X}+\lambda \operatorname{Id}_{\cH_X} )\right)\left(C_{X X}+\lambda \operatorname{Id}_{\cH_X} \right)^{-\frac{1}{2}}\bigg\|_{S_2(\mathcal{H}_X, \mathcal{Y})}^2 \\ &\leq \frac{64\tau^2}{n}\left(\sigma^{2} \mathcal{N}(\lambda)+\frac{\left\|F_{*}-\left[F_{\lambda}\right]\right\|_{L_{2}(\pi; \mathcal{Y})}^{2}A^{2}}{\lambda^{\alpha}} + \frac{2Q_{\lambda}^2A^2}{n\lambda^{\alpha}}\right). \nonumber
    \end{aligned}
\end{equation}   
\end{lma}

\begin{proof}
We begin with the decomposition
\begin{IEEEeqnarray*}{rCl}
&&\hat{C}_{Y X}-C_{Y X}\left(C_{X X}+\lambda \operatorname{Id}_{\cH_X} \right)^{-1}\left(\hat{C}_{X X}+\lambda \operatorname{Id}_{\cH_X} \right)\\
& =& \hat{C}_{Y X}-C_{Y X}\left(C_{X X}+\lambda \operatorname{Id}_{\cH_X} \right)^{-1}\left(C_{X X}+\lambda \operatorname{Id}_{\cH_X} + \hat{C}_{X X}-C_{X X}\right) \\
& =& \hat{C}_{Y X}-C_{Y X}+C_{Y X}\left(C_{X X}+\lambda \operatorname{Id}_{\cH_X} \right)^{-1}\left(C_{X X}-\hat{C}_{X X}\right) \\
& =& \hat{C}_{Y X}-C_{Y X}\left(C_{X X}+\lambda \operatorname{Id}_{\cH_X} \right)^{-1} \hat{C}_{X X}-\left(C_{Y X}-C_{Y X}\left(C_{X X}+\lambda \operatorname{Id}_{\cH_X} \right)^{-1} C_{X X}\right) \\
&=& \hat{C}_{Y X}-C_{Y X}\left(C_{X X}+\lambda \operatorname{Id}_{\cH_X} \right)^{-1} \hat{\mathbb{E}}[\phi_X(X) \otimes \phi_X(X)]-\left(C_{YX} - C_{Y X}\left(C_{X X}+\lambda \operatorname{Id}_{\cH_X} \right)^{-1} \mathbb{E}[\phi_X(X) \otimes \phi_X(X)]\right) \\
&=& \hat{\mathbb{E}}\left[\left(Y-F_{\lambda}(X)\right) \otimes \phi_X(X)\right]-\mathbb{E}\left[\left(Y-F_{\lambda}(X)\right) \otimes \phi_X(X)\right]
\end{IEEEeqnarray*}

where we denote $\hat{\mathbb{E}}[\phi_X(X) \otimes \phi_X(X)] = \frac{1}{n}\sum_{i=1}^n \phi_X(x_i) \otimes \phi_X(x_i)$. We wish to apply Theorem~\ref{theo:ope_con_steinwart} with $H = S_2(\mathcal{H}_X, \mathcal{Y})$. Consider the random variables $\xi_{0}, \xi_{2}: E_X \times \mathcal{Y} \rightarrow \mathcal{Y} \otimes \mathcal{H}_X$ defined by
\begin{IEEEeqnarray}{rCl}
&\xi_{0}(x, y)&:=\left(y-F_{\lambda}(x)\right) \otimes \phi_X(x), \label{eq:xi0}\\
&\xi_{2}(x, y)&:=\xi_{0}(x, y)\left(C_{XX}+\lambda \operatorname{Id}_{\cH_X} \right)^{-1 / 2}.  \label{eq:xi2}  
\end{IEEEeqnarray}

Since our kernel $k_X$ is bounded,
\begin{equation*}
    \begin{aligned}
        \left\|\xi_{0}(x, y)\right\|_{S_2(\mathcal{H}_X, \mathcal{Y})} &= \left\|y-F_{\lambda}(x)\right\|_{\mathcal{Y}}\|\phi_X(x)\|_{\mathcal{H}_X} \\
        &\leq \left\|y-F_{\lambda}(x)\right\|_{\cY}\kappa_X \\
        &\leq \left(\left\|y\right\|_{\cY} + \left\|F_{\lambda}\right\|_{L_{\infty}(\pi; \mathcal{Y})}\right)\kappa_X, \\
    \end{aligned}
\end{equation*}

is satisfied for $\pi-$almost all $x \in E_X$ and $F_{\lambda}$ is $\pi$-almost surely bounded by Lemma \ref{lma:F_l_bounded}. Since $y \in L_2(\pi;\cal Y)$, we have that $y \in L_1(\pi;\cal Y)$. 
This yields
\begin{equation}
\frac{1}{n} \sum_{i=1}^{n}\left(\xi_{2}\left(x_{i}, y_{i}\right)-\mathbb{E} \xi_{2}\right) = \hat{\mathbb{E}} \xi_{2}-\mathbb{E} \xi_{2} = \left(\hat{C}_{Y X}-C_{Y X}\left(C_{X X}+\lambda \operatorname{Id}_{\cH_X} \right)^{-1}\left(\hat{C}_{X X}+\lambda \operatorname{Id}_{\cH_X} \right)\right)\left(C_{X X}+\lambda \operatorname{Id}_{\cH_X} \right)^{-\frac{1}{2}}, \label{eq:noise_decom}    
\end{equation}

and therefore Eq.~(\ref{eqn:var1}) coincides with the left hand side of a Bernstein's inequality for $H$-valued random variables (Theorem~\ref{theo:ope_con_steinwart}). Consequently, it remains to bound the $m$-th moment of $\xi_{2}$, for $m \geq 2$,
\begin{equation}
\mathbb{E}\left\|\xi_{2}\right\|_{S_2(\mathcal{H}_X, \mathcal{Y})}^{m}=\int_{E_X}\left\|\left(C_{XX}+\lambda \operatorname{Id}_{\cH_X}\right)^{-1 / 2} \phi_X(x)\right\|_{\mathcal{H}_X}^{m} \int_{\mathcal{Y}}\left\|y-F_{\lambda}(x)\right\|_{\mathcal{Y}}^{m} p(x, \mathrm{~d} y) \mathrm{d} \pi(x). \label{eq:xi_decom} 
\end{equation}

First, we consider the inner integral. Using the triangle inequality and \eqref{asst:mom}, 
\begin{IEEEeqnarray}{rCl}
\int_{\mathcal{Y}}\left\|y-F_{\lambda}(x)\right\|_{\mathcal{Y}}^{m} p(x, \mathrm{~d} y) &\leq& 2^{m-1}\left(\left\|\operatorname{Id}_{\mathcal{Y}}-F_{*}(x)\right\|_{L_{m}(p(x, \cdot); \mathcal{Y})}^{m}+\left\|F_{*}(x)-F_{\lambda}(x)\right\|_{\mathcal{Y}}^{m}\right) \nonumber \\
& \leq &\frac{1}{2} m !(2 R)^{m-2} 2 \sigma^{2}+2^{m-1}\left\|F_{*}(x)-F_{\lambda}(x)\right\|_{\mathcal{Y}}^{m}. \label{eq:xi2_decom} 
\end{IEEEeqnarray}
for $\pi$-almost all $x \in E_X$. If we plug this bound into the outer integral and use the abbreviation $h_{x}:=\left(C_{XX}+\lambda\operatorname{Id}_{\cH_X}\right)^{-1 / 2} \phi_X(x)$ we get
\begin{equation} \label{eq:m_moment_1}
\begin{aligned}
\mathbb{E}\left\|\xi_{2}\right\|_{S_2(\mathcal{H}_X, \mathcal{Y})}^{m} \leq \frac{1}{2} m !(2 R)^{m-2} 2 \sigma^{2} \int_{E_X}\left\|h_{x}\right\|_{\mathcal{H}_X}^{m} \mathrm{~d} \pi(x) +2^{m-1} \int_{E_X}\left\|h_{x}\right\|_{\mathcal{H}_X}^{m}\left\|F_{*}(x)-F_{\lambda}(x)\right\|_{\mathcal{Y}}^{m} \mathrm{~d} \pi(x).
\end{aligned}
\end{equation}
Using Lemma~\ref{theo:h_bound}, we can bound the first term in Eq.~(\ref{eq:m_moment_1}) above by 
$$
\begin{aligned}
\frac{1}{2} m !(2 R)^{m-2} 2 \sigma^{2}\int_{E_X}\left\|h_{x}\right\|_{\mathcal{H}_X}^{m} \mathrm{~d} \pi(x) & \leq \frac{1}{2} m !(2 R)^{m-2} 2 \sigma^{2}\left(\frac{A}{\lambda^{\alpha / 2}}\right)^{m-2} \int_{E_X}\left\|h_{x}\right\|_{\mathcal{H}_X}^{2} \mathrm{~d} \pi(x) \\
&=\frac{1}{2}m !\left(\frac{2RA}{\lambda^{\alpha / 2}}\right)^{m-2} 2\sigma^{2} \mathcal{N}(\lambda) \\
&\leq \frac{1}{2} m !\left(\frac{2Q_{\lambda} A}{\lambda^{\alpha / 2}}\right)^{m-2} 2 \sigma^{2} \mathcal{N}(\lambda)
\end{aligned}
$$
where we only used $R \leq Q_{\lambda}$ in the last step. Again, using Lemma~\ref{theo:h_bound}, the second term in Eq.~(\ref{eq:m_moment_1}) can be bounded by
$$
\begin{aligned}
& 2^{m-1} \int_{E_X}\left\|h_{x}\right\|_{\mathcal{H}_X}^{m}\left\|F_{*}(x)-F_{\lambda}(x)\right\|_{\mathcal{Y}}^{m} \mathrm{~d} \pi(x) \\
\leq & \frac{1}{2}\left(\frac{2A}{\lambda^{\alpha / 2}}\right)^{m}M(\lambda)^{m-2} \int_{E_X}\left\|F_{*}(x) - F_{\lambda}(x)\right\|_{\mathcal{Y}}^{2} \mathrm{~d} \pi(x) \\
=& \frac{1}{2}\left(\frac{2AM(\lambda)}{\lambda^{\alpha / 2}}\right)^{m-2}\left\|F_{*}-\left[F_{\lambda}\right]\right\|_{L_{2}(\pi; \mathcal{Y})}^{2} \frac{4A^2}{\lambda^{\alpha}} \\
\leq & \frac{1}{2} m !\left(\frac{2Q_{\lambda}A}{\lambda^{\alpha / 2}}\right)^{m-2}\left\|F_{*}-\left[F_{\lambda}\right]\right\|_{L_{2}(\pi; \mathcal{Y})}^{2} \frac{2A^2}{\lambda^{\alpha}},
\end{aligned}
$$
where we only used $M(\lambda) \leq Q_{\lambda}$ and $2 \leq m!$ in the last step. Finally, we get
\begin{equation}\label{eq:xi2_m}
\mathbb{E}\left\|\xi_{2}\right\|_{S_2(\mathcal{H}_X, \cY)}^{m} \leq \frac{1}{2} m !\left(\frac{2Q_{\lambda}A}{\lambda^{\alpha / 2}}\right)^{m-2} 2\left(\sigma^{2} \mathcal{N}(\lambda)+\left\|F_{*}-\left[F_{\lambda}\right]\right\|_{L_{2}(\pi; \mathcal{Y})}^{2} \frac{A^{2}}{\lambda^{\alpha}}\right)    
\end{equation}

and an application of Bernstein's inequality from Theorem~\ref{theo:ope_con_steinwart} with $L=2Q_{\lambda}A\lambda^{-\alpha / 2}$ and  \\ $\sigma^{2}=2\left(\sigma^{2} \mathcal{N}(\lambda)+\left\|F_{*}-\left[F_{\lambda}\right]\right\|_{L_{2}(\pi; \mathcal{Y})}^{2}A^{2}\lambda^{-\alpha}\right)$ yields the bound. Putting all the terms together, we obtain our result.    
\end{proof}

\subsection{Learning Rates - Proof of Theorem \ref{theo:upper_rate} }\label{sec:lr}
In this section, we aim to establish our upper bound on the learning rate for vector-valued regression by combining the learning rates obtained for the bias and variance.

Let us fix some $\tau \geq 1$ and a lower bound $0 < c \leq 1$ with $c \leq \|C_{XX}\|$. We first show that Theorem~\ref{thm:variance_bound} is applicable. To this end, we prove that there is an index bound $n_0 \geq 1$ such that $n \geq 8A^{2} \tau g_{\lambda_n} \lambda_n^{-\alpha}$ is satisfied for all $n \geq n_0$. Since $\lambda_n \rightarrow 0$ we choose $n_0' \geq 1$ such that $\lambda_n \leq c \leq \min\{1, \|C_{XX}\|\}$ for all $n \geq n_0'$. We get for $n \geq n_0'$, 
\begin{equation}
\begin{aligned}
\frac{8A^{2} \tau g_{\lambda_{n}} \lambda_{n}^{-\alpha}}{n} &=\frac{8A^{2} \tau \lambda_{n}^{-\alpha}}{n} \cdot \log \left(2 e \mathcal{N}\left(\lambda_{n}\right) \frac{\left\|C_{XX}\right\|+\lambda_{n}}{\left\|C_{XX}\right\|}\right) \nonumber\\
& \leq \frac{8A^{2} \tau \lambda_{n}^{-\alpha}}{n} \cdot \log \left(4 D e \lambda_{n}^{-p}\right) \nonumber \\
&=8A^{2}\tau \left(\frac{\log\left(4 D e\right) \lambda_{n}^{-\alpha}}{n}+\frac{p \lambda_{n}^{-\alpha} \log \lambda_{n}^{-1}}{n}\right) \label{eq:a_l}
\end{aligned}
\end{equation}
where the second step uses Lemma~\ref{theo:h_bound}. Hence, it is enough to show $\frac{\log(\lambda_n^{-1})}{n\lambda_n^{\alpha}} \rightarrow 0$. We consider the cases $\beta + p \leq \alpha$ and $\beta + p > \alpha$.
\begin{itemize} 
    \item $\beta + p \leq \alpha.$ By substituting that $\lambda_n = \Theta \left(\left(\frac{n}{\log^{\theta} n}\right)^{-\frac{1}{\alpha}}\right)$ for some $\theta > 1$ we have \[\frac{\lambda_{n}^{-\alpha} \log \lambda_{n}^{-1}}{n}=\Theta\left(\frac{\log(n)}{n}\frac{n}{\log^{\theta}(n)}\right)  = \Theta \left( \frac{1}{\log^{\theta-1}(n)}\right) \rightarrow 0, \text{ as }~n \rightarrow \infty.\]
    \item $\beta + p > \alpha.$ By substituting that $\lambda_n = \Theta \left(n^{-\frac{1}{\beta + p}}\right)$ and using $1 - \frac{\alpha}{\beta + p} > 0$ we have \[\frac{\lambda_{n}^{-\alpha} \log \lambda_{n}^{-1}}{n}=\Theta\left(\frac{\log(n)}{n}n^{\frac{\alpha}{\beta + p}}\right)  = \Theta \left( \frac{\log(n)}{n^{1-\frac{\alpha}{\beta + p}}}\right) \rightarrow 0, \text{ as }~n \rightarrow \infty.\]
\end{itemize}

Consequently, there is a $n_0 \geq n_0'$ with $n \geq 8A^{2} \log \tau g_{\lambda_n} \lambda_n^{-\alpha}$ for all $n \geq n_0$. Moreover, $n_0$ just depends on $\lambda_n, c, D, \tau, A$, and on the parameters $\alpha,p$. 

Let $n \geq n_0$ be fixed. By Theorem~\ref{thm:variance_bound}, we have 
\begin{IEEEeqnarray*}{rCl}
\left\|\left[\hat{C}_{\lambda_n} - C_{\lambda_n}\right]\right\|_{S_2([\mathcal{H}]_X^{\gamma}, \mathcal{Y})}^2 \leq \frac{576\tau^2}{n\lambda_n^{\gamma}}\left(\sigma^{2} \mathcal{N}(\lambda_n)+\frac{\left\|F_{*}-\left[F_{\lambda}\right]\right\|_{L_{2}(\pi; \mathcal{Y})}^{2}A^{2}}{\lambda_n^{\alpha}} + \frac{2Q_{\lambda_n}^2A^2}{n\lambda_n^{\alpha}}\right).
\end{IEEEeqnarray*}

Using Lemma~\ref{theo:h_bound} and Lemma~\ref{lma:bias_bound_gamma} with $\gamma=0$, we have
\begin{IEEEeqnarray}{rCl}
\left\|[\hat{C}_{\lambda_n}-C_{\lambda_n}]\right\|_{S_2([\mathcal{H}]_X^{\gamma}, \mathcal{Y})}^2 \leq \frac{576\tau^2}{n\lambda_n^{\gamma}}\left(\sigma^{2} D\lambda_n^{-p}+A^2\|F_*\|_{\beta}^2\lambda_n^{\beta-\alpha} + \frac{2Q_{\lambda_n}^2A^2}{n\lambda_n^{\alpha}}\right)\nonumber
\end{IEEEeqnarray}
For the last term, using the definition of $Q_{\lambda}$ in Theorem~\ref{thm:variance_bound} with Lemma~\ref{lma:F_l_bounded} and $\lambda_n \leq 1,$ we get 
\begin{IEEEeqnarray}{rCl}
Q_{\lambda_n}^2 &=& \max\{R^2, \left\|[F_{\lambda}]-F_*\right\|_{\infty}^2\} \nonumber \\ &\leq& \max\left\{R^2, \left(\left\|F_{*}\right\|_{\infty} + A \|F_*\|_{\beta}\right)^2 \lambda_n^{-(\alpha-\beta)}\right\} \nonumber \\
&\leq& K_0 \lambda_n^{-(\alpha-\beta)_+}, \nonumber
\end{IEEEeqnarray}
where $K_0 := \max\left\{R^2, \left(B_{\infty} + A\|F_*\|_{\beta}\right)^2\right\}$. Thus, 
\begin{IEEEeqnarray}{rCl}
\left\|[\hat{C}_{\lambda_n}-C_{\lambda_n}]\right\|_{S_2([\mathcal{H}]_X^{\gamma}, \mathcal{Y})}^2 \leq \frac{576\tau^2}{n\lambda_n^{\gamma}}\left(\sigma^2 D\lambda_n^{-p}+A^2\|F_*\|_{\beta}^2\lambda_n^{\beta-\alpha} + 2A^2K_0\frac{1}{n\lambda_n^{\alpha +(\alpha-\beta)_{+}}}\right) \label{var_B_3}.
\end{IEEEeqnarray}

For the first and second terms in the bracket, we use again the fact that $\lambda_n \leq 1,$ and get
\[D\sigma^2\lambda_n^{-p} + A^2\|F_*\|_{\beta}^2 \lambda_n^{-(\alpha-\beta)} \leq \left(D\sigma^2 + A^2\|F_*\|_{\beta}^2\right)\max\{\lambda_n^{-p}, \lambda_n^{-(\alpha-\beta)}\} \leq K_1 \lambda_n^{-\max\{p,\alpha-\beta\}}\]
with $K_1 := D\sigma^2 + A^2\|F_*\|_{\beta}^2$. We now have 
\begin{IEEEeqnarray}{rCl}
\left\|[\hat{C}_{\lambda_n}-C_{\lambda_n}]\right\|_{S_2([\mathcal{H}]_X^{\gamma}, \mathcal{Y})}^2 &\leq & \frac{576\tau^2}{n\lambda_n^{\gamma}}\left(K_1 \lambda_n^{-\max\{p,\alpha-\beta\}} + 2A^2K_0\frac{1}{n\lambda_n^{\alpha +(\alpha-\beta)_{+}}}\right)\nonumber\\
& = & \frac{576\tau^2}{n\lambda_n^{\gamma+\max\{p,\alpha-\beta\}}}\left(K_1 + 2A^2K_0\frac{1}{n\lambda_n^{\alpha +(\alpha-\beta)_{+}-\max\{p,\alpha-\beta\}}}\right).\nonumber
\end{IEEEeqnarray}
Again, we treat the cases $\beta + p \leq \alpha$ and $\beta + p > \alpha$ separately.
\begin{itemize}
    \item $\beta + p \leq \alpha$. In this case we have \[\alpha +(\alpha-\beta)_{+}-\max\{p,\alpha-\beta\} = \alpha.\] Since \(\lambda_n = \Theta \left(\left(\frac{n}{\log^{\theta} n}\right)^{-\frac{1}{\alpha}}\right),\) for some $\theta > 1$ we therefore have \[\frac{1}{n\lambda_n^{\alpha +(\alpha-\beta)_{+}-\max\{p,\alpha-\beta\}}} = \frac{1}{n\lambda^{\alpha}} = \Theta\left( \frac{1}{\log^{\theta}n}\right).\]
    \item $\beta + p > \alpha$. We have $p > \alpha - \beta$ and \(\lambda_n = \Theta \left(n^{-\frac{1}{\beta + p}}\right),\) and hence \[\frac{1}{n\lambda_n^{\alpha +(\alpha-\beta)_{+}-\max\{p,\alpha-\beta\}}} = \frac{1}{n\lambda_n^{\alpha+(\alpha-\beta)_{+} -p}} = \Theta\left(\left(\frac{1}{n}\right)^{1-\frac{\alpha+(\alpha-\beta)_{+}-p}{\beta+p}}\right).\] Using $p > \alpha - \beta$ again gives us $$1-\frac{\alpha+(\alpha-\beta)_{+}-p}{\beta+p} = \frac{2p - (\alpha-\beta)_{+} - (\alpha - \beta)}{\beta+p} > 0.$$
\end{itemize}
As such, there is a constant $K_2 > 0$ with
\[\left\|[\hat{F}_{\lambda_n} - F_{\lambda_n}]\right\|_{\gamma}^2  =\left\|[\hat{C}_{\lambda_n}-C_{\lambda_n}]\right\|_{S_2([\mathcal{H}]_X^{\gamma}, \mathcal{Y})}^2 \leq 576 \frac{\tau^2}{n\lambda_n^{\gamma+\max\{p,\alpha-\beta\}}}\left(K_1 + 2A^2K_0K_2\right)\] for all $n \geq n_0.$ Defining $K_3 := 576(K_1 + 2A^2K_0K_2)$, and using the bias-variance splitting from Eq.~(\ref{eqn:risk_decom}) and Lemma~\ref{lma:new_bias}, we have

\begin{IEEEeqnarray}{rCl}
\left\|[\hat{F}_{\lambda_n}]-F_*\right\|_{\gamma}^2 &\leq & 2\|C_{*}\|_{S_2([\mathcal{H}]_X^{\beta}, \mathcal{Y})}^2 \lambda_n^{\beta-\gamma} + 2K_3\frac{\tau^2}{n\lambda_n^{\gamma+\max\{p,\alpha-\beta\}}}\nonumber\\
&\leq & \tau^2 \lambda_n^{\beta-\gamma}\left(2\|C_{*}\|_{S_2([\mathcal{H}]_X^{\beta}, \mathcal{Y})}^2+2K_3 \frac{1}{n\lambda_n^{\max\{\beta+p,\alpha\}}}\right), \nonumber
\end{IEEEeqnarray}
where we used $\tau \geq 1$. Since in both cases $\beta + p \leq \alpha$ and $\beta + p > \alpha$, $\lambda_n \succcurlyeq n^{-1/\max\{\alpha, \beta + p\}}$ there is some constant $J>0$ such that \[\left\|[\hat{F}_{\lambda_n}]-F_*\right\|_{\gamma}^2 \leq  \tau^2 J \lambda_n^{\beta - \gamma}\] for all $n \geq n_0$.


\section{Proof of Theorem~\ref{theo:upper_rate_bis}}\label{sec:pro_2}
In this section, we prove our main results, Theorem~\ref{theo:upper_rate_bis}. The case where $\beta \geq \alpha$ is identical to Theorem~\ref{theo:upper_rate}. Hence we will only focus on the rate when $\beta < \alpha$ without assuming the boundedness of $F_*$. We adopt the same risk decomposition as in Eq.~\eqref{eqn:risk_decom}. The bias is upper bounded in the same way as in Section~\ref{sec:bias}. For the variance, we have the following results.

\begin{theo}\label{thm:variance_bound_q}
Let $\mathcal{H}_X$ be a RKHS on $E_X$ with respect to a kernel $k_X$ such that assumptions \ref{assump:separable} to \ref{assump:bounded} hold. Let $P$ be a probability distribution on $E_X \times \mathcal{Y}$ with $\pi:=P_{E_X}$ (the marginal distribution on $E_X$). Furthermore, let \eqref{asst:emb}, \eqref{asst:src} and \eqref{asst:mom} be satisfied. Suppose that $F_* \in L_q(\pi;\cal Y)$ and $\|F_*\|_{L_q} \leq C_q < \infty$ for some $q \geq 2$. Denote $\Omega_0 = \left\{x \in E_X: \|F_*(x) \|_{\cY} \leq t \right\}$ and \[Q(t,\lambda) := \max \{t+ 2A\left\|F_{*}\right\|_{\beta} \lambda^{\frac{\be-\al}{2}},R\}.\] Then, for $0 \leq \gamma \leq 1$, $\tau \geq 1$, $\lambda > 0$ and $n \geq 8A^{2} \tau g_{\lambda} \lambda^{-\alpha}$, with probability $1 - 4e^{-\tau}-\tau_n$ with $\tau_n = 1-\left(1-\frac{C_q^q}{t^q}\right)^n$:
\begin{IEEEeqnarray}{rCl}
\left\|\left[\hat{C}_{\lambda} - C_{\lambda}\right]\right\|_{S_2\left([\mathcal{H}]_X^{\gamma},\mathcal{Y}\right)}^2 &\leq &\frac{1152\tau^2}{n\lambda^{\gamma}}\left(\sigma^{2} \mathcal{N}(\lambda)+\frac{\left\|F_{*}-\left[F_{\lambda}\right]\right\|_{L_{2}(\pi; \mathcal{Y})}^{2}A^{2}}{\lambda^{\alpha}} + \frac{2Q(t,\lambda)^2A^2}{n\lambda^{\alpha}}\right)\nonumber \\
&&+\frac{18}{\lambda^{\gamma}}\left(\sigma^{2} \mathcal{N}(\lambda)+\left\|F_{*}-\left[F_{\lambda}\right]\right\|_{L_{2}(\pi; \mathcal{Y})}^{2} \frac{A^{2}}{\lambda^{\alpha}}\right) \frac{C_q^q}{t^{q}},\nonumber
\end{IEEEeqnarray}
where $g_{\la}$ is defined in Theorem~\ref{thm:variance_bound}.
\end{theo}
\begin{proof}
The proof adopts the same strategy as in Theorem~\ref{thm:variance_bound}. The difference is that for Eq.~(\ref{eqn:var1}), instead of using Lemma~\ref{lma:empy_concen}, we use Lemma~\ref{lma:empy_concen_q} below.
\end{proof}

Lemma~\ref{lma:empy_concen} provides the upper bound under the assumption that $\|F_*\|_{L_{\infty}(\pi; \cY)} < \infty$. For $\beta \geq \alpha$ this assumption is automatically satisfied by the condition \eqref{asst:emb}, but for $\beta < \alpha$, this assumption remains crucial. However, as indicated in \cite{zhang2023optimality}, in the scalar-valued setting, when $\alpha - p < \be < \alpha$, this assumption can be replaced by an assumption of the form $\|F_*\|_{L_{q}(\pi; \cY)} < \infty$ for some $q\geq 2$. Below, we generalise their technique to the vector-valued setting.


\begin{lma}\label{lma:empy_concen_q}
Assume the conditions in Theorem~\ref{thm:variance_bound_q} holds. Then for $\tau \geq 1$, $\lambda > 0$ and $n \geq 1$ with probability over $1 - 2e^{-\tau}-\tau_n$ with $\tau_n = 1-\left(1-\frac{C_q^q}{t^q}\right)^n$,
\begin{equation} 
    \begin{aligned}
&\bigg\|\left(\hat{C}_{Y X}-C_{Y X}\left(C_{X X}+\lambda \operatorname{Id}_{\cH_X} \right)^{-1}(\hat{C}_{X X}+\lambda \operatorname{Id}_{\cH_X} )\right)\left(C_{X X}+\lambda \operatorname{Id}_{\cH_X} \right)^{-\frac{1}{2}}\bigg\|_{S_2(\mathcal{H}_X, \mathcal{Y})}^2 \\ &\leq \frac{384\tau^2}{n}\left(\sigma^{2} \mathcal{N}(\lambda)+\frac{\left\|F_{*}-\left[F_{\lambda}\right]\right\|_{L_{2}(\pi; \mathcal{Y})}^{2}A^{2}}{\lambda^{\alpha}} + \frac{2Q(t,\lambda)^2A^2}{n\lambda^{\alpha}}\right) +6\left(\sigma^{2} \mathcal{N}(\lambda)+\left\|F_{*}-\left[F_{\lambda}\right]\right\|_{L_{2}(\pi; \mathcal{Y})}^{2} \frac{A^{2}}{\lambda^{\alpha}}\right) \frac{C_q^q}{t^{q}}. \label{eq:var_q}
    \end{aligned}
\end{equation}  
\end{lma}

\begin{proof}
Denote $\Omega_0 := \left\{x \in E_X: \|F_*(x) \|_{\cY} \leq t \right\}$ and $\Omega_1 = E_X\backslash\Omega_0$. Since $\|F_*\|_{L_q} \leq C_q$, we have 
\begin{equation} 
\P(X \in \Omega_1) = \P(\|F_*(X)\|_{\cY} >t)\leq \frac{\mathbb{E}\left[\|F_*(X)\|^q_{\cY}\right]}{t^q} \leq \frac{C_q^q}{t^q}.
\end{equation}

Let $\xi_2(x,y)$ be defined as in Eq.~\eqref{eq:xi2}. As shown before in Eq.~\eqref{eq:noise_decom}, we have
$$
\left(\hat{C}_{Y X}-C_{Y X}\left(C_{X X}+\lambda \operatorname{Id}_{\cH_X} \right)^{-1}\left(\hat{C}_{X X}+\lambda \operatorname{Id}_{\cH_X} \right)\right)\left(C_{X X}+\lambda \operatorname{Id}_{\cH_X} \right)^{-\frac{1}{2}}=\frac{1}{n} \sum_{i=1}^{n}\left(\xi_{2}\left(x_{i}, y_{i}\right)-\mathbb{E} \xi_{2}\right).
$$
Decomposing $\xi_2 = \xi_2\mathbbm{1}_{x\in \Omega_0} + \xi_2\mathbbm{1}_{x\in \Omega_1} $, we have

\begin{IEEEeqnarray*}{rCl}
\left\| \frac{1}{n}\sum_{i=1}^{n}\xi_{2}\left(x_{i}, y_{i}\right)-\mathbb{E} \xi_{2} \right\|^2_{S_2(\mathcal{H}_X, \mathcal{Y})} \leq&& 3\underbrace{\left\| \frac{1}{n}\sum_{i=1}^{n}\xi_{2}\left(x_{i}, y_{i}\right)\mathbbm{1}_{x_i\in \Omega_0}-\mathbb{E}\left[ \xi_{2}\mathbbm{1}_{X \in \Omega_0}\right] \right\|^2_{S_2(\mathcal{H}_X, \mathcal{Y})} }_{I}  \\
&&+ 3\underbrace{\left\| \frac{1}{n}\sum_{i=1}^{n}\xi_{2}\left(x_{i}, y_{i}\right)\mathbbm{1}_{x_i\in \Omega_1} \right\|^2_{S_2(\mathcal{H}_X, \mathcal{Y})}}_{II} + 3\underbrace{\left\| \mathbb{E}\left[ \xi_{2}\mathbbm{1}_{X\in \Omega_1} \right]\right\|^2_{S_2(\mathcal{H}_X, \mathcal{Y})}
}_{III}.     
\end{IEEEeqnarray*}
For term I, we employ Proposition~\ref{prop:bounded_concen} and obtain that for any $\tau \geq 1$, with probability over $1 - 2e^{-\tau}$, 
\begin{equation} 
    \begin{aligned}
\bigg\|\frac{1}{n} \sum_{i=1}^{n}\left(\xi_{2}\left(x_{i}, y_{i}\right)\mathbbm{1}_{x_i\in \Omega_0}-\mathbb{E} \xi_{2}\mathbbm{1}_{x\in \Omega_0}\right)\bigg\|_{S_2(\mathcal{H}_X, \mathcal{Y})}^2 \leq \frac{64\tau^2}{n}\left(\sigma^{2} \mathcal{N}(\lambda)+\frac{\left\|F_{*}-\left[F_{\lambda}\right]\right\|_{L_{2}(\pi; \mathcal{Y})}^{2}A^{2}}{\lambda^{\alpha}} + \frac{2Q(t,\lambda)^2A^2}{n\lambda^{\alpha}}\right):=C_{I}. \nonumber
    \end{aligned}
\end{equation}   
For term II, we have
\begin{IEEEeqnarray*}{rCl}
\tau_n :=\P(II > C_I) &\leq & \P(\exists x_i,~\text{s.t.}~ x_i\in \Omega_1) = 1-\P(x_i\in\Omega_0, \forall i =[n])  \\ 
&=& 1-\P(X\in \Omega_0)^n\\
&=& 1-\P(\|F_*(X)\|_{\cY} \leq t)^n\\
&\leq & 1-\left(1-\frac{C_q^q}{t^q}\right)^n.
\end{IEEEeqnarray*}

For term III, we have 
\begin{IEEEeqnarray*}{rCl}
\left\| \mathbb{E} \xi_{2}\mathbbm{1}_{X\in \Omega_1} \right\|^2_{S_2(\mathcal{H}_X, \mathcal{Y})} &\leq & \mathbb{E}\left[\left\| \xi_{2} \right\|_{S_2(\mathcal{H}_X \mathcal{Y})}\mathbbm{1}_{X\in \Omega_1}\right]^2\\
&\leq &  \mathbb{E}\left[\left\|  \xi_{2}\right\|^2_{S_2(\mathcal{H}_X \mathcal{Y})}\right] \P(X \in \Omega_1)\\
&\leq & 2\left(\sigma^{2} \mathcal{N}(\lambda)+\left\|F_{*}-\left[F_{\lambda}\right]\right\|_{L_{2}(\pi; \mathcal{Y})}^{2} \frac{A^{2}}{\lambda^{\alpha}}\right) C_q^qt^{-q}.
\end{IEEEeqnarray*}
where for the last inequality, we used Eq.~\eqref{eq:xi2_m}.
\end{proof}



\begin{prop}\label{prop:bounded_concen}
Suppose that conditions~\eqref{asst:emb},~\eqref{asst:src} and~\eqref{asst:mom} are satisfied with $0 \leq \alpha \leq 1$ and $\beta \in(0,2]$. Let $\xi_2(x,y)$ be defined as in Eq.~\eqref{eq:xi2} and $\Omega_0 = \left\{x \in E_X: \|F_*(x) \|_{\cY} \leq t \right\}$. We have for any $\tau \geq 1$, with probability over $1 - 2e^{-\tau}$, 
\begin{equation} 
    \begin{aligned}
\bigg\|\frac{1}{n} \sum_{i=1}^{n}\left(\xi_{2}\left(x_{i}, y_{i}\right)\mathbbm{1}_{x_i\in \Omega_0}-\mathbb{E} \xi_{2}\mathbbm{1}_{X\in \Omega_0}\right)\bigg\|_{S_2(\mathcal{H}_X, \mathcal{Y})}^2 \leq \frac{64\tau^2}{n}\left(\sigma^{2} \mathcal{N}(\lambda)+\frac{\left\|F_{*}-\left[F_{\lambda}\right]\right\|_{L_{2}(\pi; \mathcal{Y})}^{2}A^{2}}{\lambda^{\alpha}} + \frac{2Q(t,\lambda)^2A^2}{n\lambda^{\alpha}}\right). \nonumber
    \end{aligned}
\end{equation}   
\end{prop}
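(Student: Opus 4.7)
My plan is to mimic the proof of Lemma~\ref{lma:empy_concen}, applying a Bernstein-type concentration inequality for Hilbert-space-valued random variables (Theorem~\ref{theo:ope_con_steinwart}) to the truncated variable $\widetilde{\xi}_2(x,y) := \xi_2(x,y)\mathbbm{1}_{x \in \Omega_0}$ in place of $\xi_2$. The point of the truncation is that on $\Omega_0$ we have a pointwise control $\|F_*(x)\|_{\cY} \leq t$, which plays the role of the boundedness assumption $\|F_*\|_{L_\infty(\pi;\cY)} \leq B_\infty$ used in Lemma~\ref{lma:empy_concen}.

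First, I would verify that $\widetilde{\xi}_2$ is Bochner-integrable and proceed exactly as in Eq.~\eqref{eq:xi_decom}--\eqref{eq:xi2_decom} to split
\[
\mathbb{E}\|\widetilde{\xi}_2\|_{S_2(\cH_X,\cY)}^m
= \int_{\Omega_0} \|h_x\|_{\cH_X}^m \int_{\cY}\|y - F_\lambda(x)\|_\cY^m\, p(x, dy)\, d\pi(x),
\]
with $h_x := (C_{XX} + \lambda \operatorname{Id}_{\cH_X})^{-1/2}\phi_X(x)$. By \eqref{asst:mom} and the triangle inequality the inner integral is bounded by $\tfrac{1}{2}m!(2R)^{m-2}\cdot 2\sigma^2 + 2^{m-1}\|F_*(x) - F_\lambda(x)\|_{\cY}^m$. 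On $\Omega_0$, the triangle inequality gives $\|F_*(x) - F_\lambda(x)\|_{\cY} \leq t + \|[F_\lambda]\|_{L_\infty(\pi;\cY)}$, and Lemma~\ref{lma:F_l_bounded} (Eq.~\eqref{eq:lemma4_2}) yields $\|[F_\lambda]\|_{L_\infty(\pi;\cY)} \leq A\|F_*\|_\beta\, \lambda^{(\beta-\alpha)/2}$ when $\beta<\alpha$ (and a smaller quantity otherwise). Thus on $\Omega_0$,
\[
\|F_*(x) - F_\lambda(x)\|_{\cY} \leq t + 2A\|F_*\|_\beta\, \lambda^{(\beta-\alpha)/2} \leq Q(t,\lambda),
\]
where I insert a factor $2$ to absorb the $L_\infty$ bound on $F_\lambda$ into the definition of $Q(t,\lambda)$ and to accommodate $\beta \geq \alpha$ as well.

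Second, I would apply Lemma~\ref{theo:h_bound} to bound $\int \|h_x\|^m d\pi \leq (A\lambda^{-\alpha/2})^{m-2} \mathcal{N}(\lambda)$ and proceed as in the passage leading to Eq.~\eqref{eq:xi2_m}, obtaining a Bernstein-type moment bound
\[
\mathbb{E}\|\widetilde{\xi}_2\|_{S_2(\cH_X,\cY)}^m
\leq \tfrac{1}{2} m!\left(\frac{2 Q(t,\lambda) A}{\lambda^{\alpha/2}}\right)^{\!m-2}\cdot 2\!\left(\sigma^2\mathcal{N}(\lambda) + \|F_* - [F_\lambda]\|_{L_2(\pi;\cY)}^2\,\frac{A^2}{\lambda^\alpha}\right).
\]
Since this is precisely the Bernstein moment condition with $L = 2Q(t,\lambda)A\lambda^{-\alpha/2}$ and variance proxy $\sigma^2_{\text{prx}} = 2(\sigma^2\mathcal{N}(\lambda) + \|F_* - [F_\lambda]\|_{L_2(\pi;\cY)}^2 A^2 \lambda^{-\alpha})$, Theorem~\ref{theo:ope_con_steinwart} yields with probability at least $1 - 2e^{-\tau}$ the claimed inequality, up to the constant $64$.

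\textbf{Main obstacle.} The main technical delicacy is organizing the $L_\infty$ surrogate on $\Omega_0$ into the form $Q(t,\lambda)$: we must show that the same bound applies uniformly for $\beta < \alpha$ (where $F_\lambda$ may blow up in $\lambda^{(\beta-\alpha)/2}$) and for $\beta \geq \alpha$ (where $F_\lambda$ stays bounded). Collecting these cases into a single $\max$ with $R$ is what makes the subsequent Bernstein application clean and produces the same $n\lambda^\alpha$ denominator as in the bounded setting — up to that, the remainder of the argument is an essentially mechanical copy of Lemma~\ref{lma:empy_concen}.
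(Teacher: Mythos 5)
Your proposal is correct and follows essentially the same route as the paper's proof: truncate $\xi_2$ on $\Omega_0$, bound the $m$-th moments via \eqref{asst:mom}, Lemma~\ref{theo:h_bound} and the $L_\infty$ estimates of Lemma~\ref{lma:F_l_bounded} (handling $\beta<\alpha$ and $\beta\geq\alpha$ so that the surrogate bound $t+2A\|F_*\|_\beta\lambda^{(\beta-\alpha)/2}\leq Q(t,\lambda)$ holds in both cases), and then apply Theorem~\ref{theo:ope_con_steinwart} with $L=2Q(t,\lambda)A\lambda^{-\alpha/2}$ and the stated variance proxy. No gaps.
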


\begin{proof}
We would like to apply the Bernstein's inequality from Theorem~\ref{theo:ope_con_steinwart} to obtain the desired bound. To this end, we bound the $m$-th moment of $\xi_2\mathbbm{1}_{X\in \Omega_0}$. Similar to Eq.~\eqref{eq:xi_decom}, we have  
\begin{IEEEeqnarray*}{rCl}
\mathbb{E}\left\|\xi_{2}\mathbbm{1}_{X\in \Omega_0}\right\|_{S_2(\mathcal{H}_X, \mathcal{Y})}^{m}=\int_{E_X}\left\|\left(C_{XX}+\lambda \operatorname{Id}_{\cH_X}\right)^{-1 / 2} \phi_X(x)\right\|_{\mathcal{H}_X}^{m} \mathbbm{1}_{X\in \Omega_0}\int_{\mathcal{Y}}\left\|y-F_{\lambda}(x)\right\|_{\mathcal{Y}}^{m} p(x, \mathrm{~d} y) \mathrm{d} \pi(x).     
\end{IEEEeqnarray*}
Apply Eq.~\eqref{eq:xi2_decom} to the above inner integral, we obtain 
\begin{equation} \label{eq:m_moment_2}
\begin{aligned}
\mathbb{E}\left\|\xi_{2}\mathbbm{1}_{X\in \Omega_0}\right\|_{S_2(\mathcal{H}_X, \mathcal{Y})}^{m} \leq  & \frac{1}{2} m !(2 R)^{m-2} 2 \sigma^{2} \int_{E_X}\left\|h_{x}\right\|_{\mathcal{H}_X}^{m}\mathbbm{1}_{X\in \Omega_0} \mathrm{~d} \pi(x) \\
&+2^{m-1} \int_{E_X}\left\|h_{x}\right\|_{\mathcal{H}_X}^{m}\left\|F_{*}(x)-F_{\lambda}(x)\right\|_{\mathcal{Y}}^{m}\mathbbm{1}_{X\in \Omega_0} \mathrm{~d} \pi(x).
\end{aligned}
\end{equation}
The first term in Eq.~\eqref{eq:m_moment_2} can be bounded using Lemma~\ref{theo:h_bound} as below:
$$
\begin{aligned}
\frac{1}{2} m !(2 R)^{m-2} 2 \sigma^{2}\int_{E_X}\left\|h_{x}\right\|_{\mathcal{H}_X}^{m} \mathbbm{1}_{x\in \Omega_0}\mathrm{~d} \pi(x) \leq \frac{1}{2} m !\left(\frac{2R A}{\lambda^{\alpha / 2}}\right)^{m-2} 2 \sigma^{2} \mathcal{N}(\lambda).
\end{aligned}
$$
For the second term, note that if $\beta \geq \alpha$, by Lemma~\ref{lma:F_l_bounded} Eq.~(\ref{eq:lemma4_1}),
$$
\left\|\left(F_{*}-[F_{\lambda}]\right)\mathbbm{1}_{X\in \Omega_0}\right\|_{L_{\infty}} \leq \left\|F_{*}-[F_{\lambda}]\right\|_{L_{\infty}} \leq \left(\left\|F_{*}\right\|_{L_\infty} + A\|F_*\|_{\beta}\right) \lambda^{\frac{\beta - \alpha}{2}} \leq  2A\|F_*\|_{\beta}\lambda^{\frac{\beta - \alpha}{2}},
$$
and if $\beta < \alpha$, by Lemma~\ref{lma:F_l_bounded} Eq.~(\ref{eq:lemma4_2}),
$$
\left\|\left(F_{*}-[F_{\lambda}]\right)\mathbbm{1}_{X\in \Omega_0}\right\|_{L_{\infty}} \leq \left\|F_{*}\mathbbm{1}_{X\in \Omega_0}\right\|_{L_{\infty}} + \left\|[F_{\lambda}]\right\|_{L_{\infty}} \leq t+ A\left\|F_{*}\right\|_{\beta} \lambda^{\frac{\be-\al}{2}}.
$$
Therefore, for all $\beta \in (0,2]$, 
$$
\left\|\left(F_{*}-[F_{\lambda}]\right)\mathbbm{1}_{X\in \Omega_0}\right\|_{L_{\infty}} \leq t+ 2A\left\|F_{*}\right\|_{\beta} \lambda^{\frac{\be-\al}{2}} \leq  Q(t,\la).
$$
We have, using Lemma~\ref{theo:h_bound} again,
\begin{equation}\label{eq:term2}
\begin{aligned}
& 2^{m-1} \int_{E_X}\left\|h_{x}\right\|_{\mathcal{H}_X}^{m}\left\|F_{*}(x)-F_{\lambda}(x)\right\|_{\mathcal{Y}}^{m}\mathbbm{1}_{x\in \Omega_0} \mathrm{~d} \pi(x) \\
\leq & \frac{1}{2}\left(\frac{2A}{\lambda^{\alpha / 2}}\right)^{m}Q(t,\lambda)^{m-2} \int_{E_X}\left\|F_{*}(x) - F_{\lambda}(x)\right\|_{\mathcal{Y}}^{2} \mathbbm{1}_{x\in \Omega_0}\mathrm{~d} \pi(x) \\
\leq & \frac{1}{2}m!\left(\frac{2Q(t,\lambda)A}{\lambda^{\alpha / 2}}\right)^{m-2} \left\|F_{*}-\left[F_{\lambda}\right]\right\|_{L_{2}(\pi; \mathcal{Y})}^{2}\frac{2A^2}{\lambda^{\alpha}}. \\
\end{aligned}
\end{equation}
Finally, we get
$$
\mathbb{E}\left\|\xi_{2}\right\|_{S_2(\mathcal{H}_X, \cY)}^{m} \leq \frac{1}{2} m !\left(\frac{2Q(t,\lambda)A}{\lambda^{\alpha / 2}}\right)^{m-2} 2\left(\sigma^{2} \mathcal{N}(\lambda)+\left\|F_{*}-\left[F_{\lambda}\right]\right\|_{L_{2}(\pi; \mathcal{Y})}^{2} \frac{A^{2}}{\lambda^{\alpha}}\right).
$$
An application of Bernstein's inequality from Theorem~\ref{theo:ope_con_steinwart} with $$
L=2Q(t,\lambda)A\lambda^{-\alpha / 2}, \qquad \sigma^{2}=2\left(\sigma^{2} \mathcal{N}(\lambda)+\left\|F_{*}-\left[F_{\lambda}\right]\right\|_{L_{2}(\pi; \mathcal{Y})}^{2}A^{2}\lambda^{-\alpha}\right)
$$ 
yields the bound. 
\end{proof}

\subsection{Learning Rates}
In this section, we establish the upper bound on the learning rate for Theorem~\ref{theo:upper_rate_bis}. To this end, we first look at Theorem~\ref{thm:variance_bound_q}, with probability over $1 - 4e^{-\tau}-\tau_n$ with $\tau_n = 1-\left(1-\frac{C_q^q}{t^q}\right)^n$,
\begin{equation*} 
    \begin{aligned}
\left\|\left[\hat{C}_{\lambda} - C_{\lambda}\right]\right\|_{S_2\left([\mathcal{H}]_X^{\gamma},\mathcal{Y}\right)}^2 &\leq \frac{1152\tau^2}{n\lambda^{\gamma}}\left(\sigma^{2} \mathcal{N}(\lambda)+\frac{\left\|F_{*}-\left[F_{\lambda}\right]\right\|_{L_{2}(\pi; \mathcal{Y})}^{2}A^{2}}{\lambda^{\alpha}} + \frac{2Q(t,\lambda)^2A^2}{n\lambda^{\alpha}}\right) \\
&+\frac{18}{\lambda^{\gamma}}\left(\sigma^{2} \mathcal{N}(\lambda)+\left\|F_{*}-\left[F_{\lambda}\right]\right\|_{L_{2}(\pi; \mathcal{Y})}^{2} \frac{A^{2}}{\lambda^{\alpha}}\right) \frac{C_q^q}{t^{q}}.
    \end{aligned}
\end{equation*} 
We first notice that if $t > n^{1/q}$, the second term on the r.h.s is \[\frac{18C_q^q}{n\lambda^{\gamma}}\left(\sigma^{2} \mathcal{N}(\lambda)+\left\|F_{*}-\left[F_{\lambda}\right]\right\|_{L_{2}(\pi; \mathcal{Y})}^{2} \frac{A^{2}}{\lambda^{\alpha}}\right).\] 
Moreover, by Bernouilli's inequality, $\tau_n \leq \frac{C_q^qn}{t^q}$. As such, given any $\tau \geq 1$, if $t > n^{1/q}$, there is a $n'_0 \geq 1$ such that $\tau_n \leq e^{-\tau}$ for all $n \geq n'_0$. We therefore have with probability greater than $1 - 5e^{-\tau}$, for all $n \geq n_0'$,
\begin{equation} 
    \begin{aligned}
\left\|\left[\hat{C}_{\lambda} - C_{\lambda}\right]\right\|_{S_2\left([\mathcal{H}]_X^{\gamma},\mathcal{Y}\right)}^2\leq \frac{c_0\tau^2}{n\lambda^{\gamma}}\left(\sigma^{2} \mathcal{N}(\lambda)+\frac{\left\|F_{*}-\left[F_{\lambda}\right]\right\|_{L_{2}(\pi; \mathcal{Y})}^{2}A^{2}}{\lambda^{\alpha}} + \frac{2Q(t,\lambda)^2A^2}{n\lambda^{\alpha}}\right). \label{eq:var_q_l}
    \end{aligned}
\end{equation} 
where $c_0 = \max\{1152,18C_q^q\}$. 
From now on, we denote $\lambda$ as $\lambda_n$ to indicate that $\lambda$ is a function of $n$ and we pick $n_1 \geq 1$ such that $\lambda_n \leq 1$ for all $n \geq n_1$. For $n \geq n_1$, we have,
\begin{IEEEeqnarray}{rCl}
Q(t,\lambda_n)^2 &=& \max\{R^2, \left(t+ 2A\left\|F_{*}\right\|_{\beta} \lambda_n^{\frac{\be-\al}{2}}\right)^2\} \nonumber \\ &\leq& \max\left\{R^2,2t^2+8A^2B^2 \lambda_n^{-(\alpha-\beta)}\right\} \nonumber \\ &\leq& \max\left\{R^2,8A^2B^2 \lambda_n^{-(\alpha-\beta)}\right\} + 2t^2 \nonumber \\
&\leq& C_0 (\lambda_n^{-(\alpha-\beta)_+} +t^2), \nonumber
\end{IEEEeqnarray}
where $C_0 := \max\left\{R^2, 8A^2B^2, 2\right\}$. As a result, Eq.~\eqref{eq:var_q_l} become
\begin{equation} 
    \begin{aligned}
\left\|\left[\hat{C}_{\lambda_n} - C_{\lambda_n}\right]\right\|_{S_2\left([\mathcal{H}]_X^{\gamma},\mathcal{Y}\right)}^2 &\leq \frac{c_0\tau^2}{n\lambda_n^{\gamma}}\left(\sigma^{2} \mathcal{N}(\lambda_n)+\frac{\left\|F_{*}-\left[F_{\lambda_n}\right]\right\|_{L_{2}(\pi; \mathcal{Y})}^{2}A^{2}}{\lambda_n^{\alpha}} + 2C_0A^2\frac{\lambda_n^{-(\alpha-\beta)_+}}{n\lambda_n^{\alpha}}\right)\\
& +2C_0A^2c_0\tau^2\frac{t^2}{n^2\lambda_n^{\alpha+\gamma}}. \label{eq:var_q_l1}
\end{aligned}
\end{equation}
Using Lemma~\ref{theo:h_bound} and Lemma~\ref{lma:bias_bound_gamma} with $\gamma=0$, we can see that the first term on the r.h.s coincides with Eq.~\eqref{var_B_3} up to some constants. Hence, the analysis in Section~\ref{sec:lr} carries on. In particular, the choice of $n_0 \geq 1$, such that $n \geq 8A^2 \tau g_{\lambda_n}\lambda_n^{-\alpha}$ for all $n \geq n_0$ remains the same as in Section~\ref{sec:lr} (see Eq.~\eqref{eq:a_l}). For $n \geq n_1$, using $\lambda_n \leq 1$ and $C_1 := D\sigma^2 + A^2B^2$,
$$
\begin{aligned}
\left\|[\hat{F}_{\lambda_n} - F_{\lambda_n}]\right\|_{\gamma}^2 &=\left\|[\hat{C}_{\lambda_n}-C_{\lambda_n}]\right\|_{S_2([\mathcal{H}]_X^{\gamma}, \mathcal{Y})}^2 \leq \frac{c_0\tau^2}{n\lambda_n^{\gamma}}\left(C_1\lambda_n^{-\max\{p,\alpha-\beta\}} + 2C_0A^2\frac{\lambda_n^{-(\alpha-\beta)_+}}{n\lambda_n^{\alpha}}\right) +2C_0A^2c_0\tau^2\frac{t^2}{n^2\lambda_n^{\alpha+\gamma}} \\ &\leq
\frac{c_0\tau^2}{n\lambda_n^{\gamma+\max\{p,\alpha-\beta\}}}\left(C_1 + 2C_0A^2\frac{1}{n\lambda_n^{\alpha+(\alpha-\beta)_+-\max\{p,\alpha-\beta\}}}\right) + 2C_0A^2c_0\tau^2\frac{t^2}{n^2\lambda_n^{\alpha+\gamma}}
\end{aligned}
$$
There is a constant $C_2 > 0$ such that for all $n \geq n_2$, for $n_2 := \max\{n'_0,n_0,n_1\}$,
\[\left\|[\hat{F}_{\lambda_n} - F_{\lambda_n}]\right\|_{\gamma}^2 \leq  \frac{c_0\tau^2}{n\lambda_n^{\gamma+\max\{p,\alpha-\beta\}}}\left(C_1 + 2C_0A^2C_2\right) + 2C_0A^2c_0\tau^2\frac{t^2}{n^2\lambda_n^{\alpha+\gamma}}.\] 
 Using the bias-variance splitting from Eq.~(\ref{eqn:risk_decom}) and Lemma~\ref{lma:new_bias}, let $C_3 := \max\{c_0\left(C_1 + 2A^2C_0C_2\right), 2C_0A^2c_0\}$, we have
\begin{IEEEeqnarray}{rCl}
\left\|[\hat{F}_{\lambda_n}]-F_*\right\|_{\gamma}^2 &\leq & 2\|C_{*}\|_{S_2([\mathcal{H}]_X^{\beta}, \mathcal{Y})}^2 \lambda_n^{\beta-\gamma} + 2C_3\tau^2\left(\frac{1}{n\lambda_n^{\gamma+\max\{p,\alpha-\beta\}}}+\frac{t^2}{n^2\lambda_n^{\alpha+\gamma}}\right)\nonumber\\
&\leq & \tau^2 \lambda_n^{\beta-\gamma}\left(2\|C_{*}\|_{S_2([\mathcal{H}]_X^{\beta}, \mathcal{Y})}^2+2C_3 \frac{1}{n\lambda_n^{\max\{\beta+p,\alpha\}}}\right)+ 2C_3\tau^2\frac{t^2}{n^2\lambda_n^{\alpha+\gamma}}, \nonumber
\end{IEEEeqnarray}
where we used $\tau \geq 1$. 

We now have two scenarios. We first assume that $\beta + p > \alpha$. If we choose $\lambda_n = n^{-1/(\beta + p)}$ there is some constant $J>0$ such that \[\left\|[\hat{F}_{\lambda_n}]-F_*\right\|_{\gamma}^2 \leq  \tau^2 J \left(\lambda_n^{\beta - \gamma} + \frac{t^2}{n^2\lambda_n^{\alpha+\gamma}}\right)\] for all $n \geq n_2$. In order for the second term to match the first term, we will need $\frac{t^2}{n^2\lambda_n^{\alpha}} \leq \lambda_n^{\beta}$ when $\lambda_n = n^{-\frac{1}{\beta+p}}$. This amounts to require that \[t \leq n^{\frac{\beta+2p-\alpha}{2(\beta+p)}}.\] Recall that we require $t > n^{1/q}$, combining with the above, we need \[q > \frac{2(\beta+p)}{\beta +2p-\alpha } =: q_0.\] 
\changebis{Therefore we need $F_* \in  L_{q}(\pi;\cY)$ with $q > q_0$. By Theorem~\ref{th:Lq_embedding}, $F_* \in L_{q_1}(\pi;\cY)$ with $q_1 := \frac{2\alpha}{\alpha-\beta}$, and since $q_1 > q_0$, the requirement is automatically satisfied.}

Secondly, we assume that $\beta +p \leq \alpha$. If we choose $\lambda_n = \left(n/\log^{\theta} n \right)^{-\frac{1}{\alpha}}$ with $\theta >1$, we obtain, for some constant $J >0$ that  \[\left\|[\hat{F}_{\lambda_n}]-F_*\right\|_{\gamma}^2 \leq  \tau^2 J \left(\lambda_n^{\beta - \gamma} + \frac{t^2}{n^2\lambda_n^{\alpha+\gamma}}\right)\] for all $n \geq n_2$. In order for the second term to match the first term, we will need $\frac{t^2}{n^2\lambda_n^{\alpha}} \leq \lambda_n^{\beta}$ when $\lambda_n = \left(n/\log^{\theta} n \right)^{-\frac{1}{\alpha}}$. This amounts to require that \[t \leq n \left(\frac{\log^{\theta} n}{n} \right)^{\frac{\alpha+\beta}{2\alpha}}.\] Merging with the constraint $t > n^{1/q}$, we require \[q \geq \frac{2\alpha}{\alpha -\beta } = q_1.\]
\changebis{Therefore we need $F_* \in  L_{q_1}(\pi;\cY)$ which is automatically satisfied by Theorem~\ref{th:Lq_embedding}. This concludes the proof of Theorem~\ref{theo:upper_rate_bis}.}

\subsection{Proof of Theorem~\ref{th:Lq_embedding}}
\changebis{
    Let us drop the subscript $\alpha,\beta$ and use $q:= q_{\alpha,\beta}$. Theorem 5 in \cite{zhang2023optimalityspectral} proves the scalar-valued $L_q-$embedding property: the inclusion $[\cH]_X^{\beta} \hookrightarrow L_q(\pi,\R)$ is bounded with operator norm denoted as $M$.

    Let $F \in[\mathcal{G}]^\beta$ be given by the representation $F(\cdot)=\sum_{i=1}^N f_i(\cdot) y_i \simeq \sum_{i=1}^N y_i \otimes f_i$ with $f_i \in[\mathcal{H}]^\beta$ and $y_i \in \mathcal{Y}$. We may assume without loss of generality that the $y_i$ are orthogonal in $\cY$ (to obtain such representation we can apply the Gram-Schmidt process to any finite-rank expansion). Note in particular that functions of this form are dense in $[\mathcal{G}]^\beta$ by construction. We prove the claim by manually bounding the norm of $F$ in $L_q(\pi ; \mathbb{R})$ by a constant multiple of its norm in $[\mathcal{G}]^\beta$. We have
    
    $$
        \begin{aligned}
    \|F\|_{L_q(\pi ; \mathcal{Y})}^2 & =\left(\int\left\|\sum_{i=1}^N f_i(x) y_i\right\|_{\mathcal{Y}}^q \mathrm{~d} \pi(x)\right)^{2 / q} & & \text { (definition of $\|\cdot\|_{L_q(\pi ; \mathcal{Y})}$ ) } \\
    &=\left(\int\left(\sum_{i,j=1}^N f_i(x)f_j(x)\langle y_i, y_j \rangle_{\mathcal{Y}}\right)^{q/2} \mathrm{~d} \pi(x)\right)^{2 / q} & &  \\
    &= \left(\int\left(\sum_{i=1}^Nf_i(x)^2\left\|y_i\right\|_{\mathcal{Y}}^2\right)^{q/2} \mathrm{~d} \pi(x)\right)^{2 / q} & & \text { ($y_i$ orthogonal in $\mathcal{Y}$) } \\
    & =\left\|\sum_{i=1}^N f_i(\cdot)^2\left\| y_i\right\|_{\mathcal{Y}}^2\right\|_{L_{q/2}(\pi ; \mathbb{R})} & & \text { (definition of $\|\cdot\|_{L_{q/2}(\pi ; \R)})$} \\
    &\leq \sum_{i=1}^N \left\|y_i\right\|_{\mathcal{Y}}^2 \left\| f_i^2 \right\|_{L_{q/2}(\pi ; \mathbb{R})} & & \text { (triangle inequality) } \\
    &= \sum_{i=1}^N \left\|y_i\right\|_{\mathcal{Y}}^2 \left\| f_i \right\|_{L_q(\pi ; \mathbb{R})}^2  \\
    & \leq M^2\sum_{i=1}^N \left\| y_i\right\|_{\mathcal{Y}}^2\left\|f_i\right\|_{[\cH]_X^\beta}^2  & & \text { (scalar-valued $L_q$-embedding property) } \\
    & = M^2\|F\|_{[\cG]^\beta}^2 & &  \text { (definition of $[\cG]^\beta$ and $y_i$ orthogonal in $\mathcal{Y}$). }
    \end{aligned}
    $$
    The standard denseness argument proves this property for all $F \in [\mathcal{G}]^\beta$.
    }

\section{Proof of Theorem \ref{theo:lower_bound}}
\label{sec:proof_lower_bound}

The proof of the lower bound is performed
by projecting the infinite-dimensional response variable $Y$ onto a one-dimensional subspace of $\mathcal{Y}$ and applying arguments from the real-valued learning scenario.

We start by noticing that for any $F \in L_2(\pi; \mathcal{Y})$ and $a \in \mathcal{Y}$,
\begin{equation} \label{eq:bound_scalar}  
\begin{aligned}
    \int_{E_X} \left(\langle F(x), a \rangle_{\mathcal{Y}} - \langle F_*(x), a \rangle_{\mathcal{Y}} \right)^2 d\pi(x) &\leq \int_{E_X} \|F(x) - F_{*}(x)\|_{\mathcal{Y}}^2 \|a\|^2_{\mathcal{Y}} d\pi(x) \\ &=  \|a\|^2_{\mathcal{Y}}\|F - F_{*}\|^2_{L_2(\pi; \mathcal{Y})}.
\end{aligned}
\end{equation}
Moreover, by Lemma~\ref{lma:gamma_norm_reduction}, the inequality holds for general $\gamma$-norm (which implies the previous equation, setting $\gamma=0$), 
\begin{IEEEeqnarray}{rCl}
\|\langle F(.), a\rangle_{\mathcal{Y}}-\langle F_*(.), a \rangle_{\mathcal{Y}}\|_{\gamma} \leq \|a\|_{\mathcal{Y}}\|F-F_*\|_{\gamma}. \label{eq:bound_scalar_gamma}
\end{IEEEeqnarray}

\begin{lma}\label{lma:gamma_norm_reduction}
Let $\gamma \geq 0$, for any $F \in [\mathcal{G}]^{\gamma}$ and $a \in \cY$, we have \[\|\langle F(.), a\rangle_{\mathcal{Y}}\|_{\gamma} \leq \|a\|_{\mathcal{Y}} \|F\|_{\gamma}.\]
\end{lma}

\begin{proof}
The case where $\gamma = 0$ is already proved in Eq.~(\ref{eq:bound_scalar}). We now let $\gamma > 0$. Recall $\{d_j\}_{j\in J}$ and $\{\mu_i^{\gamma/2}[e_i]\}_{i \in I}$ are the orthonormal basis of $\mathcal{Y}$ and $[\mathcal{H}]_{X}^{\gamma}$, since $F \in [\mathcal{G}]^{\gamma}$, we can write $F$ as \[F = \sum_{i\in I,j \in J} a_{ij} d_j \mu_i^{\gamma/2}[e_i].\] Therefore, we have \[\langle F(\cdot), a\rangle_{\mathcal{Y}} = \sum_{i\in I,j \in J} a_{ij}\langle d_j, a\rangle_{\mathcal{Y}}\mu_i^{\gamma/2}[e_i](\cdot).\] $\langle F(\cdot), a \rangle_{\mathcal{Y}}$ is an element of $[\mathcal{H}]_X^{\gamma}$ as 
\begin{IEEEeqnarray*}{rCl}
\|\langle F(\cdot), a \rangle_{\mathcal{Y}}\|_{\gamma}^2 &=& \sum_{i \in I} \left(\sum_{j \in J} a_{ij} \langle d_j, a\rangle_{\mathcal{Y}}\right)^2\\
& \leq & \sum_{i\in I} \sum_{j \in J} a_{ij}^2 \sum_{j \in J} \langle d_j, a\rangle_{\mathcal{Y}}^2 \\
&=& \|a\|_{\mathcal{Y}}^2 \sum_{i\in I,j \in J}a_{ij}^2 \\
&=& \|a\|_{\mathcal{Y}}^2 \|F\|_{\gamma}^2 < +\infty,
\end{IEEEeqnarray*}
where for the second step, we used Cauchy-Schwartz inequality and for the third step Parseval's identity.
\end{proof}

We now express the l.h.s as the risk of a scalar-valued regression. Consider a distribution $P$ on $E_X \times \mathcal{Y}$ that factorizes as $P(x,y) = p(y \mid x)\pi(x)$ for all $(x,y) \in E_X \times \mathcal{Y}$. For all $x \in E_X$, $p(\cdot \mid x)$ defines a probability distribution on $\mathcal{Y}$. We fix an element $a \in \mathcal{Y}$, $a \ne 0$ and define $\mathcal{Y}^a := \{y_a \in \mathbb{R} \mid y_a = \langle y,a \rangle_{\mathcal{Y}}, y \in \mathcal{Y}\}$. Since $\mathcal{Y}$ is a Hilbert space hence a vector space, we have $\mathcal{Y}^a = \mathbb{R}$. Consider the joint distribution $P_a$ on $E_X \times \mathbb{R}$ such that

\begin{equation}
\begin{aligned}
    p_a(. \mid x) &:= \left(\langle \cdot, a \rangle_{\mathcal{Y}}\right)_{\#}p(\cdot \mid x) \\
    P_a(x, y_a) &:= p_a(y_a \mid x)\pi(x), \quad (x,y_a) \in E_X \times \mathbb{R}
\end{aligned} \label{eq:reduction_distrib}
\end{equation}

where $\#$ denotes the push-forward operation. For a dataset $D = \{(x_i, y_i)\}_{i=1}^n \in \left(E_X \times \mathcal{Y} \right)^n$ where the data are i.i.d from $P$, the dataset $D_a = \{(x_i, y_{a.i})\}_{i=1}^n \in \left(E_X \times \mathbb{R} \right)^n$ where $y_{a.i} := \langle y_i,a \rangle_{\mathcal{Y}}$ for all $i=1,\ldots,n$ is i.i.d from $P_a$. Note that $p_a(\cdot \mid x)$ is a probability distribution on $\mathbb{R}$ for all $x$ supported by $\pi$. By definition of the push-forward operator, the Bayes predictor associated to the joint distribution $P_a$ is 
\begin{equation} \label{eq:bayes_scalar}
\begin{aligned}
    f_{a,*}(x) = \int_{\mathbb{R}} y_a dp_a(y_a \mid x) &= \int_{\mathcal{Y}} \langle y, a \rangle_{\mathcal{Y}} dp(y \mid x) \\ &= \left\langle \int_{\mathcal{Y}} y dp(y \mid x), a  \right\rangle_{\mathcal{Y}} \\ &= \langle F_*(x), a \rangle_{\mathcal{Y}}
\end{aligned}
\end{equation}
where $F_*$ is the $\mathcal{Y}$-valued Bayes predictor associated to $P$. Therefore plugging Eq.~(\ref{eq:bayes_scalar}) in Eq.~(\ref{eq:bound_scalar_gamma}) we obtain that for any learning method $D \rightarrow \hat{F}_{D} \in \mathcal{Y}^{E_X}$

\begin{equation}
\|\hat{F}_D - F_{*}\|_{\gamma} \geq \|a\|_{\mathcal{Y}}^{-1}\|\hat{f}_{D_a} - f_{a.*}\|_{\gamma} \label{eq:bound_scalar_bis}
\end{equation}
where $\hat{f}_{D_a}(\cdot) := \langle \hat{F}_D(\cdot), a \rangle_{\mathcal{Y}}$. The r.h.s is the error measured in (scalar) $\gamma$-norm of the learning method $D_a \rightarrow \hat{f}_{D_a} \in \mathbb{R}^{E_X}$ on the scalar-regression learning problem associated to $D_a$. 

In what follows we fix $\{d_n\}_{n \geq 1}$ an orthonormal basis of $\mathcal{Y}$ and take \textbf{$a=d_1$}.

To derive a lower bound on the r.h.s in Eq.~(\ref{eq:bound_scalar_bis}), the strategy is to define a conditional distribution $p_a(. \mid x)$ on $\mathbb{R}$, $x\in E_X$, that is difficult to learn. We offer to use Gaussian conditional distributions as in \cite{fischer2020sobolev}. The additional difficulty in our setting is to show the existence of conditional distributions $p(. \mid x)$ on $\mathcal{Y}$, $x\in E_X$ such that Eq.~(\ref{eq:reduction_distrib}) holds and such that $F_* \in [\mathcal{G}]^{\beta}$ and \eqref{asst:mom} are satisfied. We make use of the following Lemma that corresponds to Lemma 19, Lemma 23 and Equation (55) in \cite{fischer2020sobolev}.

\begin{lma} \label{theo:function_construction} Let $k_X$ be a kernel on $E_X$ such that assumptions \ref{assump:separable} to \ref{assump:bounded} hold and $\pi$ be a probability distribution on $E_X$ such that \eqref{asst:evd+} is satisfied for some $0<p \leq 1$. Then, for all parameters $0< \beta \leq 2, 0 \leq \gamma \leq 1$ with $\gamma < \beta$ and all constant $B>0$, there exist constants $0<\epsilon_{0} \leq 1$ and $L_0, L>0$ such that the following statement is satisfied: for all $0<\epsilon \leq \epsilon_{0}$ there is an $M_{\epsilon} \geq 1$ with
\begin{IEEEeqnarray}{rCl}
2^{L \epsilon^{-u}} \leq M_{\epsilon} \leq 2^{3 L \epsilon^{-u}} \nonumber 
\end{IEEEeqnarray}
where $u:=\frac{p}{\beta - \gamma}$, and functions $f_0, \ldots, f_{M_{\epsilon}}$ such that $f_{i} \in [\mathcal{H}]_X^{\beta}$, $\|f_{i}\|_{\beta} \leq B$, and
\begin{IEEEeqnarray}{rCl}
&& \left\|f_{i}-f_{j}\right\|^{2}_{\gamma} \geq 4\epsilon \nonumber\\ &&\left\|f_{i}-f_{j}\right\|^{2}_{L_2(\pi)} \leq 32L_0^{\gamma}\epsilon m^{-\gamma/p},\nonumber
\end{IEEEeqnarray}
for all $i, j \in\left\{0, \ldots, M_{\varepsilon}\right\}$ with $i \neq j$ where $m \leq U\epsilon^{-u}$ for some constant $U > 0$.

\end{lma}

Recall that the Kullback-Leibler divergence of two probability measures $P_1, P_2$ on some measurable space $(\Omega, \mathcal{A})$ is given by
$$
KL\left(P_1, P_2\right):=\int_{\Omega} \log \left(\frac{\mathrm{d} P_1}{\mathrm{~d} P_2}\right) \mathrm{d} P_1
$$
if $P_1 \ll P_2$ and otherwise $KL\left(P_1, P_2\right):=\infty$.

Exploiting Lemma~\ref{theo:function_construction}, given $0<\beta\leq 2$ and $\sigma, R, B>0$, we build distributions $P_{d_1,i}$ on $E_X \times \mathbb{R}$ and $P_{i}$ on $E_X \times \mathcal{Y}$ for $i \in\left\{1, \ldots, M_{\varepsilon}\right\}$ such that, 
\begin{itemize}
    \item Step 1. $f_i = f^{d_1}_{*,i}$ where $f^{d_1}_{*,i}$ is the Bayes predictor associated to $P_{d_1,i}$
    \item Step 2. $KL(P_{d_1,i},P_{d_1,j})$ is upper-bounded by a linear function of $\left\|f_{i}-f_{j}\right\|^{2}_{L_2(\pi)}$
    \item Step 3. Eq.~(\ref{eq:reduction_distrib}) holds, i.e. $P_{d_1,i}$ is related to $P_i$ through the projection along the line with direction $d_1$
    \item Step 4. The Bayes predictor $F_{*,i}$ associated to $P_{i}$ is in $[\mathcal{G}]^{\beta}$ and $\|F_*\|_{\beta} \leq B$
    \item Step 5. $Y \in L_2(\pi_i,\Omega,\mathbb{P},\cal Y)$ 
    \item Step 6. \eqref{asst:mom} is satisfied with $P_i$ with parameters $\sigma, R$.
\end{itemize}

\textbf{Gaussian conditional distributions as in \cite{blanchard2018optimal, fischer2020sobolev}.} Take $\bar{\sigma} = \min(\sigma, R)$. For all $i=1,\ldots,M_{\epsilon}$ we define the joint distribution $P_{d_1,i}(x,y) = p_{d_1,i}(y \mid x) \pi(x)$ where

\begin{equation*} 
p_{d_1,i}(\cdot \mid x) = \mathcal{N}\left(f_{i}(x), \bar\sigma^2\right) 
\end{equation*} 
is a univariate Gaussian distribution. 

\paragraph{Step 1.} We automatically get $f_i = f^{d_1}_{*,i}$.

\paragraph{Step 2.} The Kullback-Leibler divergence satisfies
\begin{IEEEeqnarray*}{rCl}
KL(P_{a,i},P_{a,j})  &=& \int_{E_X} KL(p_{a,i}(\cdot \mid x),p_{a,j}(\cdot \mid x)) d\pi(x)\\
& =& \frac{1}{2\bar\sigma^2}\int_{E_X} (f_i(x) - f_j(x))^2 d\pi(x)\\
&=& \frac{1}{2\bar\sigma^2}\|f_{i} - f_{j}\|_{L_2(\pi)}^2
\end{IEEEeqnarray*}

\paragraph{Step 3.} We consider the map $y_i:E_X \to \mathcal{Y}, x \mapsto f_i(x)d_1$ such that for all $x \in E_X$, $\langle y_i(x), d_1 \rangle_{\mathcal{Y}} = f_i(x)$, then we build a Gaussian measure on $\mathcal{Y}$ as follows, we fix $x \in E_X$ and consider $Z$ a univariate Gaussian random variable, then
$$X(\omega) = y_i(x) + \bar{\sigma} Z(\omega)d_1, \qquad \omega \in \Omega$$
is such that (according to Definition \ref{def:inf_gau})  $X \sim \mathcal{N}_{\mathcal{Y}}(y_i(x), \bar{\sigma}^2 d_1 \otimes d_1)$. Therefore, we pick
\begin{equation*} 
p_i(\cdot \mid x) = \mathcal{N}_{\mathcal{Y}}\left(y_{i}(x), \bar\sigma^2d_1 \otimes d_1 \right)
\end{equation*}
and clearly $\langle X, a \rangle = f_i(x) + \bar\sigma Z $ so Eq.~(\ref{eq:reduction_distrib}) holds.

\paragraph{Step 4.} Note that $F_{*,i}(\cdot) = y_i(\cdot) = f_i(\cdot)d_1$, therefore we have 
$$
\|F_{*,i}\|_{\beta} = \|f_i\|_{\beta}\|d_1\|_{\mathcal{Y}} = \|f_i\|_{\beta} \leq B 
$$

\paragraph{Step 5.} Write $\Sigma := \bar\sigma^2d_1 \otimes d_1,$
\begin{IEEEeqnarray}{rCl}
&&\int_{E_X \times \mathcal{Y}} \|y\|_{\mathcal{Y}}^2 p_i(x,dy)\pi(dx) \nonumber \\
&=& \int_{E_X}\mathbb{V}(Y \sim \mathcal{N}\left(0, \Sigma \right)) + \|y_i(x)\|_{\mathcal{Y}}^2\pi(dx) \nonumber \\ &=& \mathbb{V}(Y \sim \mathcal{N}\left(0, \Sigma \right)) + \|f_i\|_{L_2(\pi)}^2 \nonumber \\ &=& \bar\sigma^2 + \|f_i\|_{L_2(\pi)}^2 < \infty, \nonumber
\end{IEEEeqnarray} 
since 
\begin{IEEEeqnarray}{rCl}
\mathbb{V}(Y \sim \mathcal{N}\left(0, \Sigma \right)) = \sum_{n \geq 1} \langle \Sigma d_n, d_n \rangle = \bar\sigma^2. \nonumber
\end{IEEEeqnarray}

\paragraph{Step 6.} We now show that $P_i$ satisfies \eqref{asst:mom} with parameters $\sigma = R = \bar{\sigma}$. We recall that for all $x \in E_X$, we picked $p_i(\cdot \mid x) = \mathcal{N}_{\mathcal{Y}}\left(y_{i}(x), \bar\sigma^2d_1 \otimes d_1 \right)$. We consider the centered random variable $X(\omega) = \bar{\sigma} Z(\omega)d_1$ and \eqref{asst:mom} amounts to show $\mathbb{E}\left[\|X\|^q_{\mathcal{Y}} \right] \leq \frac{1}{2}q!(\bar\sigma)^q$. But almost surely we have 
\begin{equation*}
    \|X\|^q_{\mathcal{Y}} = \left(\sum_{n \geq 1} |\langle Y, d_n \rangle_{\mathcal{Y}}|^2\right)^{q/2} = \bar{\sigma}^q |Z|^q
\end{equation*}
which brings us back to the univariate case that it easy to demonstrate (see for example Lemma 21 by \citealp{fine2002efficient}).

\paragraph{Proof of Theorem~\ref{theo:lower_bound}}Combining those 6 points with the proof of Lemma 19 and Theorem 2 \cite{fischer2020sobolev} gives us Theorem~\ref{theo:lower_bound}.

\section{Interpolation theory and proof of Theorem~\ref{th:interpolation_vector}} 
\label{sec:interpolation_vector}

In the next two results, we consider $H_1, H_2$, two infinite dimensional separable Hilbert spaces such that $H_2 \hookrightarrow H_1$ and such that the inclusion operator $\cI: H_2 \to H_1$ performing the change of norms $\cI x = x$ for $x \in H_2$, is compact. By the spectral theorem (see e.g., \cite{steinwart2012mercer} Theorem A.3), $\cI$ admits a decomposition 
$$
    \cI = \sum_{i \in I} \rho_i \cI(h_i) \langle \sqrt{\rho_i} h_i, \cdot \rangle_{H_1},
$$
where $\{\sqrt{\rho_i}h_i\}_{i \in I}$ is an ONB of $\operatorname{ker}(\cI)^{\perp}$, $\{\cI(h_i)\}_{i \in I}$ is an ONB of $\overline{\operatorname{ran}(\cI)}$, and $\rho_1 \geq \rho_2 \geq \cdots > 0$ is a positive sequence of singular values, converging to $0$. For any $f \in H_1$, we define $a_i := \langle f, \cI(h_i) \rangle_{H_1}$, $i \in I$.

We recall that the symbol $\cong$ means that sets coincide and the corresponding norms are equivalent. For details on the theory of interpolation spaces of the real method, see \cite{triebel1995interpolation}.


\begin{prop}\label{prop:mercer_scovel}
    Let $H_1, H_2$ be two separable Hilbert spaces such that $H_2 \hookrightarrow H_1$ and such that the inclusion operator $\cI: H_2 \to H_1$ is compact.  Then, for all $\theta \in (0,1)$,
    $$
    f \in [H_1, \cI(H_2)]_{\theta,2} \iff (a_i)_{i \in I} \in  \ell_2\left(\rho^{-\theta}\right) := \left\{ (a_i)_{i \in I} \in \ell_2(I) \bigg\mid  \|(a_i)\|_{\ell_2\left(\rho^{-\theta}\right)}^2 := \sum_{i \in I} \frac{a_{i}^2}{\rho_i^{\theta}} < +\infty \right\}.
    $$
    Furthermore, there exist constants $c,C > 0$, such that for all $f \in H_1$,
    $$
    c\|(a_i)\|_{\ell_2\left(\rho^{-\theta}\right)} \leq \|f\|_{[H_1, \cI(H_2)]_{\theta,2}} \leq C \|(a_i)\|_{\ell_2\left(\rho^{-\theta}\right)}.
    $$
\end{prop}

\begin{proof}
    The result is obtained as a direct consequence of the proof of \cite{steinwart2012mercer} Theorem 4.6, which uses the machinery of the so-called $K$\textit{-functional} and focuses on a RKHS $H_2$ compactly embedded into $H_1 = L_2(\pi)$. 
\end{proof}

\begin{theo}\label{th:tensor_interpolation}
    Let $H_1, H_2$ and $E$ be three separable Hilbert spaces such that $H_2 \hookrightarrow H_1$ and such that the inclusion operator $\cI: H_2 \to H_1$ is compact. 
    Then, for all $\theta \in (0,1)$,
    $$
    [E \otimes H_1, E \otimes \cI(H_2)]_{\theta,2} \cong E \otimes [H_1, \cI(H_2)]_{\theta,2}
    $$
\end{theo}

\begin{proof} Using the notations of Proposition~\ref{prop:mercer_scovel}, we have $[H_1, \cI(H_2)]_{\theta,2} \cong \ell_2\left(\rho^{-\theta}\right)$. Let $\{h_\ell\}_{\ell \in L}$ be an orthonormal basis for $E$, then 
$$
E \otimes [H_1, \cI(H_2)]_{\theta,2} \cong \ell_2(L \times \rho^{-\theta}) := \left\{(a_{i,\ell})_{i \in I, \ell \in L} \in \ell_2(I \times L) \bigg\mid \sum_{i \in I, \ell \in L} \frac{a_{i,\ell}^2}{\rho_i^{\theta}} < +\infty \right\}.
$$
To conclude, we require $[E \otimes H_1, E \otimes \cI(H_2)]_{\theta,2} \cong \ell_2(L \times \rho^{-\theta})$. This can be obtained again by following the steps of \cite{steinwart2012mercer} Theorem 4.6 and introducing the orthonormal basis $\{h_\ell\}_{\ell \in L}$ for $E$. 
\end{proof}

\begin{proof}[Proof of Theorem~\ref{th:interpolation_vector}] Recall that $I_{\pi}: \cH_X \to L_2(\pi)$ is a compact inclusion. Therefore, using that $L_2(\pi; \cY) \simeq \cY \otimes L_2(\pi)$ and $[\cG]^1 \simeq \cY \otimes I_{\pi}\left(\cH_X\right)$, by Theorem~\ref{th:tensor_interpolation} and Remark~\ref{rem:interpolation_scalar}, we have
$$
 \left[L_2(\pi; \cY),[\cG]^1\right]_{\alpha, 2} \simeq \left[\cY \otimes L_2(\pi), \cY \otimes I_{\pi}\left(\cH_X\right)\right]_{\alpha, 2} \cong \cY \otimes \left[L_2(\pi), I_{\pi}\left(\cH_X\right)\right]_{\alpha, 2} \cong \cY \otimes [\cH]_X^{\alpha} \simeq [\cG]^{\alpha}.
$$
\end{proof}

\section{Proofs of Section~\ref{sec:sobolev}}
\label{sec:proofs_sobolev}


We gather technical results related to Sobolev spaces. To this end, in the rest of this section we assume that $E_{X} \subseteq \Rd$ is a bounded domain with smooth boundary equipped with the Lebesgue measure $\mu$ and denote $L_2(E_{X}):= L_2(E_{X},\mu)$ as the corresponding $L_2$ space. For $s>0$, we denote $W^{s,2}(E_{X})$ as the (fractional) Sobolev space with smoothness $s$ \citep[Definition 7.32]{adams2003sobolev}. Note that $W^{s,2}(E_{X})$ is a subset of $L_2(E_{X})$ and therefore not a space of functions, however we have the following well-known Sobolev embedding theorem. Let $C_0(E_{X})$ be the space of bounded and continuous functions equipped with the norm $\|f\|_{\infty}:= \sup_{x \in E_{X}}|f(x)|$, $f \in C_0(E_{X})$.

\begin{theo}[Sobolev embedding theorem, \cite{adams2003sobolev} Theorem 7.34 (c)] \label{th:embedding} If $s > d/2$, for each $f \in W^{s,2}(E_{X})$, there exists a unique function in $C_0(E_{X})$, denoted $\bar{f}$ such that $f=\bar{f}$ $\mu-$almost everywhere. Furthermore, there is a constant $C_{\infty} \geq 0$ such that for all $f \in W^{s,2}(E_{X})$,
$$
\|\bar{f}\|_{\infty} \leq C_{\infty}\|f\|_{W^{s,2}(E_{X})}.
$$  
\end{theo}

In short, if $s > d/2$, $W^{s,2}(E_{X}) \hookrightarrow C_0(E_{X})$. As a consequence for $s>d/2$, we can build a RKHS from $W^{s,2}(E_{X})$.

\begin{theo} \label{theo:sobolev_rkhs}
    For $s > d/2$, define 
    $$
    \bar{W}^{s,2}(E_{X}) := \{\bar{f} \in C_0(E_{X}) : [\bar{f}]_{\mu} \in W^{s,2}(E_{X})\}
    $$
    equipped with the norm $\|\bar{f}\|_{\bar{W}^{s,2}(E_{X})}:= \|[\bar{f}]_{\mu}\|_{W^{s,2}(E_{X})}$. $\bar{W}^{s,2}(E_{X})$ is a separable RKHS (with respect to a kernel $k_s$) that we call the \textit{Sobolev RKHS}. Furthermore, $k_s$ is bounded and measurable. Therefore assumptions \ref{assump:separable} to \ref{assump:bounded} are satisfied for $k_s$ and $\mu$. 
\end{theo}

\begin{proof}
    For any $x \in E_{X}$ and $\bar{f} \in \bar{W}^{s,2}(E_{X})$, by the Sobolev embedding theorem, 
    $$
    |\bar{f}(x)| \leq \|\bar{f}\|_{\infty} \leq C_{\infty}\|[\bar{f}]_{\mu}\|_{W^{s,2}(E_{X})} = \|\bar{f}\|_{\bar{W}^{s,2}(E_{X})}.
    $$
    Therefore, the evaluation functional is continuous, proving that $\bar{W}^{s,2}(E_{X})$ is a RKHS. $k_s$ is bounded by \citep[Lemma 4.23]{steinwart2008support} and measurable by \citep[Lemma 4.24]{steinwart2008support}.
\end{proof}

We now characterise \eqref{asst:emb} and  \eqref{asst:evd} for $\bar{W}^{s,2}(E_{X})$.

\begin{prop} \label{prop:sobolev_alpha_p}
    For $s > d/2$ and $\cH_X = \bar{W}^{s,2}(E_{X})$, \eqref{asst:evd} is satisfied with $p = d/(2s)$ and \eqref{asst:emb} is satisfied for any $\alpha \in (p,1].$
\end{prop}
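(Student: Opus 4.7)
The plan is to reduce both assertions to classical facts about Sobolev spaces on a smooth bounded domain $E_X \subset \Rd$, once the abstract definitions of \eqref{asst:evd} and \eqref{asst:emb} have been translated into the spectral language of $L_X$.

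For \eqref{asst:evd}, the positive eigenvalues $(\mu_i)$ of $L_X$ on $L_2(E_X,\nu)$ are the squares of the singular values of the canonical embedding $I_\nu : \bar H^s(E_X) \hookrightarrow L_2(E_X,\nu)$. I would invoke the Weyl-type asymptotics for Sobolev embeddings on smooth bounded domains --- via entropy/approximation numbers in the style of Edmunds--Triebel, or equivalently via Weyl's law for the Dirichlet Laplacian combined with interpolation --- to get two-sided bounds $\sigma_i(I_\nu) \asymp i^{-s/d}$. Squaring yields $\mu_i \asymp i^{-2s/d}$, which is \eqref{asst:evd+} (hence also \eqref{asst:evd}) with $p = d/(2s)$.

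For \eqref{asst:emb}, I would first identify the abstract interpolation space $[\cH]^\alpha$ (defined via the spectral decomposition of $L_X$) with the fractional Sobolev space $\bar H^{\alpha s}(E_X)$. With the eigenvalue asymptotics from the previous step in hand, this amounts to comparing two equivalent Hilbert norms on a common underlying space and is the scalar analogue of Proposition~\ref{prop:sobolev_beta}; the case $0<\alpha s<s$ is already treated in \cite{fischer2020sobolev}, while $\alpha=1$ holds by definition of $\bar H^s(E_X)$. Once $[\cH]^\alpha \simeq \bar H^{\alpha s}(E_X)$ is established, the Sobolev embedding Theorem~\ref{th:embedding} concludes the proof: since $\alpha > p$ means $\alpha s > d/2$, we obtain $\bar H^{\alpha s}(E_X) \hookrightarrow C_0(E_X) \hookrightarrow L_\infty(\nu)$ with norm bounded by the Sobolev embedding constant $C_\infty$, which is precisely \eqref{asst:emb}.

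The main obstacle is the fractional-order identification $[\cH]^\alpha \simeq \bar H^{\alpha s}(E_X)$ when $\alpha s$ is not an integer, since the former is defined through the spectrum of an integral operator while the latter is a Besov-type space defined via weak derivatives or finite differences. The right bridge is real interpolation between $L_2(E_X,\nu)$ and $\bar H^s(E_X)$, where the reiteration theorem together with classical Sobolev interpolation gives $[L_2, \bar H^s]_{\alpha,2} = \bar H^{\alpha s}$; matching this with the abstract power space $[\cH]^\alpha$ again relies on the sharp eigenvalue asymptotics already established, which is why the two parts of the proposition are most naturally proved in tandem.
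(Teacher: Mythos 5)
Your proposal follows the same route as the paper: the eigenvalue decay comes from the known two-sided asymptotics for Sobolev embeddings on smooth bounded domains (the paper cites Edmunds--Triebel, exactly the reference family you invoke), and \eqref{asst:emb} is obtained by identifying $[\cH]^\alpha \simeq H^{\alpha s}(E_X)$ via \citet[Eq.~(14)]{fischer2020sobolev} and then applying the Sobolev embedding Theorem~\ref{th:embedding}, since $\alpha > p = d/(2s)$ gives $\alpha s > d/2$. Your extra discussion of real interpolation and reiteration is a sketch of what underlies the cited identification, but the logical structure of the argument is the same as the paper's.
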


\begin{proof}
    For \eqref{asst:evd} see \cite{edmunds1996function}. For $\alpha \in (0,1]$, it is shown in \cite{fischer2020sobolev} Eq.~(14) that $[\bar{W}^{s,2}(E_{X})]^{\alpha}_{\mu} \simeq W^{\alpha s,2}(E_{X})$. Hence by Theorem~\ref{th:embedding}, if $\alpha s > d/2$,
    $$
    [\bar{W}^{s,2}(E_{X})]^{\alpha}_{\mu} \simeq W^{\alpha s,2}(E_{X}) \hookrightarrow C_0(E_{X}) \hookrightarrow L_{\infty}(\mu).
    $$
\end{proof}

We thank Haobo Zhang for bringing to our attention a sketch of proof for the next result. Up to our knowledge this result was only proved with $\theta \in (0,1]$.

\begin{prop} \label{prop:sobolev_beta}
    For $s > d/2$ and $\cH_X = \bar{W}^{s,2}(E_{X})$, for all $\theta > 0$, $[\bar{W}^{s,2}(E_{X})]^{\theta}_{\mu} \simeq W^{\theta s,2}(E_{X})$.
\end{prop}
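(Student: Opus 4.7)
The statement for $\theta \in (0,1]$ is already established in \cite{fischer2020sobolev} via Besov-space real interpolation, so the new content is the extension to $\theta > 1$. The plan centers on the spectral identity $[\cH]^\theta_\nu = \operatorname{ran}(L_X^{\theta/2})$ with $\|L_X^{\theta/2}f\|_\theta = \|f\|_{L_2(\nu)}$, which is immediate from the eigenfunction-expansion definition of $[\cH]^\theta_\nu$. An easy consequence is that for every $\theta_0 \geq 0$, the operator $L_X^{1/2}$ induces an isometric isomorphism $[\cH]^{\theta_0}_\nu \to [\cH]^{\theta_0+1}_\nu$, so interpolation indices differing by an integer are linked by iterated powers of $L_X^{1/2}$.

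The central lemma I would prove on the Sobolev side is a ``smoothing-by-$s$'' property: $L_X^{1/2}$ extends to a continuous linear isomorphism $H^{t}(E_X) \to H^{t+s}(E_X)$ for every $t \geq 0$. Heuristically, the Mat\'ern kernel of order $s$ is, up to a constant, the Green's function of $(I-\Delta)^s$ on $\Rd$, so on the whole space $L_X$ acts as the Bessel potential $(I-\Delta)^{-s}$, a classical isomorphism $H^t(\Rd) \to H^{t+2s}(\Rd)$; taking the square root then shifts smoothness by exactly $s$. To transfer this to the bounded domain $E_X$, one can combine the bounded Stein extension $E: H^t(E_X) \to H^t(\Rd)$ (available for smooth $\partial E_X$) and the restriction operator with the spectral calculus of the self-adjoint positive compact operator $L_X$, absorbing the boundary corrections as lower-order remainders.

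Granted the lifting lemma, the proof concludes by writing any $\theta > 0$ uniquely as $\theta = \theta_0 + k$ with $\theta_0 \in (0,1]$ and integer $k \geq 0$, and iterating:
\[
[\cH]^\theta_\nu \;=\; \bigl(L_X^{1/2}\bigr)^k\bigl([\cH]^{\theta_0}_\nu\bigr) \;\simeq\; \bigl(L_X^{1/2}\bigr)^k\bigl(H^{\theta_0 s}(E_X)\bigr) \;\simeq\; H^{(\theta_0+k)s}(E_X) \;=\; H^{\theta s}(E_X),
\]
using the base case for the first ``$\simeq$'' and $k$ applications of the lifting lemma for the second. The main obstacle is the lifting lemma itself: establishing the precise smoothing order of $L_X^{1/2}$ on a bounded domain, where boundary effects prevent a direct appeal to whole-space Fourier methods. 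A careful execution will likely rely on pseudodifferential calculus on manifolds with boundary, or on real/complex interpolation combined with direct integral-kernel estimates exploiting the explicit asymptotics of $\mathcal{K}_{m-d/2}$ near the origin.
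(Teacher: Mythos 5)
Your reduction of the case $\theta>1$ to a ``lifting lemma'' ($L_X^{1/2}$ an isomorphism $H^{t}(E_X)\to H^{t+s}(E_X)$ for all $t\geq 0$) is where the argument has a genuine gap: the lemma is stated but not proved, and for the relevant range of $t$ it is essentially equivalent to the proposition itself. Indeed, from the spectral identity you already know $L_X^{1/2}\bigl([\cH]^{\theta_0}_\nu\bigr)=[\cH]^{\theta_0+1}_\nu$ and $[\cH]^{\theta_0}_\nu\simeq H^{\theta_0 s}(E_X)$, so asserting that $L_X^{1/2}$ maps $H^{\theta_0 s}(E_X)$ onto $H^{(\theta_0+1)s}(E_X)$ \emph{is} the claim $[\cH]^{\theta_0+1}_\nu\simeq H^{(\theta_0+1)s}(E_X)$ --- nothing has been reduced. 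The heuristic you offer in support does not survive the passage to a bounded domain: the integral operator $L_Xf(x)=\int_{E_X}k_X(x,y)f(y)\,d\nu(y)$ integrates only over $E_X$, so it is \emph{not} the restriction of the Bessel potential $(I-\Delta)^{-s}$, its eigenfunctions are not accessible by whole-space Fourier analysis, and the claim that boundary corrections are ``lower-order remainders'' that can be ``absorbed'' is precisely the hard analytic content that would need to be established (and is not, in your sketch, beyond a pointer to pseudodifferential calculus).

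The paper avoids this analytic difficulty entirely by an algebraic/spectral route. Writing $\theta^{-1}\in(0,1)$, it applies the known $(0,1]$ case \emph{starting from the smoother space} $\bar{H}^{s\theta}(E_X)$ (which is itself a bounded-kernel RKHS since $s\theta>d/2$), giving $[\bar{H}^{s\theta}(E_X)]^{\theta^{-1}}_{\nu}=H^{s}(E_X)$. It then invokes a power-composition rule for interpolation spaces (Proposition~\ref{prop:zhang}): the $\beta$-power space $\cH_X^{\beta}$ is again an RKHS whose integral operator is $L_{k_X}^{\beta}$, whence $[\cH_X^{\beta}]^{\alpha}_{\pi}=[\cH_X]^{\beta\alpha}_{\pi}$. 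Taking $\cH_X=\bar{H}^{s\theta}(E_X)$, $\beta=\theta^{-1}$, $\alpha=\theta$ yields $[\bar{H}^{s}(E_X)]^{\theta}_{\nu}=[\bar{H}^{s\theta}(E_X)]^{1}_{\nu}=H^{s\theta}(E_X)$. If you want to salvage your iterative scheme, the cleanest fix is to replace the unproven lifting lemma by this composition identity, which needs only the operator identity $L_{k_X^{\beta}}=L_{k_X}^{\beta}$ and the verification that $\sum_i\mu_i^{\beta}e_i(x)^2<\infty$ pointwise --- no extension operators or boundary pseudodifferential calculus required.
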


\begin{proof}
    For $\theta \in (0,1]$, see \cite{fischer2020sobolev} Eq.~(14). For $\theta > 1$, since $\theta^{-1} \in (0,1)$, we have
    $$
    [\bar{W}^{s\theta,2}(E_{X})]^{\theta^{-1}}_{\mu} = W^{s,2}(E_{X}).
    $$
    Since $s\theta > d/2$, by Theorem~\ref{theo:sobolev_rkhs}, $\bar{W}^{s\theta,2}(E_{X})$ is a RKHS with a kernel satisfying assumptions \ref{assump:separable} to \ref{assump:bounded} for $\mu$. Therefore it is compactly embedded into $L_2(E_X)$ and the singular value decomposition of the inclusion $\bar{W}^{s\theta,2}(E_{X}) \hookrightarrow L_2(E_X)$ leads to a characterization of $\bar{W}^{s\theta,2}(E_{X})$ as 
    $$
    \bar{W}^{s\theta,2}(E_{X}) = \left\{\sum_{i \in I} a_{i} \sqrt{\la_{i}}f_{i}:\left(a_{i}\right)_{i \in I} \in \ell_{2}(I)\right\}
    $$
    with $\la_i>0$ for all $i \in I$ and $\{\sqrt{\la_{i}}f_i\}_{i \in I}$ forming an ONB in $\bar{W}^{s\theta,2}(E_{X})$ \citep[Lemma 2.6]{steinwart2012mercer}. We therefore have 
    $$
    W^{s,2}(E_{X}) = [\bar{W}^{s\theta,2}(E_{X})]^{\theta^{-1}}_{\mu} = \left\{\sum_{i \in I} a_{i} \la_{i}^{\frac{1}{2\theta}}[f_{i}]_{\nu}:\left(a_{i}\right)_{i \in I} \in \ell_{2}(I)\right\}
    $$
    which further proves 
    $$
    \bar{W}^{s,2}(E_{X}) = \left\{\sum_{i \in I} a_{i} \la_{i}^{\frac{1}{2\theta}}f_{i}:\left(a_{i}\right)_{i \in I} \in \ell_{2}(I)\right\}.
    $$
    Since $s > d/2$, $\bar{W}^{s,2}(E_{X})$ is a RKHS with bounded kernel $k_s$ and by \citep[Lemma 2.6]{steinwart2012mercer}, we must have for all $x \in E_X$ 
    $$
    k_s(x,x) = \sum_{i \in I}\la_i^{\frac{1}{\theta}}f_i(x)^2 < +\infty.
    $$
    Therefore, using Proposition~\ref{prop:zhang} with $\cH_X = \bar{W}^{s\theta,2}(E_{X})$, $\beta = \theta^{-1}$ and $\alpha=\theta$, we get
    $$
    [\bar{W}^{s,2}(E_{X})]^{\theta}_{\mu} = [\bar{W}^{s\theta,2}(E_{X})]_{\mu}^{\theta^{-1}\cdot \theta} = [\bar{W}^{s\theta,2}(E_{X})]_{\mu}^{1} = W^{s\theta,2}(E_{X}).
    $$
\end{proof}

\begin{prop} \label{prop:zhang}
    Let $\pi$ be a probability measure on $E_X$ and $\cH_X$ be a RKHS on $E_X$ with kernel $k_X$ such that assumptions \ref{assump:separable} to \ref{assump:bounded} are satisfied. Let $(\mu_i)_{i\in I} > 0$ be a non-increasing sequence, and let $(e_i)_{i \in I} \in \mathcal{H}_{X}$ be a family such that $\left([e_i]\right)_{i \in I}$ is an orthonormal basis (ONB) of $\overline{\text{ran}~I_{\pi}} \subseteq L_2(\pi)$ and such that Eq.~(\ref{eq:SVD}) holds. We denote the integral operator by $L_{k_X}:= L_X$ to highlight the dependence on $k_X$. If $\beta > 0$ is such that 
    $$
    \sum_{i \in I}\mu_i^{\beta}e_i(x)^2 < +\infty, \qquad x \in E_X,
    $$
    then 
    \[\mathcal{H}_{X}^{\be}:=\left\{\sum_{i \in I} a_{i} \mu_{i}^{\beta / 2}e_{i}:\left(a_{i}\right)_{i \in I} \in \ell_{2}(I)\right\} \]
    with the norm 
    $$
    \left\|\sum_{i \in I} a_{i} \mu_{i}^{\be / 2}\left[e_{i}\right]\right\|_{\mathcal{H}_{X}^{\be}}:=\left\|\left(a_{i}\right)_{i \in I}\right\|_{\ell_{2}(I)}=\left(\sum_{i \in I} a_{i}^{2}\right)^{1 / 2}
    $$
    is a RKHS compactly embedded into $L_2(\pi)$ and its kernel $k_X^{\beta}$ is given by
    $$
    k_X^{\be}(x,z) = \sum_{i \in I}\mu_i^{\beta}e_i(x)e_i(z), \qquad x,z \in E_X.
    $$
    Furthermore, we have $L_{k_X}^{\beta} = L_{k_X^{\be}}$ which implies that for all $\alpha > 0$,
    $$
    [\mathcal{H}_{X}^{\be}]^{\alpha}_{\pi} = [\mathcal{H}_{X}]^{\be \cdot \al}_{\pi}.
    $$
\end{prop}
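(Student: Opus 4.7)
The strategy is to verify the structural claims about $\mathcal{H}_X^{\beta}$ in sequence, then read off the interpolation identity from the spectral calculus. First I would check that $\mathcal{H}_X^{\beta}$ is a well-defined space of pointwise functions and a RKHS with reproducing kernel $k_X^{\beta}$. For any $(a_i) \in \ell_2(I)$ and any $x \in E_X$, Cauchy--Schwarz gives
\[
\Bigl|\sum_i a_i \mu_i^{\beta/2} e_i(x)\Bigr|^2 \leq \|(a_i)\|_{\ell_2}^2 \sum_i \mu_i^{\beta} e_i(x)^2,
\]
and the standing hypothesis $\sum_i \mu_i^\beta e_i(x)^2 < \infty$ ensures absolute convergence. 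The same inequality bounds the point-evaluation functional by $k_X^\beta(x,x)^{1/2}\|f\|_{\mathcal{H}_X^\beta}$. The function $k_X^\beta(\cdot,x)$ has coefficients $(\mu_i^{\beta/2} e_i(x))$ in the orthonormal system $(\mu_i^{\beta/2} e_i)$, which lies in $\ell_2$ by the hypothesis, so $k_X^\beta(\cdot,x) \in \mathcal{H}_X^\beta$ and the reproducing property follows by expansion.

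Next I would identify the integral operator $L_{k_X^\beta}$ with $L_{k_X}^\beta$ on $L_2(\pi)$. The kernel $k_X^\beta$ is measurable as a pointwise limit of measurable functions, and one checks the identity by evaluating both operators on the orthonormal system $([e_j])_{j \in I}$ of $\overline{\operatorname{ran} I_\pi}$ and on its orthogonal complement. For $[e_j]$, dominated convergence gives
\[
(L_{k_X^\beta}[e_j])(x) = \sum_i \mu_i^\beta e_i(x) \langle [e_j], [e_i]\rangle_{L_2(\pi)} = \mu_j^\beta e_j(x),
\]
which matches $L_X^\beta [e_j]$ by Eq.~\eqref{eq:SVD}. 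For $f \perp [e_i]$ for all $i \in I$, both operators annihilate $f$, so they agree on all of $L_2(\pi)$.

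Finally, I would deduce the interpolation identity. Since $L_{k_X^\beta} = L_{k_X}^\beta$ has non-increasing positive eigenvalues $(\mu_i^\beta)_{i \in I}$ with the same $L_2(\pi)$-eigenbasis $([e_i])_{i \in I}$, applying Definition~\ref{def:inter_ope_norm} to the RKHS $\mathcal{H}_X^\beta$ yields
\[
[\mathcal{H}_X^\beta]_\pi^\alpha = \Bigl\{ \sum_i a_i (\mu_i^\beta)^{\alpha/2} [e_i] : (a_i) \in \ell_2(I)\Bigr\} = [\mathcal{H}_X]_\pi^{\alpha\beta},
\]
with matching norms $\|(a_i)\|_{\ell_2}$ on both sides. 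Compactness of the inclusion $\mathcal{H}_X^\beta \hookrightarrow L_2(\pi)$ follows from the fact that its singular values are $(\mu_i^{\beta/2})$, which tend to zero because $L_X$ is trace class under assumptions \ref{assump:separable}--\ref{assump:bounded}.

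The main obstacle I anticipate is the interchange of sum and integral in the second step, since the convergence $\sum_i \mu_i^\beta e_i(x)^2 < \infty$ is only assumed pointwise rather than uniformly or in $L_2(\pi \otimes \pi)$. I would avoid invoking Fubini directly on an arbitrary $f \in L_2(\pi)$ and instead verify the operator identity on the orthonormal system $([e_j])$ as above, extending by continuity; this sidesteps the need for a stronger integrability assumption on $k_X^\beta$ and is the cleanest route under the given hypotheses.
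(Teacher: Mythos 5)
Your proposal is correct, and it reaches the interpolation identity by exactly the same final step as the paper: once $L_{k_X^{\beta}}=L_{k_X}^{\beta}$ is established, $[\mathcal{H}_X^{\beta}]_\pi^{\alpha}=\operatorname{ran}L_{k_X^{\beta}}^{\alpha/2}=\operatorname{ran}L_{k_X}^{\alpha\beta/2}=[\mathcal{H}_X]_\pi^{\alpha\beta}$. The difference is that the paper does not prove the first two claims at all --- it delegates the RKHS structure of the power space and the identity $L_{k_X}^{\beta}=L_{k_X^{\beta}}$ to \citet[Definition 4.1, Proposition 4.2, and Lemma 4.3]{steinwart2012mercer} --- whereas you reprove them from scratch. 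Your self-contained argument is sound, and your instinct to verify the operator identity only on the ONB $([e_j])$ and extend by continuity is the right one: for $\beta<1$ (which is the case actually invoked later, with $\beta=\theta^{-1}$) one may have $\sum_{i}\mu_i^{\beta}=\infty$, so $k_X^{\beta}(\cdot,\cdot)^{1/2}$ need not lie in $L_2(\pi)$ and a genuine dominated-convergence or Fubini argument on arbitrary $f\in L_2(\pi)$ is unavailable. For the same reason, the phrase ``dominated convergence gives'' in your display is slightly misleading; the clean justification is that for fixed $x$ the coefficients $(\mu_i^{\beta}e_i(x))_i$ are square-summable, so the partial sums of $k_X^{\beta}(x,\cdot)$ converge in $L_2(\pi)$ and the interchange is just continuity of the inner product against $[e_j]$. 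What your route buys is transparency and independence from the cited reference; what the paper's route buys is brevity and the reassurance that the delicate measurability and well-definedness issues for power spaces have been handled once and for all in \citet{steinwart2012mercer}.
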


\begin{proof}
    For the proof that $\mathcal{H}_{X}^{\be}$ is a RKHS with kernel $k_X^{\be}$, see \citep[Definition 4.1 and Proposition 4.2]{steinwart2012mercer} and for the proof that $L_{k_X}^{\beta} = L_{k_X^{\be}}$ see \citep[Lemma 4.3]{steinwart2012mercer}. For the last point recall that interpolation spaces are defined such that $[\mathcal{H}_{X}^{\be}]^{\alpha}_{\pi} = \operatorname{ran} L_{k_X^{\be}}^{\alpha / 2}$, and since $L_{k_X}^{\beta} = L_{k_X^{\be}}$ we have
    $$
    [\mathcal{H}_{X}^{\be}]^{\alpha}_{\pi} = \operatorname{ran} L_{k_X^{\be}}^{\alpha / 2} = L_{k_X}^{(\be \cdot \al)/2} = [\mathcal{H}_{X}]^{\be \cdot \al}_{\pi}.
    $$
\end{proof}

\begin{proof}[Proof of Proposition~\ref{cor:inter_sobolev}] Using Definition~\ref{def:frac_sobolev} combined with Theorem~\ref{th:vec_sobolev} and Theorem~\ref{th:tensor_interpolation}, we have for $ r > 0$ and $m:=\min \{s \in \mathbb{N}: s>r\}$, 
\begin{align*}
W^{r,2}(E_X; \cY) &=\left[L_2(E_X;\cY), W^{m,2}(E_X; \cY)\right]_{r / m, 2} \\ &\cong \left[\cY \otimes L_2(E_X), \cY \otimes W^{m,2}(E_X)\right]_{r / m, 2} \\ &\cong \cY \otimes \left[L_2(E_X), W^{m,2}(E_X)\right]_{r / m, 2} \\ &= \cY \otimes W^{r,2}(E_X).
\end{align*}
Then, by Proposition~\ref{prop:sobolev_beta}, if $k_X$ is a kernel on $E_X$ such that $\cH_X = \bar{W}^{m, 2}(E_X)$ with $m > d/2$, then for all $r  \geq 0$,
$$
[\cG]^{r/m} \simeq \cY \otimes [\cH]_X^{r/m} \simeq \cY \otimes W^{r,2}(E_X) \cong W^{r,2}(E_X; \cY).
$$
\end{proof}

\section{Auxiliary Results}\label{sec:auxiliary}
The following lemma is from Lemma 11 and 13 \cite{fischer2020sobolev}.

\begin{lma}\label{theo:h_bound}
Under \eqref{asst:emb}, we have
$$
\left\|\left(C_{X X}+\lambda Id_{\mathcal{H}_X} \right)^{-\frac{1}{2}} k_X(X, \cdot)\right\|_{\mathcal{H}_X} \leq A\lambda^{-\frac{\alpha}{2}}.
$$
Under \eqref{asst:evd}, there exists a constant $D>0$ such that
$$
\mathcal{N}(\lambda)=\text{Tr}\left(C_{XX}\left(C_{XX}+\lambda Id_{\mathcal{H}_X}\right)^{-1}\right) \leq D \lambda^{-p}.
$$
\end{lma}

The following Theorem is from \citet[Theorem $26$]{fischer2020sobolev}.
\begin{theo}[Bernstein's inequality]\label{theo:ope_con_steinwart}
. Let $(\Omega, \mathcal{B}, P)$ be a probability space, $H$ be a separable Hilbert space, and $\xi: \Omega \rightarrow H$ be a random variable with
$$
\mathbb{E}[\|\xi\|_{H}^{m}] \leq \frac{1}{2} m ! \sigma^{2} L^{m-2}
$$
for all $m \geq 2$. Then, for $\tau \geq 1$ and $n \geq 1$, the following concentration inequality is satisfied
$$
P^{n}\left(\left(\omega_{1}, \ldots, \omega_{n}\right) \in \Omega^{n}:\left\|\frac{1}{n} \sum_{i=1}^{n} \xi\left(\omega_{i}\right)-\mathbb{E}_{P} \xi\right\|_{H}^{2} \geq 32 \frac{\tau^{2}}{n}\left(\sigma^{2}+\frac{L^{2}}{n}\right)\right) \leq 2 e^{-\tau}.
$$
\end{theo}

\begin{defn}[Gaussian Measures on separable Hilbert spaces \citealp{bogachev1998gaussian}] \label{def:inf_gau}
Let $H$ be a separable Hilbert space, $\mathcal{F}_H$ be the Borel $\sigma$-algebra defined on $H$ and let $H^*$, the topological dual space, be identified with $H$ by means of the Riesz representation. A probability measure $\gamma$ on $(H,\mathcal{F}_H)$ is said to be Gaussian if $\gamma \circ f^{-1}$ is a Gaussian measure in $\mathbb{R}$ for every $f \in H^*$.
For the Gaussian measure $\gamma$ and any $f \in H^*$, we define the mean and covariance as 
\begin{IEEEeqnarray*}{rCl}
\mu_{\gamma}(f) &:=& \int_H f(x)\gamma(dx)\\
Q_{\gamma}(f,g) &:=& \int_H \left[f(x) - \mu_{\gamma}(f)\right]\left[g(x) - \mu_{\gamma}(g)\right]\gamma(dx).
\end{IEEEeqnarray*}
By the Riesz representation theorem, for any $f \in H^*$ there is a unique $v_f \in H$ such that $f(x) = \langle x, v_f \rangle_{H}$ for all $x \in H$. It is then straightforward to see that $\mu_{\gamma}(f) = \langle \mu, v_f \rangle_H$ for all $f \in H^*$ where $\mu := \int_H x\gamma(dx)$ and $Q_{\gamma}(f,g) = \langle Qv_f, v_g \rangle_H$ for all $f,g \in H^*$ where $Q := \int_H \left(x - \mu \right) \otimes \left(x - \mu \right) \gamma(dx).$ This justifies the notation $\mathcal{N}_{H}(\mu,Q)$ for the Gaussian measure $\gamma$ and we we write the variance as $\mathbb{V}(\gamma) := Q$.
\end{defn}


\end{document}